\documentclass[twoside,11pt]{article}

\usepackage{blindtext}

% Any additional packages needed should be included after jmlr2e.
% Note that jmlr2e.sty includes epsfig, amssymb, natbib and graphicx,
% and defines many common macros, such as 'proof' and 'example'.
%
% It also sets the bibliographystyle to plainnat; for more information on
% natbib citation styles, see the natbib documentation, a copy of which
% is archived at http://www.jmlr.org/format/natbib.pdf

% Available options for package jmlr2e are:
%
%   - abbrvbib : use abbrvnat for the bibliography style
%   - nohyperref : do not load the hyperref package
%   - preprint : remove JMLR specific information from the template,
%         useful for example for posting to preprint servers.
%
% Example of using the package with custom options:
%
% \usepackage[abbrvbib, preprint]{jmlr2e}

\usepackage{jmlr2e}

% Definitions of handy macros can go here

\usepackage{amsmath, mathtools,subcaption,diagbox,thmtools}
\usepackage[english]{babel}
\usepackage[linesnumbered,ruled,vlined]{algorithm2e}
\SetKwInput{KwInput}{Input} 
\SetKwInput{KwOutput}{Output}   
\usepackage[dvipsnames]{xcolor}
\usepackage{cleveref}

\crefname{proposition}{proposition}{propositions}
\crefname{assumption}{assumption}{assumptions}
\crefname{lemma}{lemma}{lemmas}

\newcommand{\M}{\mathcal{M}}
\newcommand{\R}{\mathbb{R}}

\newcommand{\bs}[1]{\boldsymbol{#1}}

\newcommand*{\horzbar}{\rule[.5ex]{2.5ex}{0.5pt}}

\DeclareMathOperator*{\argmax}{arg\,max}
\DeclareMathOperator*{\argmin}{arg\,min}

% Heading arguments are {volume}{year}{pages}{date submitted}{date published}{paper id}{author-full-names}

\usepackage{lastpage}
\jmlrheading{23}{2025}{1-\pageref{LastPage}}{6/25}{}{21-0000}{Haoyu Chen, Anna Little, and Akil Narayan}

% Short headings should be running head and authors last names

\ShortHeadings{Largest Angle Path Distance For MMC}{Chen, Little, and Narayan}
\firstpageno{1}

\begin{document}

\title{Robust Multi-Manifold Clustering via Simplex Paths}

\author{\name Haoyu Chen \email Haoyu.Chen@utah.edu \\
       \addr Department of Mathematics\\
       University of Utah\\
       Salt Lake City, UT 84112, USA
       \AND
       \name Anna Little \email little@math.utah.edu \\
       \addr Department of Mathematics\\
       University of Utah\\
       Salt Lake City, UT 84112, USA
       \AND
       \name Akil Narayan \email akil@sci.utah.edu \\
       \addr Department of Mathematics\\ Scientific Computing and Imaging (SCI) Institute \\ University of Utah \\ 
       Salt Lake City, UT 84112, USA
}

\editor{My editor}

\maketitle

\begin{abstract}%   
This article introduces a novel, geometric approach for multi-manifold clustering (MMC), i.e. for clustering a collection of potentially intersecting, $d$-dimensional manifolds into the individual manifold components. We first compute a locality graph on $d$-simplices, using the dihedral angle in between adjacent simplices as the graph weights, and then compute infinity path distances in this simplex graph. This procedure gives a metric on simplices which we refer to as the \textit{largest angle path distance} (LAPD). We analyze the properties of LAPD under random sampling, and prove that with an appropriate denoising procedure, this metric separates the manifold components with high probability. We validate the proposed methodology with extensive numerical experiments on both synthetic and real-world data sets. These experiments demonstrate that the method is robust to noise, curvature, and small intersection angle, and generally out-performs other MMC algorithms. In addition, we provide a highly scalable implementation of the proposed algorithm, which leverages approximation schemes for infinity path distance to achieve quasi-linear computational complexity. 
\end{abstract}

\begin{keywords}
  Path-based metric, manifold learning, graph theory, clustering
\end{keywords}

\section{Introduction}

Analyzing high-dimensional data poses significant challenges, especially when the number of dimensions exceeds the sample size. In such settings, traditional statistical methods often fail, and standard algorithms become computationally impractical. However in many real-world applications, data tends to concentrate near low-dimensional structures, which can be exploited to mitigate the curse of dimensionality. In the simplest case, data may cluster along a single low-dimensional subspace or manifold. A more flexible and realistic model assumes that data lies on multiple low-dimensional structures. By identifying and treating each structure individually, one can obtain a sparse and efficient representation of the data, and the first step in this process is to assign data points to their respective structures. This leads to the central focus of this article: the multi-manifold clustering (MMC) problem, in which data points are drawn from a collection of manifolds, and the objective is to recover the manifold labels up to permutation.

A closely related problem to MMC is (linear) subspace clustering, which simplifies the general model by assuming that the underlying manifolds are linear subspaces. This problem has been widely studied, with notable methods including \cite{elhamifar_sparse_2013, liu_robust_2012, vidal_principal_2016, bradley_k-plane_2000, vidal_subspace_2011, ho_clustering_2003, tipping_mixtures_1999, costeira_multibody_1998, wright_robust_2009, zhang_survey_2015}, among others. Many of the most successful approaches rely on the self-expressive property, in which each data point is represented as a linear combination of other data points. Ideally, points from the same subspace have large coefficients in this representation, while points from different subspaces have small coefficients. The resulting coefficient matrix is then transformed into an affinity matrix, and standard clustering algorithms such as $K$-means or spectral clustering are then applied to obtain the final segmentation. Subspace clustering methods differ in their computational efficiency and in how they enforce self-representation. Broadly, they fall into three main categories:
(i) sparse or low-rank representation methods, such as SSC \cite{elhamifar_sparse_2013} and LRR \cite{liu_robust_2012}, which use convex optimization to enforce self-representation;
(ii) function or polynomial fitting methods, which cast the problem as learning a parametric model in which algebraic constraints can be used to segment the data \citep{Li2022, vidal_principal_2016};
(iii) deep learning-based methods, which employ autoencoders, self-expressive layers, and adversarial training to learn feature representations that reveal subspace structures \citep{Ji2017DeepSC, pengDSC2020, pan2018}.

However, the assumption that data lies on a collection of linear subspaces is often too restrictive for real-world datasets, which frequently exhibit nonlinear structures that are better modeled as multiple manifolds. Such a multi-manifold modeling assumption has been proven useful 
%in many applications such as imaging \cite{hastie_metrics_1997, basri_lambertian_2003} and video segmentation \cite{tomasi_shape_1992}. It is useful 
for motion segmentation and image classification \citep{souvenir_manifold_2005,ye_multimanifold_2019,hastie_metrics_1997, basri_lambertian_2003, tomasi_shape_1992}, for dynamic pattern identification and optical flow \citep{gong_robust_2012,wang_multimanifold_2015}, and for classification of text on webpages and in news stories \citep{zong_multiview_2017, babaeian_multiple_2015}. 
Popular manifold clustering methods \citep{arias-castro_spectral_2017, souvenir_manifold_2005, elhamifar_sparse_2013, babaeian_multiple_2015, wang_multimanifold_2015, Tenenbaum2000AGG, Belkin2003} thus extend the existing algorithms for subspace clustering to handle nonlinear data structures. The recent survey, \cite{abdolali_beyond_2021}, taxonomizes manifold clustering methods into three categories: (i) locality preserving, (ii) kernel based, and (iii) neural network based. Locality preserving methods \citep{arias-castro_spectral_2017, elhamifar_sparse_2013, babaeian_multiple_2015, Tenenbaum2000AGG, Belkin2003,trillos_large_2021} focus on maintaining the local geometric structure of the manifolds when mapping to a lower-dimensional space, ensuring that the new representation reflects the true underlying structure. Kernel based methods leverage kernel functions to map data into a higher-dimensional space where linear subspace methods can be applied \citep{Patel2014}.  
Neural network-based approaches to manifold clustering \citep{wu2022generalized, wu2022deep, McConville2021, Li2022NeuralMC} leverage deep learning models to learn meaningful data embeddings that facilitate clustering. A central technique in this category is autoencoder-based clustering, where an autoencoder compresses the data into a lower-dimensional latent space, and clustering is then performed on this learned representation.
Our approach is a locality preserving method which leverages local angle information to define an appropriate distance/affinity.

Naturally, the major challenge in the MMC problem is that the underlying manifolds may intersect (otherwise density-based clustering approaches would be sufficient). Previous locality preserving methods have utilized local tangent plane approximation as well as curvature and angle constraints to mitigate this challenge \citep{arias-castro_spectral_2017, babaeian_multiple_2015, trillos_large_2021, wang_multimanifold_2015}. Our approach avoids explicitly estimating such geometric structures by letting the algorithm itself explore the local information, requiring only knowledge of the intrinsic data dimension $d$. In particular, we compute a graph of local $d$-dimensional simplices, weighted according to the dihedral angle in between simplices, and then compute an infinity path distance inside the resulting weighted graph. We note that the computation of infinity path distances is well studied in the literature, where it is encountered under many names, including the maximum capacity path problem, the bottleneck edge query problem, the widest path problem, and the \textit{longest leg path distance} \citep{little2020path, hu1961maximum, camerini1978min, gabow1988algorithms}. Note \cite{little2020path} analyzed paths of edges weighted by Euclidean distance, instead of simplex paths weighted by angle, but we use the same approximation scheme as \cite{little2020path} to estimate infinity path distances in a scalable manner. Empirical results indicate that clustering based on this angle based metric can recover the manifold components with high accuracy. In addition, our method also has the following advantages over the state-of-the-art approaches (i) it can reliably learn the number of manifolds, which is often a required input by other MMC algorithms; (ii) it can be implemented in computational complexity that is quasi-linear with respect to the sample size, while most other methods are at least quadratic; and (iii) it is more robust to noise and curvature than competing MMC methods. An overview of the proposed method (without analysis) can be found in the conference paper \cite{haoyu_lapd_2023}. This article analyzes the proposed metric under random sampling, derives theoretical guarantees for solving MMC, and includes much more extensive numerical experiments (with analysis of real data, not done in \cite{haoyu_lapd_2023}).

The remainder of the paper is organized as follows. Section \ref{sec:problem setup} introduces the problem set-up and assumptions and provides an overview of our proposed approach. Section \ref{sec:methodology} gives a more formal presentation of the methodology including the construction of the LAPD metric from a simplex graph and a novel denoising procedure which enhances manifold separability. 
Section \ref{sec:main results} gives theoretical guarantees that the method can solve MMC with high probability, quantifying the impact of noise and intersection angle (quantifying impact of curvature is left to future work). Section \ref{sec:Experiments} discusses the computational complexity of the algorithm and presents extensive experiments on both synthetic and real data investigating the accuracy, robustness, and scalability of the proposed method. Finally, Section \ref{sec:Conclusion} summarizes the article and suggests directions for future research; additional details and technical proofs are provided in the appendices. 

\section{Problem Setup and Method Overview}
\label{sec:problem setup}
 
Let $\M \coloneqq \{\M_j\}_{j \in [m]}$ denote a collection of $m$ compact, Riemannian manifolds embedded in $\R^D$, where each $\M_j$ has intrinsic dimension $d \ll D$. We denote by $T_\tau(\M_j) \coloneqq \{ x \in \mathbb{R}^{D}: \exists z \in \M_j \text{ such that } \|z-x\| \leq \tau \}$ a tube of radius $\tau$ around manifold $\M_j$, and $T_{\tau}(\M) \coloneqq \cup_{j \in [m]} T_\tau (\M_j)$ the union of these tubes. We think of $T_{\tau}(\M)$ as a collection of noisy manifolds (with noise level $\tau$), and we assume access to a dataset $X = \{x_i\}_{i \in [n]}$ consisting of i.i.d. samples from $T_{\tau}(\M)$. Let $z_i = \argmin_{y\in\M} \|y-x_i \|$ denote the closest point on $\M$ to data point $x_i$, i.e. $z_i$ is the noiseless counterpart of $x_i$. We let $\{\ell_i\}_{i\in [n]}$ denote the underlying manifold labels, i.e. $\ell_i=j$ if the clean sample $z_i \in \M_j$, and our goal is to recover the labels $\ell_i$ up to permutation from the noisy data $X$. We are primarily interested in the case when manifolds intersect, and we let $\Theta_{ij} \in (0, \frac{\pi}{2}]$ denote the intersection angle between two intersecting manifolds $\M_i$ and $\M_j$. When the context is clear, we shall omit the manifold index subscript and simply denote the intersection angle between two manifolds as $\Theta$. To simplify the analysis, we make the following assumptions.

\begin{assumption}[Intersection dimension]
\label{assump:intersection_dim}
    If $\M_i \cap \M_j \ne \emptyset$, then $\M_i \cap \M_j$ has dimension $d-1$.
\end{assumption}

\begin{assumption}[Uniform sampling]
\label{assump:uniform_samp}
    The samples $\{x_i\}_{i \in [n]}$ are iid draws from the uniform probability measure $P$ on $T_\tau(\M)$, i.e. 
    $dP = \frac{dV}{V}$ for $dV$ the natural volume measure on $T_\tau(\M)$ and $V=\text{vol}\left(T_\tau(\M)\right)$. 
\end{assumption}

We remark that Assumptions \ref{assump:intersection_dim} and \ref{assump:uniform_samp} are for convenience, but not fundamentally required by our method. For example, all our theoretical results still hold when sampling from a nonuniform probability measure on $T_\tau(\M)$; as long as the corresponding density $\rho$ satisfies $0<\rho_{\min} \leq \rho \leq \rho_{\max}$, only constants will be affected. Furthermore, Assumption \ref{assump:intersection_dim} can be thought of as a ``worst case" scenario, as we expect success in this regime will guarantee success for intersections with dimension smaller than $d-1$.

Next we summarize our approach to solving this problem, postponing many technical details to Section \ref{sec:methodology}.
We start by identifying a locality graph $\mathcal{G}_X = (X, E_X)$ on the nodes $X$, where edges are defined through an annular nearest neighbor computation. The inner radius of the annulus is chosen to lie above the noise level, and the outer radius of the annulus is chosen to maintain a feasible computational complexity. From this node-level graph, we construct a set of $d$-simplices $S$, where a simplex $\Delta \in S$ if, among the $d+1$ vertices of $\Delta$, there is one vertex that is $\mathcal{G}_X$-connected to the remaining $d$ vertices. We remove from $S$ simplices of bad ``quality'', i.e., those that are very elongated. 

From the set of $d$-simplices, we say that $\Delta_i, \Delta_j \in S$ are \textit{adjacent} if they share a common face, i.e., if the number of shared vertices is $d$, $|\Delta_i \cap \Delta_j| = d$. When two such simplices are adjacent, we can compute a dihedral angle between them, $\theta_{ij} \coloneqq \theta(\Delta_i, \Delta_j) \in (0,\pi]$. The set of all such intersections and angles from adjacent pairs in $S$ identifies a \textit{simplex-level} graph $\mathcal{G}_S = (S, E_S, W_S)$, where $E_S$ contains pairs of simplices that are adjacent, and the corresponding weights $W_S$ are defined by their intersection angle $\theta$, i.e. $W_S(\Delta_i, \Delta_j) = \pi-\theta_{ij}$. Intuitively, $W_S$ encodes the angle dissimilarity between adjacent simplices, since when the simplices form a flat connection, $\theta_{ij}\approx \pi$ and $W_S(\Delta_i, \Delta_j)\approx 0$. From the simplex graph $\mathcal{G}_S = (S, E_S, W_S)$ created with angle-based weights, we derive an angle-based metric between all simplices in $S$ by computing the infinity shortest path distance in $\mathcal{G}_S$; we refer to this metric as the \textit{largest angle path distance (LAPD)}. After incorporating an appropriate de-noising procedure, we perform distance-based clustering with LAPD to recover a partition of $S$ into $m$ clusters (generally the parameter $m$ can also be learned during this step). Finally, the partition of the simplex set $S$ can be extended to a partition of the node set $X$ by a majority vote procedure (i.e. to cluster data point $x_i$, we assign the most common label in the set of simplices containing $x_i$). 

\begin{figure}[tb]
\centering 
\begin{subfigure}{0.4\textwidth}
\includegraphics[width=\linewidth]{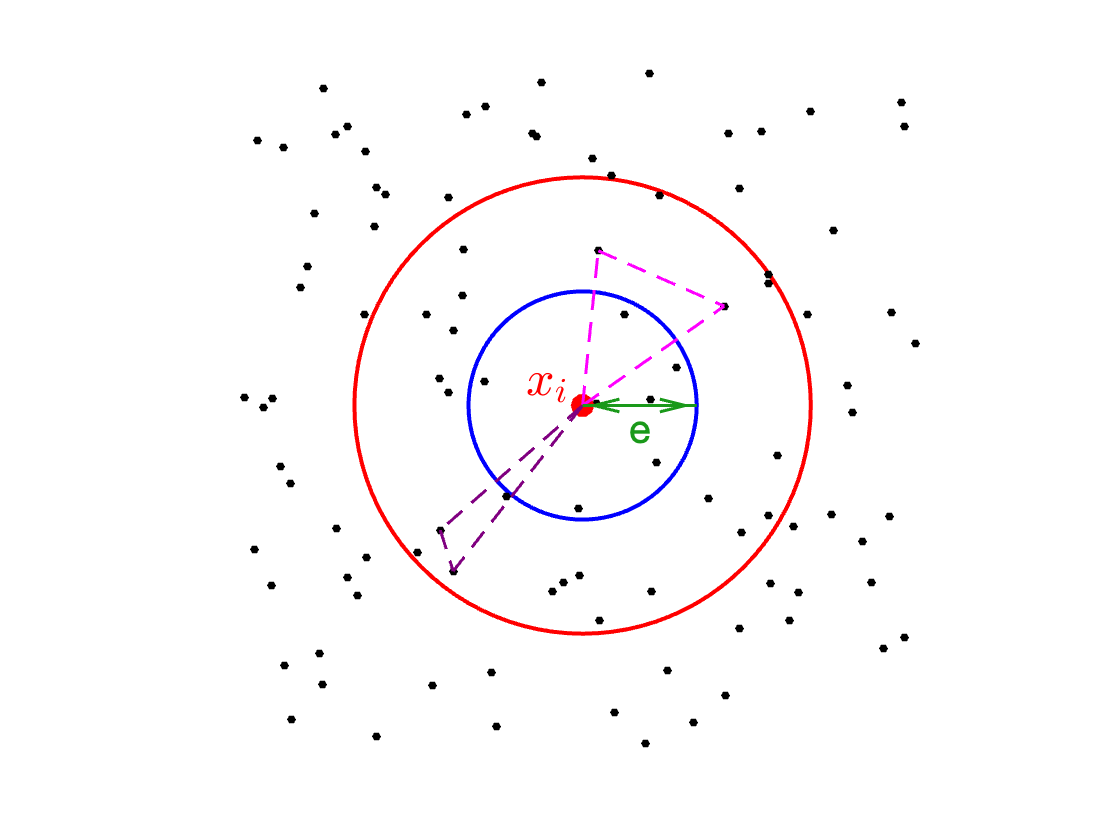}
\caption{Simplex construction for $d=2$.} 
\label{subfig:simplex construction}
\end{subfigure}
\begin{subfigure}{0.4\textwidth}
\includegraphics[width=\linewidth]{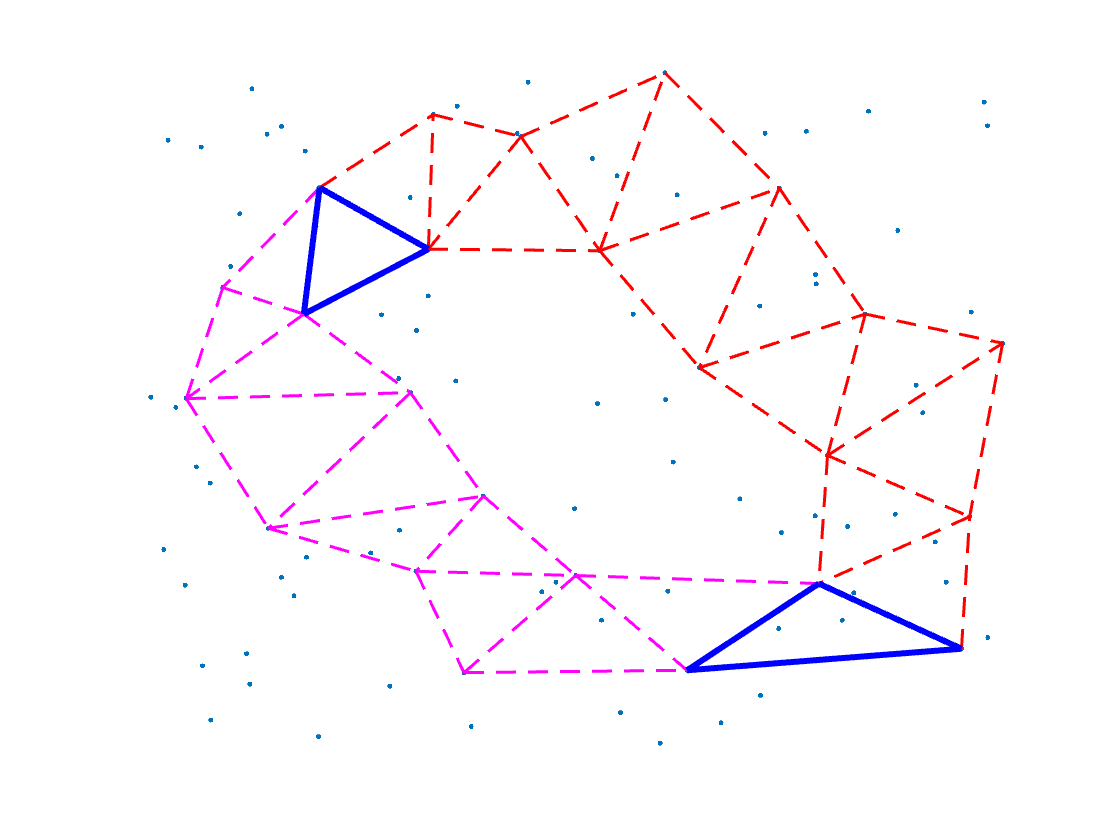}
\caption{Two sample paths for $d=2$.} 
\label{subfig:sample paths}
\end{subfigure} \\ %\hspace*{\fill} 
\begin{subfigure}{0.4\textwidth}
\includegraphics[width=\linewidth]{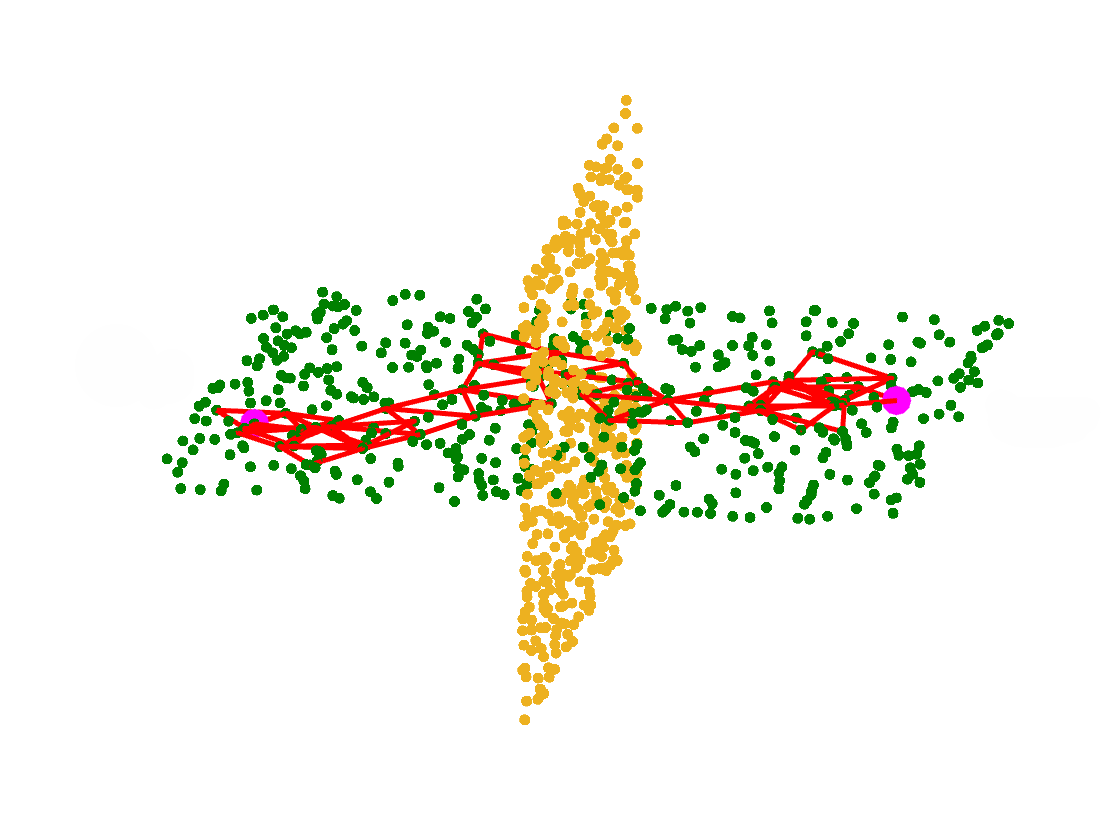}
\caption{Path within the same manifold.} 
\label{subfig:within path}
\end{subfigure}%\hspace*{\fill}
\begin{subfigure}{0.4\textwidth}
\includegraphics[width=\linewidth]{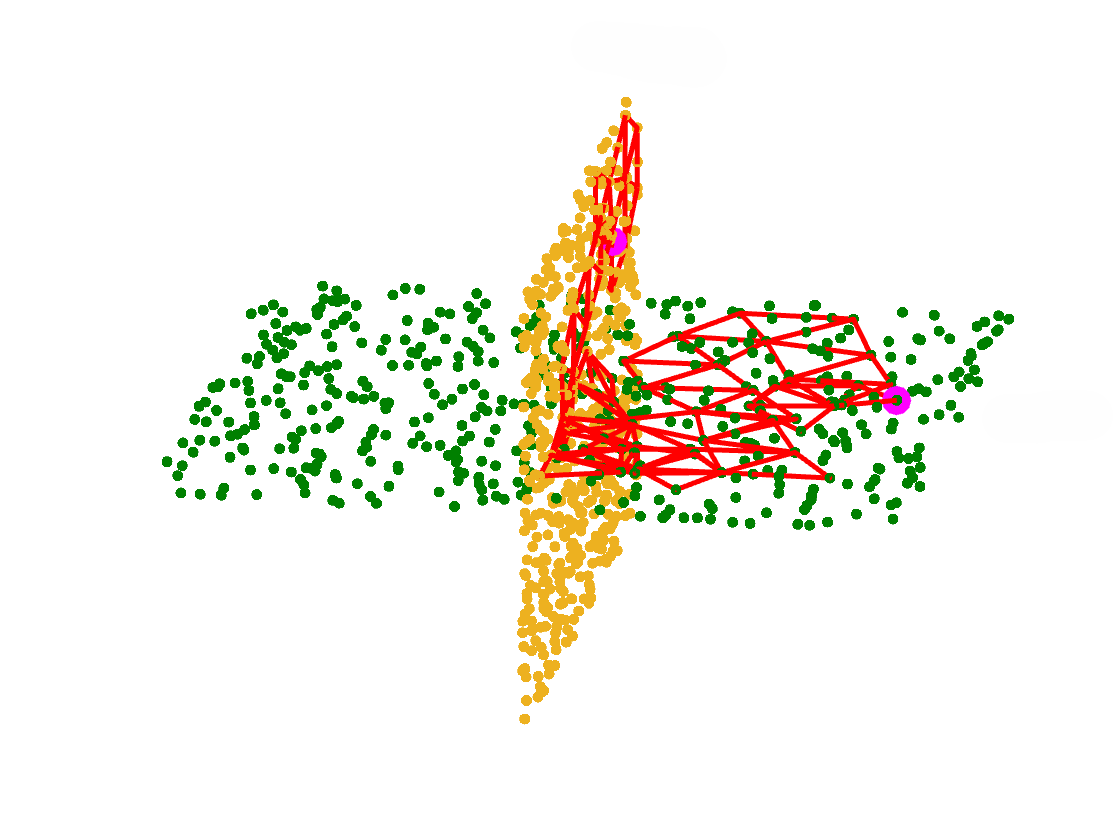}
\caption{Path connecting different manifolds.} 
\label{subfig:between path}
\end{subfigure}
\caption{Construction of simplices and simplex paths connected by adjacent simplices over a pair of linear, 2-dimensional intersecting manifolds.}
\label{fig:method_illustration}
\end{figure} 

Figure \ref{fig:method_illustration} illustrates the key ideas of our approach. Figure \ref{subfig:simplex construction} illustrates the construction of the simplex set $S$: for each data point $x_i$, we define local simplices by the various subsets formed by combining $x_i$ with $d$ of its annular neighbors (when $d=2$, one obtains triangles as shown). We then consider paths of connected simplices as shown in Figure \ref{subfig:sample paths}: two simplices are generally connected by a large number of different paths, but the best path according to the LAPD metric is the flattest path, i.e. the path which avoids making any sharp turns, since according to LAPD, the length of a path is the single worst angle encountered along the path. Figures \ref{subfig:within path} and \ref{subfig:between path} illustrate why the LAPD metric is natural for solving MMC: Figure \ref{subfig:within path} illustrates that any two simplices from the same manifold can always be connected with a flat path, while Figure \ref{subfig:between path} shows that every path of simplices between manifolds must always make a sharp turn somewhere along the path. Our theoretical results establish that under reasonable assumptions, the LAPD between simplices on the same manifold is always small, while the LAPD between simplices on different manifolds is always large, so that the metric can successfully differentiate the manifold components. Informally, the assumptions needed to guarantee success are (1) the noise level $\tau$ is not too large, (2) the curvature of the manifolds is not too large, and (3) the intersection angles of the manifolds are not too small; these conditions guarantee that simplices can be constructed at the right scale for solving MMC, i.e. one needs simplices to be large relative to the noise but small relative to the manifold curvature.

\section{Methodology}
\label{sec:methodology}

We now describe the details of our proposed algorithm. Section \ref{ssec:method-simplices} describes the construction of the original locality graph and the derived set of local simplices $S$. Section \ref{ssec:method-lapd} describes the creation of the weighted graph $\mathcal{G}_S$ on $S$ with angle-based weights and the computation of the LAPD shortest path distances in $\mathcal{G}_S$. Section \ref{ssec:method-clustering} discusses a procedure for de-noising the LAPD metric and distance-based clustering with LAPD.

\subsection{Construction of Simplices}\label{ssec:method-simplices}

We first fix a minimal edge length $e$, which will encode the scale at which the metric is constructed. For each data point $x_i$, we remove any neighbors within distance $e$ and then connect $x_i$ to its $B$ nearest neighbors among the remaining points. We let $\mathcal{G}_X(e,B)=(X,E_X)$ denote the corresponding unweighted annular locality graph, which for brevity we denote simply as $\mathcal{G}_X$. We then form an initial set of candidate simplices $\mathcal{S}$ by taking subsets of $d+1$ points where at least one point is $\mathcal{G}_X$ connected to all remaining points, i.e.:
\begin{align}
\label{equ:cand_simplices}
    \mathcal{S} &= \{\Delta : \Delta \subseteq X, |\Delta|=d+1, \exists x\in \Delta \text{ such that } E_X(x,y)>0\, \forall\, y\in \Delta\setminus x \} \, .
\end{align}
As our method is more robust when we use regularly shaped simplices, we define the following measures of simplex distortion:
\begin{align*}
    \text{Distortion}_{1}(\Delta=\{x_1,\ldots,x_{d+1}\}) &= \frac{ \min_{i\ne j} \| x_i-x_j\|}{\max_{i\ne j}\|x_i-x_j\|} \, , \\
    \text{Distortion}_{2}(\Delta=\{x_1,\ldots,x_{d+1}\}) &= \frac{\text{vol}_d(\Delta)}{V_0 \min_{i\ne j} \| x_i-x_j\|^d} \, ,
\end{align*}
where $\text{vol}_d(\Delta)$ is the $d$-volume of simplex $\Delta$ and $V_0 \coloneqq \frac{d+1}{d! \sqrt{2^d}}$ is the $d$-volume of a regular simplex with unity edge length. Note Distortion$_1$ measures irregularity of edge lengths while Distortion$_2$ measures irregularity of volume, i.e. the deviation of the $d$-volume of any $\Delta$ from that of a regular simplex.
We then let $0<q<1$, $0<r_0<1$ be parameters controlling Distortion$_1$,  Distortion$_2$, and define the set of \textit{valid simplices} $S$ by
\begin{align*}
%\label{equ:filtering}
    S &= \{ \Delta \in \mathcal{S} : \text{Distortion}_1(\Delta) \geq q , \text{Distortion}_2(\Delta) \geq r_0 \} \, .
\end{align*}
By construction, the set of valid simplices $S$ satisfies the following assumption:
\begin{assumption}{(Valid simplices: size and distortion)} 
\label{assump:size-quality}
  Let $S$ be a set of simplices constructed from samples of $T_{\tau}(\M)$. We assume $S$ satisfies size/distortion constraint encoded by $(e,q,r_0)$, i.e. % simplices have edge lengths no smaller than $e$ and distortion no smaller than the quality $q$. I.e,. 
  there exist constants $e > 0$, $q \in (0,1)$, and $r_0 \in (0,1)$ such that every simplex $\Delta=\{x_1,\ldots,x_{d+1}\} \in S$ satisfies
\begin{align}
    e &\leq \| x_i - x_j \| \leq \frac{e}{q} \;\; \hspace{5pt} \text{for all } i,j\in[d+1],\, i<j \, , \label{eq:simplex quality constraint}  \\
    r_0 V_0 e^d &\leq \mathrm{vol}_d(\Delta) \leq V_0 \left(\frac{e}{q}\right)^d \, . \label{eq:vol-constraint}
\end{align}
\end{assumption}
Thus four key parameters determine the set of valid simplices: $e$, $q$, $r_0$, and $B$; $e$ constrains the size, while $(q,r_0)$ constrains the shape of allowable simplices. Reasonable and fixed values of $q$ in \eqref{eq:simplex quality constraint}, say $q \approx 1/2$, are not sufficient to guarantee \eqref{eq:vol-constraint} uniformly in $d$, so we introduce $r_0$ explicitly to enforce this constraint. Since the cardinality of $S$ is $O(nB^d)$, $B$ constrains the scalability of the proposed algorithm. Figure \ref{subfig:simplex construction} illustrates the construction of $S$: $e$ is the radius of the inner circle, $B$ is the number of points in the annulus; the elongated purple simplex is a candidate simplex, but clearly does not survive filtering if for example $q > \frac{1}{2}$. Theorems in Section \ref{sec:main results} provide some heuristics for tuning these parameters. Furthermore, our theoretical results will require that the noise level is bounded by the allowable edge length:  
{\begin{assumption}{(Small noise)} 
\label{assump:small noise}
The noise level $\tau$ is small compared to the size of valid simplices, i.e. $\tau \leq \frac{e}{q}$. 
\end{assumption}

\subsection{Weighted Graph $\mathcal{G}_{S}$ and LAPD}\label{ssec:method-lapd}

Having established a set of valid simplices $S$, we can proceed to define the weighted graph $\mathcal{G}_S=(S,E_S,W_S)$ by connecting adjacent simplices in $S$ with angle-based weights. Two simplices $\Delta_i$ and $\Delta_j$ are adjacent if they share a common face, i.e. $|\Delta_i \cap \Delta_j|=d$. For every pair of adjacent simplices $\Delta_i, \Delta_j$, we can compute the dihedral angle $\theta_{ij} \coloneqq \theta(\Delta_i, \Delta_j) \in (0,\pi]$ between them as follows. We let $x_i, x_j$ denote the apex points in $\Delta_i, \Delta_j$ (i.e. the points which are not shared), and let $v_i$ be the unique vector connecting $\Delta_i \cap \Delta_j$ and $x_i$ which is normal to $\Delta_i \cap \Delta_j$, and similarly for $v_j$. Then  $\theta_{ij}$ is simply the angle in between $v_i, v_j$. More specifically, we fix any basepoint $c\in \Delta_i \cap \Delta_j$ (for example, the centroid) and define:  
\begin{align*}
    v_i =& (x_i - c) - P_{\Delta_i \cap \Delta_j}(x_i-c) \\
    v_j =& (x_j - c) - P_{\Delta_i \cap \Delta_j}(x_j-c) \\
    \theta_{ij} &= \arccos\left( \frac{\langle v_i, v_j \rangle}{\|v_i\| \cdot \|v_j\|} \right) \, ,
\end{align*}
where $P_{\Delta_i \cap \Delta_j}$ denotes projection onto the span of $\{x_s - c : x_s \in \Delta_i \cap \Delta_j \}$, which is $d-1$ dimensional by Assumption \ref{assump:intersection_dim}. That is, $P_{\Delta_i \cap \Delta_j} = Q Q^\intercal$ with $Q$ being the orthonormal basis of the shared face $\Delta_i \cap \Delta_j$.

When $\theta_{ij}\geq \frac{\pi}{2}$, we connect $\Delta_i$ and $\Delta_j$ in the graph $\mathcal{G}_S$ and define the edge weight between them by 
\begin{equation}
\label{eq:simplex graph weight}
    W_S(\Delta_i,\Delta_j) = \pi -\theta_{ij} \, .
\end{equation}
Note $W_S$ encodes an angle-based dissimilarity measure on $S$.\footnote{For numerical implementation, in our code we define $W_S(\Delta_i,\Delta_j) = \max\{\pi -\theta_{ij},\delta\}$ for a small numerical tolerance parameter $\delta=10^{-8}$; this ensures $W_S(\Delta_i,\Delta_j)>0$ if and only if $\Delta_i, \Delta_j$ are connected in $\mathcal{G}_S$.}

When $\Delta_i, \Delta_j$ form a flat connection,  $\theta_{ij}\approx \pi$, and $W_S$ is small; if however the simplices connect at a sharp angle, $W_S$ will be large. We note that some of our experiments utilize a two-sided version of the above weight construction defined in \eqref{eq:two way weight}. Our core Definition \ref{def:LAPD} of LAPD defines a metric on $S$ by computing the infinity-shortest path distance in $\mathcal{G}_S$, i.e. it uses the local angle dissimilarities encoded in $W_S$ to define a global metric on all of $S$.   

\begin{definition}[LAPD]
\label{def:LAPD}
Let $\mathcal{P}=\mathcal{P}(\Delta_i, \Delta_j)$ be the set of all simplex paths connecting $\Delta_i, \Delta_j$ in $\mathcal{G}_S$, i.e. any $\{\Gamma_1, \ldots, \Gamma_L\} \in \mathcal{P}$ satisfies $\Gamma_1 = \Delta_i, \Gamma_L = \Delta_j$, and $E_S(\Gamma_t,\Gamma_{t+1})$ exists for all $1\leq t\leq L-1$. Then the largest angle path-distance (LAPD) between $\Delta_i, \Delta_j$ is:
{\small
\begin{align}
\label{equ:LAPD}
    \text{LAPD}(\Delta_i, \Delta_j) &:= \min_{\{\Gamma_1, \ldots, \Gamma_L\} \in \mathcal{P}} \, \max_{1\leq t\leq L-1} W_S(\Gamma_t,\Gamma_{t+1}) \, .
\end{align}
    }
\end{definition}

We remark that LAPD looks for the single \textit{shortest} path connecting $\Delta_i, \Delta_j$ in $\mathcal{G}_S$ (thus the minimum over all paths $\mathcal{P}$), where the length of the path is determined by the single largest angle encountered along the path (thus the maximum over the path segments). Thus LAPD is small if there exists a flat path connecting two simplices (this should be the case if $\Delta_i, \Delta_j$ originate on the same manifold; see Figure \ref{subfig:within path}), while the LAPD is large if every path connecting $\Delta_i, \Delta_j$ takes a sharp turn along the way (this should be the case if $\Delta_i, \Delta_j$ originate on different manifolds; see Figure \ref{subfig:between path}).

To quantify this observation, we define the maximal within manifold LAPD in Definition \ref{def:Within LAPD} and the minimal between manifolds LAPD in Definition \ref{def:Between LAPD}. We let $S(\M_j)$ denote the elements $\Delta=\{x_1, \ldots, x_{d+1}\}$ of $S$ for which $z_i\in \M_j$ for all $1\leq i\leq d+1$, i.e. the underlying clean samples $z_i$ all lie on the same manifold $\M_j$; we refer to such simplices as $\textit{pure}$ simplices. Those simplices that are not pure, $S \setminus \cup_{j \in [m]} S(\M_j)$, are referred to as \textit{mixed} simplices.

\begin{definition}[Within LAPD]
\label{def:Within LAPD}
Let $S$ be the collection of valid simplices constructed from samples of $T_\tau(\M)$. Then the maximal within manifold LAPD is:
{\small
\begin{align}
\label{equ:Within LAPD}
    \text{wLAPD} &:= \max_{1\leq j \leq m} \, \max_{\Delta_1,\Delta_2 \in S(\M_j)} \text{LAPD}(\Delta_1, \Delta_2).
\end{align}
}
\end{definition}

\begin{definition}[Between LAPD]
\label{def:Between LAPD}
Let $S$ be the collection of valid simplices constructed from samples of $T_\tau(\M)$. Then the minimal between manifold LAPD is:
{\small
\begin{align}
\label{equ:between LAPD}
    \text{bLAPD} &:= \min_{i \ne j}\, \min_{\Delta_1 \in S(\M_i),\, \Delta_2 \in S(\M_j)} \text{LAPD}(\Delta_1, \Delta_2).
\end{align}
}
\end{definition}

Solving MMC using the LAPD metric becomes tenable precisely when there is a significant gap between the maximal within and minimal between LAPD, i.e. when wLAPD $\ll$ bLAPD. In Section \ref{sec:main results} we will derive high probability bounds controlling wLAPD and bLAPD (see Theorems \ref{thm:within LAPD upper bound}, \ref{thm:between LAPD lower bound}, \ref{thm:between LAPD after denoising}), which lead to conditions guaranteeing a good gap between wLAPD and bLAPD (see Theorem \ref{thm:LAPD gap}, which is the main theoretical contribution of this article). 

Figure \ref{fig:sLAPD} illustrates the nature of the LAPD metric for two linear, 2-dimensional noiseless manifolds intersecting at an angle of $\frac{\pi}{2}$. The color corresponds to the LAPD from the base simplex (shown in pink on the horizontal manifold in (a) and (c)) to all of the other simplices. As expected, the distance to all other simplices on the same manifold is small (essentially zero), while the distance to pure simplices on the vertical manifold is large in comparison (approximately 0.6) in Figure \ref{subfig:sLAPD1}. The picture is much less clear for mixed simplices formed from points close to the intersection, which have a highly variable LAPD to the base simplex. In Section \ref{ssec:method-clustering} we will introduce a denoising procedure which significantly increases the LAPD gap by removing most of these mixed simplices. The effectiveness of the denoising procedure can be verified in Figure \ref{subfig:sLAPD3}. 

\begin{figure}[tb]
\centering
\begin{subfigure}{0.32\textwidth}
\includegraphics[width=\linewidth]{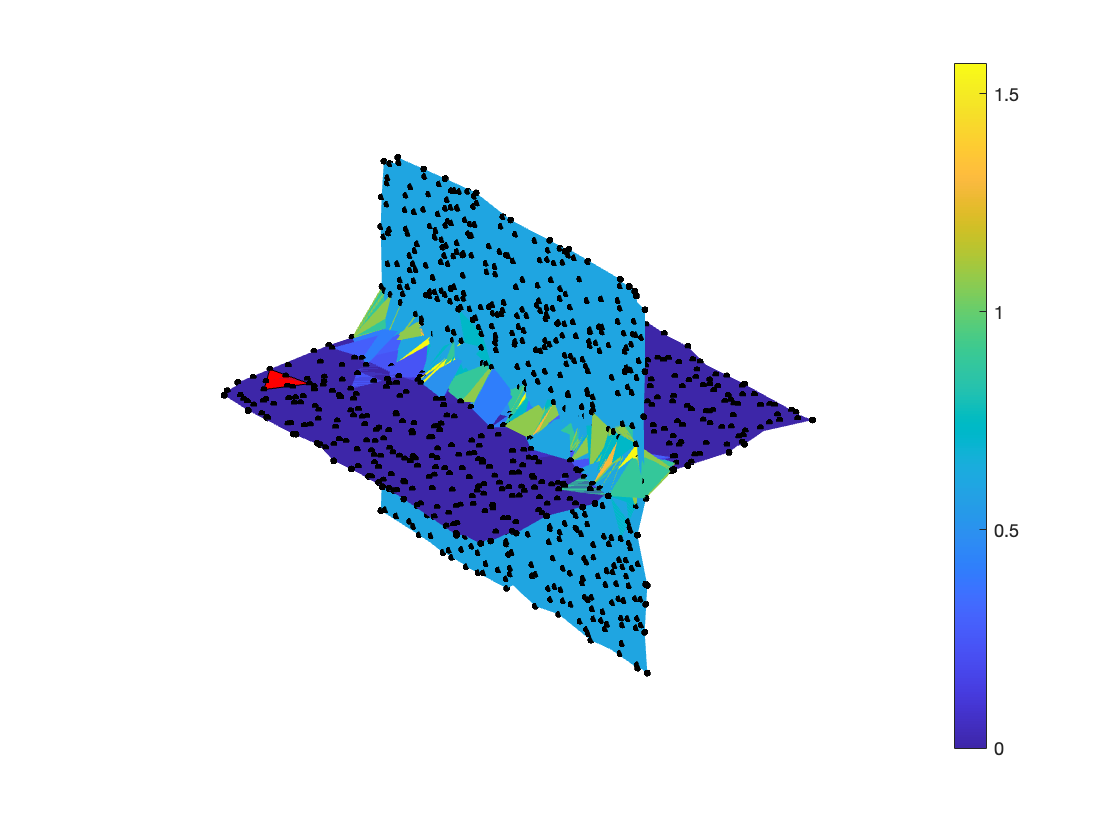}
\caption{} \label{subfig:sLAPD1}
\end{subfigure}\hspace*{\fill}
\begin{subfigure}{0.32\textwidth}
\includegraphics[width=\linewidth]{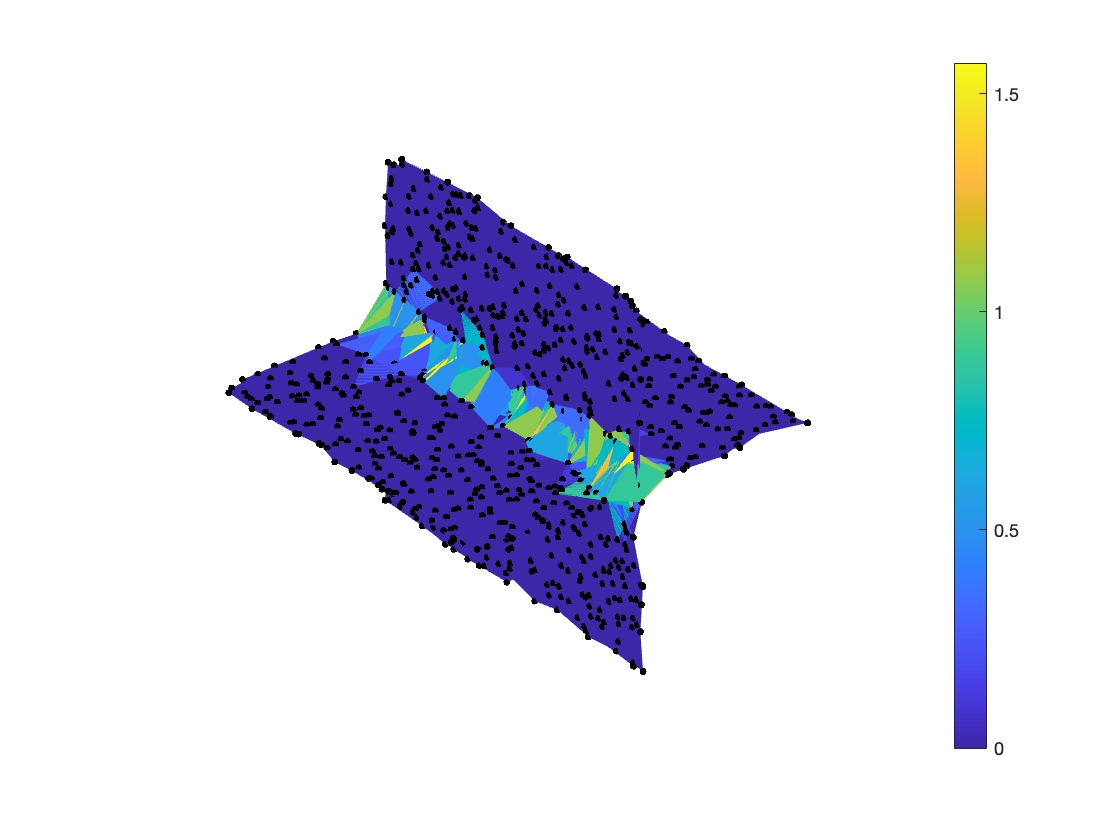}
\caption{} \label{subfig:sLAPD2}
\end{subfigure}
\begin{subfigure}{0.32\textwidth}
\includegraphics[width=\linewidth]{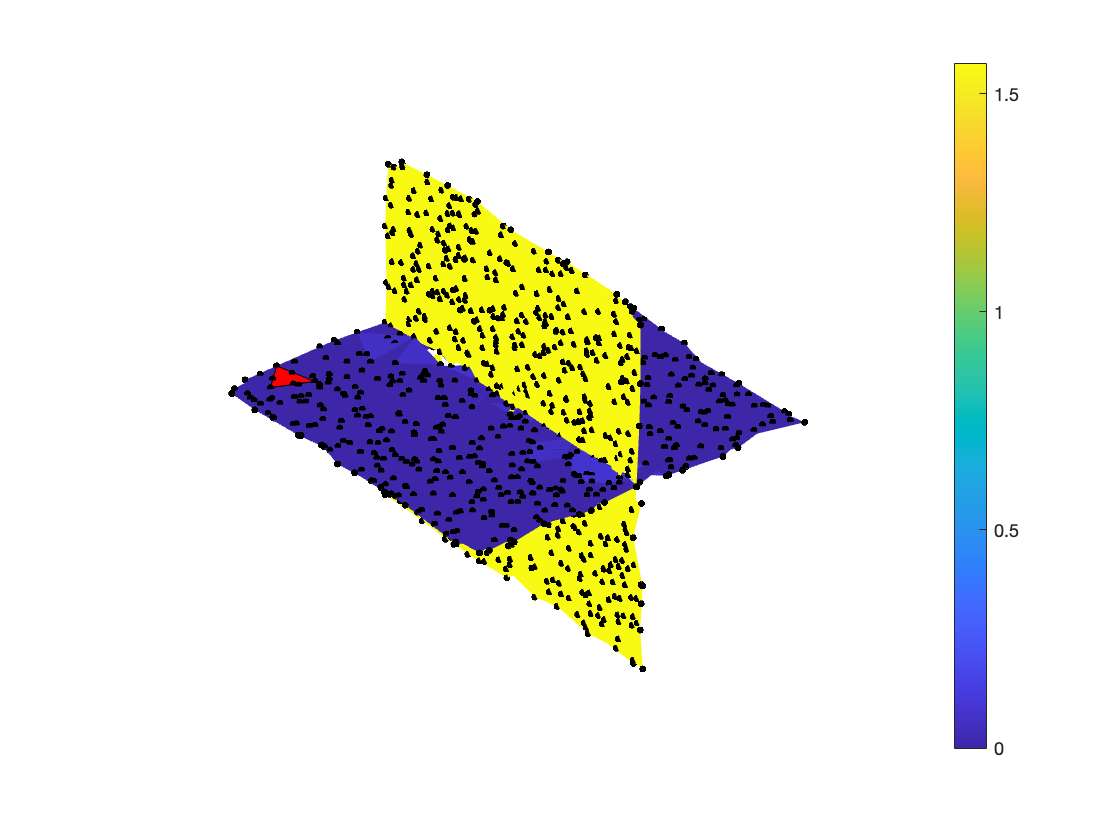}
\caption{} \label{subfig:sLAPD3}
\end{subfigure}
\caption{(a) LAPD (indicated by colors) to the target simplex (pink) before de-noising; (b) LAPD of each simplex to its $70^{\text{th}}$ nearest neighbor; (c) LAPD (indicated by colors) to the target simplex (pink) after de-noising. Black dots represent the data points. } \label{fig:sLAPD}
\end{figure}

\subsection{Denoising and Clustering}\label{ssec:method-clustering}

As illustrated in Figure \ref{subfig:sLAPD2}, mixed simplices around the intersection decrease the bLAPD and are difficult to cluster, as they can be equidistant from both manifolds. Fortunately, it is generally straightforward to identify these simplices, as they tend to have significantly larger nearest neighbor LAPD distances than the pure simplices. We thus propose the following denoising procedure: we fix a number of nearest neighbors $\kappa$ and a threshold $\eta$, and we discard from $S$ all simplices whose LAPD to their $\kappa^{\text{th}}$ nearest neighbor exceeds $\eta$. This procedure is described mathematically in the following definition. 

\begin{definition}[Denoised Simplices]
\label{def: Denoised Simplices}
Given $\kappa$ and $\eta$, we compute the LAPD between a simplex $\Delta \in S$ to its $\kappa$-th nearest neighbor as 
\begin{align}\label{eq:kappa}
    \kappa\text{NN}(\Delta) &= \min_{N\subseteq S, |N|=\kappa} \max_{\square \in N} \text{LAPD}(\Delta, \square) \, 
\end{align}
and define the set of denoised simplices as 
\begin{align}\label{eq:eta}
    S_{\text{dns}} &= \{ \Delta \in S : \kappa\text{NN}(\Delta) \leq \eta \} \, .
\end{align}
\end{definition}

After this denoising procedure, we repeat the LAPD construction outlined in Section \ref{ssec:method-lapd}, but simply replace $S$ with $S_{\text{dns}}$, i.e. LAPD is computed in $\mathcal{G}_{S_{\text{dns}}}$. Figures \ref{subfig:sLAPD2} and \ref{subfig:sLAPD3} illustrate the proposed denoising procedure. Figure \ref{subfig:sLAPD2} shows the LAPD of each simplex to its $\kappa=70^{\text{th}}$ nearest neighbor; this $\kappa\text{NN}$ distance is small for the pure simplices, but large for the mixed simplices around the intersection, and thus can be used to identify and remove these problematic simplices. Figure \ref{subfig:sLAPD3} shows that after denoising, the recomputed LAPD from the base simplex to the vertical manifold is now approximately 1.5, on the same order as the intersection angle $\frac{\pi}{2}$ between the manifolds. We will analyze the impact of denoising mathematically in Theorem \ref{thm:between LAPD after denoising}.

The last step of our algorithm is to use the LAPD metric to partition the denoised simplex set $S_\text{dns}$, and then infer a partition on the node set $X$ to solve MMC. We let $D_\text{LAPD}$ denote the $|S_\text{dns}| \times |S_\text{dns}|$ LAPD distance matrix. Given the LAPD distances, there are many available algorithms for clustering including hierarchical clustering with various linkage functions, density-based clustering, and spectral clustering. However we wish to utilize a method which can avoid explicit construction of the $D_\text{LAPD}$ matrix, which is quite large ($O(|S_\text{dns}|^2)$) and not sparse.

We thus utilize the approach developed in \cite{little2020path}, which offers a fast, multi-scale procedure to approximate single linkage hierarchical clustering of infinity path distances with a controlled error. Briefly, given a sequence of thresholds $t_1 < t_2 < \dots < t_k$, we build approximating graphs $\mathcal{G}(t_1), \dots, \mathcal{G}(t_k)$ such that each $\mathcal{G}(t_i)$ for $i \in [k]$ is constructed by discarding edge weights of magnitude greater than $t_i$ in $\mathcal{G}_{S_{\text{dns}}}$. LAPD can then be approximated simply by computing the connected components in this sequence of graphs, since if $\Delta_1, \Delta_2$ are in different connected components at scale $t_i$ but the same connected component at scale $t_{i+1}$, then the true LAPD distance between them is between $t_i$ and $t_{i+1}$. The connected components of the multi-scale graphs thus provide a collection of $k$ slices of the full LAPD single linkage dendrogram; see \cite{little2020path} for more details, including an efficient implementation of spectral clustering on infinity path distances (for simplicity we simply utilize hierarchical clustering in this article). Table \ref{table:single linkage clustering approximation} illustrates this multi-scale procedure on a toy example of 16 simplices; 8 scales (appearing in the first row) are used to approximate the metric, and the table shows the indices of the connected components at each scale. The simplices start in distinct connected components but eventually merge into a single connected component at the last scale, provided the graph is fullly connected. 

\begin{table}[tbh]
 \begin{center}
\begin{tabular}{|l||*{8}{c|} }
    \hline
\backslashbox[25mm]{simplices}{scales} 
            & \multicolumn{1}{c|}{1e-8}
            & \multicolumn{1}{c|}{3e-7}
            & \multicolumn{1}{c|}{1.4e-5}
            & \multicolumn{1}{c|}{6.3e-4}
            & \multicolumn{1}{c|}{0.028}
            & \multicolumn{1}{c|}{0.3418}
            & \multicolumn{1}{c|}{0.8866}
            & \multicolumn{1}{c|}{1.5707}\\
    \hline\hline
$\Delta_1$ & 1 & 1 & 1 & 1 & 1 & 1 & 1 & 1\\ \hline
$\Delta_2$ & 2 & 1 & 1 & 1 & 1 & 1 & 1 & 1\\ \hline
$\Delta_3$ & 3 & 1 & 1 & 1 & 1 & 1 & 1 & 1\\ \hline
$\Delta_4$ & 4 & 2 & 2 & 2 & 1 & 1 & 1 & 1\\  \hline 
$\Delta_5$ & 5 & 2 & 2 & 2 & 1 & 1 & 1 & 1\\ \hline
$\Delta_6$ & 6 & 3 & 3 & 2 & 1 & 1 & 1 & 1\\ \hline
$\Delta_7$ & 7 & 3 & 3 & 2 & 1 & 1 & 1 & 1\\ \hline
$\Delta_8$ & 8 & 3 & 3 & 2 & 1 & 1 & 1 & 1\\ \hline
$\Delta_9$ & 9 & 4 & 4 & 3 & 2 & 2 & 2 & 1\\ \hline
$\Delta_{10}$ & 10 & 4 & 4 & 3 & 2 & 2 & 2 & 1\\ \hline
$\Delta_{11}$ & 11 & 5 & 4 & 3 & 2 & 2 & 2 & 1\\ \hline
$\Delta_{12}$ & 12 & 5 & 4 & 3 & 2 & 2 & 2 & 1\\ \hline
$\Delta_{13}$ & 13 & 5 & 4 & 3 & 2 & 2 & 2 & 1\\ \hline
$\Delta_{14}$ & 14 & 6 & 5 & 3 & 2 & 2 & 2 & 1\\ \hline
$\Delta_{15}$ & 15 & 6 & 5 & 3 & 2 & 2 & 2 & 1\\ \hline
$\Delta_{16}$ & 16 & 7 & 5 & 3 & 2 & 2 & 2 & 1\\ \hline
\end{tabular}
\end{center}
 \caption{The merging procedure in \cite{little2020path} to approximate hierarchical clustering. A toy example of 16 simplices and 8 scales are used here to demonstrate the idea. Each cell contains the index of the connected component at that scale. Simplices start in distinct components but eventually merge into a single connected component at the last scale. } 
\label{table:single linkage clustering approximation}
\end{table}

An important question when performing distance-based clustering is how to choose the number of clusters $m$ (indeed, we see different numbers of clusters at different scales). When this parameter is unknown, we approximate the number of clusters using a persistence approach, i.e. letting $m(t_1), \ldots, m(t_k)$ denote the number of nontrivial connected components of $\mathcal{G}(t_1), \dots, \mathcal{G}(t_k)$, we define
\begin{align}
\label{equ:estimate_num_clusters}
    c_j &= \sum_{i=1}^k \mathbf{1}\{m(t_i)=j\}\;,\quad \hat{m} = \argmax_j c_j \, .
\end{align}
For example, we would infer $\hat{m}=2$ in the toy example shown in Table \ref{table:single linkage clustering approximation}. When $m$ is known, we simply return the hierarchical clustering corresponding to that $m$. This heuristic of identifying the number of clusters with the longest branches is well known \citep{Belyadi2021,GERE2023} and aligns with the theory of the gap statistic \citep{Tibshirani2002} and the variance ratio criterion \citep{Calinski1974}. One advantage of our proposed method is that it can usually learn the correct number of manifolds $m$ in the mixture, while the methods we compare with require this parameter as an input (see the experiments in Section \ref{sec:Experiments}).

Finally, we extend the simplex partition to a node partition by a majority vote procedure. More specifically, letting $\ell(\Delta)\in [m]$ denote the simplex partition and $S_{\text{dns}}(x_i)$ the subset of simplices containing $x_i$, we define the label assigned to a node $x_i$ as 
\begin{align}\label{equ:majority_vote}
    \ell(x_i) &= \argmax_{j \in [m]} \sum_{\Delta \in S_{\text{dns}}(x_i)} \mathbf{1}\{\ell(\Delta)=j\} \, .
\end{align}

Alternatively, one could define a node-level distance by incorporating an additional minimum over all pairs of simplices containing the nodes $x_i, x_j$; namely, 
\begin{equation}
    \label{eq:nLAPD}
     \text{nLAPD}(x_i,x_j) := \min_{\Delta_1 \in S_{\text{dns}}(x_i),\, \Delta_2 \in S_{\text{dns}}(x_j)} \text{LAPD}(\Delta_1, \Delta_2)\, , 
\end{equation}
and then cluster the nodes using nLAPD. Experiments indicate that clustering the nodes with nLAPD is quite similar to clustering the simplices with LAPD and then inferring a partition on the nodes. However, the additional minimum takes an $O(n^2)$ search and is thus not computationally feasible; we thus utilize the simplex level LAPD for clustering.

% There is an alternative idea to partition the nodes using nLAPD (see \eqref{eq:nLAPD}), a node-level extension of LAPD, whose discussion is deferred to \autoref{sec:nLAPD}. Clustering the nodes with nLAPD is quite similar to clustering the simplices with LAPD and then inferring a partition on the nodes (can be seen from \autoref{fig:nLAPD illustration}). However, computing nLAPD is more cumbersome computationally and so to maintain a quasilinear complexity in $n$, we instead just compute the simplex level distance.

\section{Main Results}
\label{sec:main results}

In this section we analyze the proposed LAPD metric, bound wLAPD and bLAPD with high probability (Theorems \ref{thm:within LAPD upper bound}, \ref{thm:between LAPD lower bound}, and \ref{thm:between LAPD after denoising}), and derive conditions guaranteeing a good LAPD gap (wLAPD $\ll$ bLAPD), so that the metric can successfully solve MMC with high probability (Theorem \ref{thm:LAPD gap}). To simplify the analysis, we consider the following two components, linear model. 
\begin{assumption}{(Linear manifolds)}
\label{assump:linear}
 $\M=\M_1 \cup \M_2$ for two \textit{linear} manifolds $\M_1, \M_2$ intersecting at angle $\Theta$.
\end{assumption}
We discuss extensions to the general case at the end of this section.

We start by deriving an upper bound on wLAPD in terms of the noise level $\tau$ and simplex edge length $e$; Theorem \ref{thm:within LAPD upper bound} is in fact not a probabilistic statement, but derived from a deterministic ``worst case" bound on how much bending can occur for pure simplices, i.e. simplices constructed from a single manifold. The key idea is that if the noise level $\tau$ is small relative to the simplex size $e$, then all pure simplices will be nearly parallel to the underlying manifold, and wLAPD will be small. 

\begin{restatable}[wLAPD upper bound]{theorem}{first}
\label{thm:within LAPD upper bound}
Let Assumptions \ref{assump:intersection_dim}, \ref{assump:uniform_samp}, \ref{assump:size-quality}, \ref{assump:small noise} and \ref{assump:linear} hold. Assume also the construction outlined in Section \ref{sec:methodology} with either weight construction \eqref{eq:simplex graph weight} or \eqref{eq:two way weight}, and that $\mathcal{G}_S$ is connected. Then: 
    \begin{equation}\label{eq:within LAPD upper bound}
        \text{wLAPD} \lesssim \frac{\tau}{e}. 
    \end{equation}
\end{restatable}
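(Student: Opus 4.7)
The plan is to decompose wLAPD into two ingredients: (i) a local bound on $W_S(\Delta_i,\Delta_j)$ between adjacent pure simplices from the same linear $\M_j$, and (ii) a connectivity argument that any two pure simplices in $S(\M_j)$ are joined by a path of pure simplices in $\mathcal{G}_S$. Since LAPD is the min-max of edge weights along a path, chaining (i) along the path from (ii) yields wLAPD $\lesssim \tau/e$.

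For (i), fix $\Delta_i,\Delta_j\in S(\M_j)$ sharing face $F$, with apex points $x_i,x_j$ and clean counterparts $z_i,z_j\in\M_j$; let $v_i,v_j$ be the apex vectors of Section \ref{ssec:method-lapd} and $\tilde v_i,\tilde v_j$ their noiseless analogues. In the clean linear setting, both $\tilde v_i,\tilde v_j$ lie in the $d$-dimensional subspace $\M_j$ and are orthogonal to the $(d-1)$-dimensional $\tilde F\subset\M_j$, so they live on a common one-dimensional line. Whenever $\Delta_i,\Delta_j$ are joined in $\mathcal{G}_S$ with $\theta_{ij}\ge\pi/2$, the two apex points sit on opposite sides of $\tilde F$ in $\M_j$, forcing $\tilde v_j=-\alpha\tilde v_i$ for some $\alpha>0$, so the clean dihedral angle is $\pi$; the parallel case is handled analogously for the two-sided weight \eqref{eq:two way weight}.

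A lower bound $\|\tilde v_i\|\gtrsim e$ follows from Assumption \ref{assump:size-quality}: since $\mathrm{vol}_d(\Delta_i)=\tfrac{1}{d}\|\tilde v_i\|\,\mathrm{vol}_{d-1}(\tilde F)$, the constraints $r_0 V_0 e^d\le\mathrm{vol}_d(\Delta_i)$ and $\mathrm{vol}_{d-1}(\tilde F)\lesssim (e/q)^{d-1}$ give $\|\tilde v_i\|\gtrsim e$. For the upper bound $\|v_i-\tilde v_i\|\lesssim\tau$, I would combine (a) $\|(x_i-c)-(z_i-\tilde c)\|\lesssim\tau$ from Assumption \ref{assump:small noise} with (b) an operator-norm bound $\|P_F-P_{\tilde F}\|\lesssim\tau/e$, obtained via a Davis--Kahan/sin-$\theta$ argument using that the $d-1$ basis vectors of $F$ are perturbed by $O(\tau)$ but have length $\gtrsim e$ and a Gram matrix that is well-conditioned by Assumption \ref{assump:size-quality}. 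Composing (a) and (b) with $\|x_i-c\|\lesssim e/q$ yields $\|v_i-\tilde v_i\|\lesssim\tau$, and dividing by $\|\tilde v_i\|\gtrsim e$ gives $\pi-\theta_{ij}\lesssim\tau/e$, hence $W_S(\Delta_i,\Delta_j)\lesssim\tau/e$. For (ii), the assumed connectivity of $\mathcal{G}_S$ together with Assumption \ref{assump:linear} and the uniform sampling of Assumption \ref{assump:uniform_samp} implies the induced subgraph on $S(\M_j)$ is connected: any simplex chain traversing $\M_j$ can be rerouted through adjacent pure simplices, since the $(d-1)$-faces carried by pure simplices tile a connected portion of the flat $\M_j$. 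Taking the maximum of the bound from (i) along this pure path then gives \eqref{eq:within LAPD upper bound}.

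The main technical obstacle is the projector-perturbation estimate (b): we must convert vertex-level noise of size $\tau$ into an operator-norm perturbation of $P_F$ scaling as $\tau/e$, and this crucially relies on the volume lower bound $r_0 V_0 e^d$ of Assumption \ref{assump:size-quality} to guarantee a lower bound on the smallest singular value of the face's basis matrix. Pure-subgraph connectivity is a secondary but nontrivial point at which the linearity of $\M_j$ enters decisively, as curved manifolds would require an additional chord-to-tangent distortion estimate that is deferred to future work.
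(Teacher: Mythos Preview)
Your approach is correct at the sketch level and takes a genuinely different route from the paper's proof. The paper does not use perturbation theory; instead it casts the local edge-weight bound as a constrained geometric optimization. It enlarges $S(\M_j)$ to the continuum $V(\M_j)$ of all tube simplices satisfying the $(e,q)$ constraints, reduces to ambient dimension $D=d+1$ (since any adjacent pair of $d$-simplices spans at most $d+1$ dimensions), and then explicitly searches for the configuration of two adjacent simplices maximizing $\min\{\pi-\theta,\theta\}$. For regular simplices ($q=1$) the maximizer is identified in closed form---the shared face on one boundary of the tube, both apex vertices on the opposite boundary---yielding the explicit constant $\sqrt{32d/(d+1)}\,\tau/e$. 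For $q<1$ the paper appeals to continuity in $q$ and supplies only numerical evidence.

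Your Davis--Kahan/projector-perturbation argument is arguably cleaner: it handles arbitrary $(q,r_0)$ uniformly where the paper's optimization does not, and it makes transparent where each hypothesis enters (volume lower bound $\Rightarrow$ altitude $\|v_i\|\gtrsim e$; $\tau$-noise on vertices $\Rightarrow$ $\|P_F-P_{\tilde F}\|\lesssim\tau/e$; combined $\Rightarrow$ dihedral deviation $\lesssim\tau/e$). What you give up is the explicit constant. One small caution: Assumption~\ref{assump:size-quality} constrains the \emph{noisy} simplex, so it directly gives $\|v_i\|\gtrsim e$ rather than $\|\tilde v_i\|\gtrsim e$; transferring to the clean altitude requires $\tau/e$ bounded away from a fixed constant, which is harmless since the bound is vacuous otherwise.

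On step (ii) your sketch is no weaker than the paper's treatment. The paper simply notes that connectivity of $\mathcal{G}_S$ makes all distances finite, then bounds wLAPD by the maximum edge weight over adjacent pairs in $V(\M_j)$---implicitly assuming the optimal path stays within pure simplices without arguing it. Your explicit flagging of pure-subgraph connectivity as a separate step is an improvement, though a fully rigorous argument would still need to be supplied in either approach.
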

\begin{proof}
    See Appendix \ref{pf:within LAPD upper bound}. 
\end{proof}

\begin{remark}
    The upper bound in \eqref{eq:within LAPD upper bound} is independent of the ambient dimension $D$; in particular,  $\text{wLAPD} \leq C_{d,q}\frac{\tau}{e}$ for a constant $C_{d,q}$ depending on the intrinsic dimension $d$ and simplex quality $q$. %Even though it is an increasing function of the intrinsic dimension $d$, asymptotically it approaches just a constant.  
\end{remark}

\begin{remark}
    Our goal is for $\text{wLAPD} \lesssim \Theta$, i.e. for the within manifold distance to be small relative to the manifold intersection angle, as this will be necessary to achieve a good LAPD gap; \eqref{eq:within LAPD upper bound} guarantees this will be the case as long as $e \gtrsim \frac{\tau}{\Theta}$, i.e. we choose a scale $e$ for the simplex construction which is large relative to the noise level $\tau$ and the inverse of the intersection angle. 
\end{remark}

Next we establish an upper bound on bLAPD, the between manifold distance. Theorem \ref{thm:between LAPD lower bound} is derived from probabilistic arguments that when sampling randomly it is unlikely to obtain a long chain of mixed simplices near the intersection which all connect at a flat angle, i.e. the formation of a flat bridge in between manifolds is unlikely. We introduce the following non-restrictive assumption in order to give a precise bound on this distance.

\begin{assumption}
\label{assump:volume_growth}
There exists a constant $R_{\max}$ such that for all $R\in[\tau,R_{\max}]$ 
\begin{align}
\label{equ:elong}
    \text{vol}_{D}(\mathcal{T}_R) &\in [C_1, C_2] R \tau^{D-d}\, ,
\end{align}   
where $\mathcal{T}_R=T_R(\M_1 \cap \M_2)\cap T_\tau(\M)$ denotes the subset of the data support within distance $R$ of $\M_1 \cap \M_2$. 
\end{assumption}

The constants $C_1, C_2$ encode the elongation of $\M$ in the direction of the intersection; in particular, $C_1,C_2$ will scale as $\text{vol}_{d-1}(\M_1 \cap \M_2)$. Describing $\text{vol}_{D}(\mathcal{T}_R)$ with an interval is convenient, since if the boundaries of $\M_1,\M_2$ curve, $\text{vol}_D(\mathcal{T}_R)/R$ will have small variations as $R$ changes.

\begin{restatable}[bLAPD lower bound]{theorem}{second}
\label{thm:between LAPD lower bound}
    Let Assumptions \ref{assump:intersection_dim}, \ref{assump:uniform_samp}, \ref{assump:size-quality}, \ref{assump:small noise}, \ref{assump:linear}, and \ref{assump:volume_growth} hold and assume the construction outlined in Section \ref{sec:methodology} with either weight construction \eqref{eq:simplex graph weight} or \eqref{eq:two way weight}. If $e$ satisfies $n e^d \geq \frac{2(d+1)}{C_5\Theta\log 2} \log n$ and $e \leq 1$, then for $n$ large enough
    \begin{align*}
        \text{bLAPD} &\geq (2C_5 n e^d)^{-1}
    \end{align*}
    with probability at least $1-3C_4 n^{-{(d+1)}}$, where $C_4 = \left(\frac{2C_2}{q\sin\Theta}\right)^{d+1}$, $C_5 = \frac{2C_2 C_{\Theta,q,d}}{C_1} $, $C_1,C_2$ are the constants from \eqref{equ:elong}, and $C_{\Theta,q,d,r_0}$ is the constant from Proposition \ref{prop:single_link_prob}. 
\end{restatable}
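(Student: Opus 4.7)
The plan is to set a threshold $t_\ast := (2C_5 n e^d)^{-1}$ and show that with probability at least $1 - 3C_4 n^{-(d+1)}$, the subgraph of $\mathcal{G}_S$ obtained by retaining only edges of weight $\leq t_\ast$ contains no path connecting any element of $S(\M_1)$ to any element of $S(\M_2)$. Since $\text{LAPD}$ is an infinity path distance, the absence of such a path is precisely the statement $\text{bLAPD} \geq t_\ast$, so this reduces the theorem to a high-probability nonexistence claim about flat ``bridges'' between the two manifolds.

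The first step is to localize where any bridging path must lie. Any adjacent pair $(\Delta_i, \Delta_j)$ with weight $\leq t_\ast$ must be nearly coplanar, so at least one of the two apex points (or the shared face itself) must lie within a tube $\mathcal{T}_R$ around $\M_1 \cap \M_2$ of radius $R = O(e)$: otherwise one endpoint lies entirely on $\M_1$ and the other entirely on $\M_2$, forcing the dihedral angle deficit $\pi - \theta_{ij}$ to be at least of order $\min(\Theta,\pi-\Theta)$, which dwarfs $t_\ast$ for $n$ large (using Assumption \ref{assump:small noise} together with $e \leq 1$ and Assumption \ref{assump:linear}). Thus every low-weight bridging edge has its $(d+1)$ vertices in $\mathcal{T}_R$, and by Assumption \ref{assump:volume_growth} the number of sample points in $\mathcal{T}_R$ concentrates around its mean, which is of order $n R \tau^{D-d}/V$. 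A Chernoff/Bernstein step gives control of this count with probability at least $1 - C_4 n^{-(d+1)}$; each such point is the apex of $O(B^d)$ valid simplices, so the number of candidate bridging pairs is controlled, and the appearance of $C_4 = (2C_2/(q\sin\Theta))^{d+1}$ traces back to the $(d+1)$-fold sampling over the vertices together with the uniform measure normalization.

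Next I bound the per-edge flatness probability via Proposition \ref{prop:single_link_prob}: conditional on a shared face of controlled size and regularity (Assumption \ref{assump:size-quality}), the apex of each of the two simplices is an independent uniform draw from $T_\tau(\M)$, and the event $\pi - \theta_{ij} \leq t_\ast$ has probability at most $C_{\Theta,q,d,r_0}\, t_\ast$. The quality constraint keeps the vectors $v_i,v_j$ in the dihedral formula bounded away from degeneracy, so this linear-in-$t_\ast$ bound is uniform over all admissible configurations of the shared face. A union bound over all candidate bridging pairs from the previous paragraph then yields the probability that some low-weight bridging edge exists; the specific choice $t_\ast = (2C_5 n e^d)^{-1}$ with $C_5 = 2C_2 C_{\Theta,q,d}/C_1$ balances the expected count against the per-edge probability, and the hypothesis $n e^d \geq \tfrac{2(d+1)}{C_5 \Theta \log 2} \log n$ is exactly what converts this balance into an $n^{-(d+1)}$ bound. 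Adding the three failure events (concentration of the intersection-tube count, validity of the geometric localization, and nonexistence of a flat bridge) produces the $3 C_4 n^{-(d+1)}$ in the statement.

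The main obstacle is the per-edge flatness bound of Step 3: one must show carefully that, uniformly over all shared faces of allowable shape and size, the apex-position event $\{\pi - \theta_{ij} \leq t_\ast\}$ reduces to a codimension-one sliver in the uniform product measure on $T_\tau(\M) \times T_\tau(\M)$, with explicit dependence of the prefactor on $\Theta$, $q$, $r_0$, and $d$. Once Proposition \ref{prop:single_link_prob} supplies this, the rest is bookkeeping: a volume computation using Assumption \ref{assump:volume_growth}, standard concentration for the number of points in $\mathcal{T}_R$, and a union bound whose size is controlled by the neighborhood parameter $B$ and the volume constants $C_1,C_2$.
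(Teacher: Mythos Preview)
Your single-edge union bound in the third paragraph cannot produce a probability that decays in $n$. After localizing to $\mathcal{T}_R$ with $R \asymp e/(q\sin\Theta)$, the number of sample points there concentrates near $w \asymp C_2 n e$, so the number of adjacent simplex pairs one must union over is of order $w^{d+2}$. Proposition~\ref{prop:single_link_prob} gives a per-link probability (conditional on the shared face) of order $e^{d-1} t_\ast$; with $t_\ast = (2C_5 n e^d)^{-1}$ this is $\asymp (ne)^{-1}$. The union bound is then $(ne)^{d+2}\cdot (ne)^{-1} = (ne)^{d+1}$, which \emph{grows} with $n$: a single $t_\ast$-flat link among mixed simplices near the intersection is not a rare event, and the hypothesis $ne^d \gtrsim \log n$ plays no role in a one-edge argument. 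Your claim that this choice of $t_\ast$ ``balances the expected count against the per-edge probability'' is therefore incorrect.

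The missing ingredient is the chaining step the paper supplies as Lemma~\ref{lem:number_links}: because the angle $\angle(\Delta_i,\M_1)$ changes by at most $\epsilon$ per link and must climb from $O(\tau/e)$ to $\Theta-O(\tau/e)$ along any path from $S(\M_1)$ to $S(\M_2)$, an $\epsilon$-flat bridging path must contain a run of $s=\lfloor \Theta/(2\epsilon)\rfloor$ consecutive \emph{straddle} simplices (angle $\geq \Theta/4$ with both manifolds). One then union-bounds over the $w^{s+d}$ ordered $(s{+}d)$-tuples of tube points and chains $s{-}1$ conditional applications of Proposition~\ref{prop:single_link_prob}, obtaining a bound $C_4(ne)^{d+1}(C_5 n e^d \epsilon)^{s-1}$. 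The choice $\epsilon=(2C_5 ne^d)^{-1}$ makes the base equal to $1/2$, and the hypothesis $ne^d \geq \tfrac{2(d+1)}{C_5\Theta\log 2}\log n$ is exactly what forces $s \gtrsim (d{+}1)\log_2 n$, converting the geometric decay in $s$ into the $n^{-(d+1)}$ rate. (Also, the factor $3$ arises from two failure events, not three: the chain event contributes $2C_4 n^{-(d+1)}$ and the Chernoff failure for $n_T$ is absorbed as a further $C_4 n^{-(d+1)}$.)
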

\begin{proof}
See Appendix \ref{pf:between LAPD lower bound}. A key technical step is the single link volume estimate given in Proposition \ref{prop:single_link_prob}, which is proved in Appendix \ref{app:single-link}. 
\end{proof}

Theorem \ref{thm:between LAPD lower bound} guarantees that when there is no noise, $\text{wLAPD}=0 \ll (\log n)^{-1} \lesssim \text{bLAPD}$, and the metric can differentiate the manifolds; this is obtained by choosing $e$ as small as possible subject to the lower bound $ne^d \gtrsim \log n$. However when observations are corrupted by noise and the wLAPD can become as large as $O(\frac{\tau}{e})$ as quantified in Theorem \ref{thm:within LAPD upper bound}, the metric would only be able to differentiate the manifolds for very small values of $\tau$, namely $\tau \lesssim (\log n)^{-\frac{(d-1)}{d}}n^{-\frac{1}{d}}$. Fortunately, if we incorporate an appropriate denoising procedure, we can increase the bLAPD significantly without compromising the wLAPD. In particular, by incorporating the denoising procedure from Definition \ref{def: Denoised Simplices}, one can obtain a bLAPD scaling like the intersection angle $\Theta$; despite the much stronger conclusion, the proof of Theorem \ref{thm:between LAPD after denoising} is a minor modification of the proof of Theorem \ref{thm:between LAPD lower bound}. 

\begin{restatable}[bLAPD lower bound with denoising]{theorem}{third}
\label{thm:between LAPD after denoising}
Let Assumptions \ref{assump:intersection_dim}, \ref{assump:uniform_samp}, \ref{assump:size-quality},  \ref{assump:small noise}, \ref{assump:linear}, and \ref{assump:volume_growth} hold and assume the construction outlined in Section \ref{sec:methodology} with either weight construction \eqref{eq:simplex graph weight} or \eqref{eq:two way weight}. Assume also that $e$ satisfies $n e^d \geq \frac{8(d+1)}{C_5\Theta\log 2} \log n $, $e \leq 1$ and that we remove all simplices whose LAPD to their $\kappa=\frac{\Theta}{4}C_5 ne^d$-th nearest neighbor exceeds $\eta = (2C_5ne^d)^{-1}$. Then for $n$ large enough, if LAPD is recomputed after denoising,
\begin{align*}
   \text{bLAPD}_{\text{dns}} \geq \frac{\Theta}{4} \, 
\end{align*}
with probability at least $1-3C_4 n^{-{(d+1)}}$, where constants $C_4$, $C_5$ are as in Theorem \ref{thm:between LAPD lower bound}.
\end{restatable}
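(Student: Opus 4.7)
The strategy is to piggy-back on the proof of Theorem \ref{thm:between LAPD lower bound}: the same high-probability event and single-link volume estimate (Proposition \ref{prop:single_link_prob}) remain in play, and the denoising threshold $\eta = (2C_5 ne^d)^{-1}$ is chosen to match the pre-denoising bLAPD lower bound. The new ingredient is that denoising removes precisely the mixed simplices whose LAPD-neighborhoods reach both manifolds, after which any remaining path between $\M_1$ and $\M_2$ must make a near-$\Theta$ turn. First, I would invoke the framework of Theorem \ref{thm:between LAPD lower bound} under the strengthened hypothesis $ne^d \geq \frac{8(d+1)}{C_5\Theta\log 2}\log n$, yielding $\text{bLAPD} \geq \eta$ for the full (undenoised) simplex graph on an event $\mathcal{E}$ of probability at least $1 - 3C_4 n^{-(d+1)}$.

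Next, on $\mathcal{E}$, I would estimate, for any fixed simplex $\Delta$, the cardinality of its LAPD-$\eta$ neighborhood $N_\eta(\Delta) = \{\square \in S : \text{LAPD}(\Delta,\square) \leq \eta\}$. For a pure simplex sitting away from the intersection, $N_\eta(\Delta)$ contains many same-manifold simplices and easily exceeds $\kappa$, so such simplices survive denoising. For a mixed simplex near the intersection, by contrast, $N_\eta(\Delta)$ is confined to an ambient tube around $\M_1 \cap \M_2$ of limited volume (Assumption \ref{assump:volume_growth}); the sample-density counting already deployed in the proof of Theorem \ref{thm:between LAPD lower bound} bounds $|N_\eta(\Delta)|$ by a constant multiple of $C_5 ne^d$ times the transverse extent of this tube. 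The denoising threshold $\kappa = \frac{\Theta}{4} C_5 ne^d$ is then calibrated so that any bridge simplex whose $\eta$-ball actually straddles both $\M_1$ and $\M_2$ fails this count, while deep pure simplices continue to pass.

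Finally, after denoising, any path in $\mathcal{G}_{S_{\text{dns}}}$ from a pure simplex on $\M_1$ to a pure simplex on $\M_2$ must transition between manifolds through a shared face whose two adjacent apex vectors lie on opposite sides of the intersection subspace. Under the linear Assumption \ref{assump:linear}, this transition has dihedral deviation equal to $\Theta$ up to the small tilt of pure simplices quantified by Theorem \ref{thm:within LAPD upper bound}. Absorbing the $O(\tau/e)$ wLAPD slack yields at least one edge along any such path with weight $\geq \Theta - \text{wLAPD} \geq \Theta/4$, establishing $\text{bLAPD}_{\text{dns}} \geq \Theta/4$. The main obstacle is the second step: translating the ambient volume-density counting of Theorem \ref{thm:between LAPD lower bound} into a sharp cardinality bound on LAPD-balls and carrying through the exact constants so that the chosen $\kappa$ produces the stated separation. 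This bookkeeping essentially retraces the proof of Theorem \ref{thm:between LAPD lower bound} with $\eta$ and $\kappa$ playing symmetric roles, which is precisely why the authors describe the modification as minor.
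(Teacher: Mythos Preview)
Your proposal diverges from the paper's argument in a way that introduces a real gap. The paper does \emph{not} classify simplices as ``mixed'' versus ``pure'' for the denoising analysis; instead it introduces the notion of a \emph{super straddle} simplex, namely one making angle at least $\tfrac{3\Theta}{8}$ with \emph{both} $\M_1$ and $\M_2$. The proof then has two pieces. First, if any super straddle simplex $\Delta$ survived denoising, its $\kappa$ nearest LAPD-neighbours all lie within LAPD $\eta$ of $\Delta$; since each $\eta$-link changes the angle to either manifold by at most $\eta$, and $\kappa\eta = \tfrac{\Theta}{8}$, every simplex in that ball is still straddle (angle $\geq \tfrac{\Theta}{4}$ with both manifolds). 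This produces a straddle chain of length $s = \tfrac{\Theta}{8\eta}$, and the path-enumeration argument of Theorem~\ref{thm:between LAPD lower bound} (with $s = \tfrac{\Theta}{8\epsilon}$ in place of $\tfrac{\Theta}{2\epsilon}$) shows this is improbable. Second, once no super straddle simplex remains, every surviving simplex has angle $< \tfrac{3\Theta}{8}$ with at least one manifold; combined with $b_i + c_i \geq \Theta$, this forces $b_i \in [0,\tfrac{3\Theta}{8}) \cup (\tfrac{5\Theta}{8},\Theta]$, so any between-manifold path must jump across the gap of width $\tfrac{\Theta}{4}$, giving $\text{bLAPD}_{\text{dns}} \geq \tfrac{\Theta}{4}$.

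Your step~2 fails because the $\eta$-ball of a mixed simplex is \emph{not} confined to the intersection tube: a mixed simplex nearly parallel to $\M_1$ can $\eta$-link to a long run of pure $\M_1$-simplices, so its $\kappa$NN count can be large and it may survive denoising. The tube-volume bound from Assumption~\ref{assump:volume_growth} controls the number of \emph{points} near $\M_1\cap\M_2$, not the size of LAPD-balls. What actually limits the $\eta$-ball of a problematic simplex is the \emph{angular} constraint---all its $\eta$-neighbours are straddle---and the scarcity of straddle simplices is established by the chain-counting of Proposition~\ref{prop:single_link_prob}, not by ambient volume. Your step~3 (dihedral at a ``shared-face transition'') is also not the mechanism: the $\tfrac{\Theta}{4}$ bound comes from the angular gap $(\tfrac{3\Theta}{8},\tfrac{5\Theta}{8})$ created by removing super straddle simplices, via the same $|b_i - b_{i+1}| \leq W_S(\Delta_i,\Delta_{i+1})$ inequality used in Lemma~\ref{lem:number_links}.
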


\begin{proof}
    See Appendix \ref{pf:between LAPD after denoising}. 
\end{proof}

\begin{remark}
\label{re:denoising parameters}
    Again, the best possible result is obtained by choosing $e\sim \left(\frac{\log n}{n}\right)^{\frac{1}{d}}$ so that $\kappa \sim \log n$ and $\eta \sim (\log n)^{-1}$; in this setting the bLAPD is proportional to $\Theta$ with high probability.
\end{remark}

Finally, we present the main result of the article, which combines Theorems \ref{thm:within LAPD upper bound} and \ref{thm:between LAPD after denoising} to obtain a condition guaranteeing that LAPD can solve MMC. We remark that the denoising procedure will not impact the wLAPD upper bound computed in Theorem \ref{thm:within LAPD upper bound} as long as the graph stays connected. 

\begin{theorem}[LAPD gap]
    \label{thm:LAPD gap}
    Let Assumptions \ref{assump:intersection_dim}, \ref{assump:uniform_samp}, \ref{assump:size-quality},  \ref{assump:small noise}, \ref{assump:linear}, and \ref{assump:volume_growth} hold and assume the construction outlined in Section \ref{sec:methodology} with either weight construction \eqref{eq:simplex graph weight} or \eqref{eq:two way weight}. 
    In particular, assume $S_{\text{dns}}$ is obtained by removing all simplices whose $\kappa^{\text{th}}$ nearest neighbor LAPD exceeds $\eta$ for $\kappa \sim ne^d$, $\eta \sim (ne^d)^{-1}$, and that $\mathcal{G}_{S_{\text{dns}}}$ is connected.
    For $n$ large enough, if $e$ satisfies 
    \begin{equation}
    \label{eq:LAPD gap}
        \underbrace{\frac{\tau}{\Theta} }_{\text{wLAPD}} \vee  \underbrace{\left( \frac{\log n}{n} \right)^{\frac{1}{d}}}_{\text{bLAPD}} \lesssim e \leq 1,
    \end{equation}
    then wLAPD $\lesssim$  bLAPD with probability $1-O(n^{-(d+1)})$. 
\end{theorem}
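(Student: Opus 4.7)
The plan is to combine the three previously established estimates. Theorem \ref{thm:within LAPD upper bound} provides a deterministic upper bound $\text{wLAPD} \lesssim \tau/e$, while Theorem \ref{thm:between LAPD after denoising} provides a high-probability lower bound $\text{bLAPD}_{\text{dns}} \geq \Theta/4$, once the scale $e$ and the denoising parameters $(\kappa,\eta)$ are chosen appropriately. The LAPD gap in \eqref{eq:LAPD gap} then reduces to an algebraic comparison of these two bounds.

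First, I would verify that the hypotheses of Theorem \ref{thm:between LAPD after denoising} are satisfied. The lower bound $e \gtrsim (\log n/n)^{1/d}$ in \eqref{eq:LAPD gap}, interpreted with a sufficiently large implicit constant, gives $n e^d \geq \tfrac{8(d+1)}{C_5\Theta\log 2}\log n$; furthermore, the prescribed scalings $\kappa \sim ne^d$ and $\eta \sim (ne^d)^{-1}$ match those required by the theorem. Thus on an event $\mathcal{E}$ of probability at least $1 - 3C_4 n^{-(d+1)}$, we have $\text{bLAPD}_{\text{dns}} \geq \Theta/4$.

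Next, I would apply Theorem \ref{thm:within LAPD upper bound} on the denoised graph $\mathcal{G}_{S_{\text{dns}}}$. Since the proof of that theorem is a purely deterministic worst-case estimate (bounding the bending of any pure-simplex path by $\tau/e$) and we are assuming $\mathcal{G}_{S_{\text{dns}}}$ is connected, the bound $\text{wLAPD} \lesssim \tau/e$ transfers unchanged to the denoised graph; this is exactly the content of the remark immediately preceding the theorem. Combining on $\mathcal{E}$,
\begin{equation*}
    \frac{\text{wLAPD}}{\text{bLAPD}_{\text{dns}}} \;\lesssim\; \frac{\tau/e}{\Theta/4} \;\lesssim\; \frac{\tau}{e\,\Theta},
\end{equation*}
so the desired inequality $\text{wLAPD} \lesssim \text{bLAPD}_{\text{dns}}$ is equivalent to $e \gtrsim \tau/\Theta$, which together with the scale requirement $e \gtrsim (\log n/n)^{1/d}$ recovers \eqref{eq:LAPD gap} in full.

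The main obstacle is the careful justification that Theorem \ref{thm:within LAPD upper bound}, originally stated for $\mathcal{G}_S$, remains valid after denoising. This amounts to confirming that the denoising procedure preserves a connected pure-simplex subgraph on each component $\M_j$, so that the flat-path argument goes through. Intuitively this holds because pure simplices have many nearby pure-simplex neighbors and hence their $\kappa$-th nearest neighbor LAPD sits well below the threshold $\eta$, while the threshold is chosen precisely to excise mixed simplices near the manifold intersection; combined with the assumed connectivity of $\mathcal{G}_{S_{\text{dns}}}$, this is enough. The remaining work is bookkeeping of constants and probabilities on the event $\mathcal{E}$.
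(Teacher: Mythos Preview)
The proposal is correct and follows essentially the same approach as the paper: combine the deterministic bound $\text{wLAPD}\lesssim \tau/e$ from Theorem~\ref{thm:within LAPD upper bound} with the high-probability bound $\text{bLAPD}_{\text{dns}}\gtrsim \Theta$ from Theorem~\ref{thm:between LAPD after denoising}, then observe that $e\gtrsim \tau/\Theta$ forces $\text{wLAPD}\lesssim \text{bLAPD}$. Your additional care in justifying that Theorem~\ref{thm:within LAPD upper bound} survives denoising is handled in the paper by the remark immediately preceding the theorem statement together with the assumed connectivity of $\mathcal{G}_{S_{\text{dns}}}$.
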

\begin{proof}
By Theorem \ref{thm:within LAPD upper bound}, $\text{wLAPD}\lesssim \frac{\tau}{e}$. Applying Theorem \ref{thm:between LAPD after denoising}, $\text{bLAPD}\gtrsim \Theta$ as long as $ n e^d \geq \frac{8(d+1)}{C_5\Theta\log 2} \log n $ holds, i.e. for $e \gtrsim \left( \frac{\log n}{n} \right)^{\frac{1}{d}}$ which is guaranteed by \eqref{eq:LAPD gap}. We thus obtain $\text{wLAPD} \lesssim \frac{\tau}{e} \lesssim \Theta \lesssim \text{bLAPD}$.
\end{proof}

A few observations are in order: Theorem \ref{thm:LAPD gap} illustrates that for LAPD to successfully differentiate the manifolds, simplices must be contructed at a lengthscale which is above the noise, and a small intersection angle $\Theta$ leads to a smaller range of parameters where the approach succeeds. 
The upper bound $e\leq 1$ is quite non-restrictive (assumed for convenience in the proof of Theorem \ref{thm:between LAPD lower bound}), as throughout the article we consider $e$ as a local scaling. Nonetheless  
%Here we incorporate a non-restrictive upper bound on $e$ for convenience. Indeed, there should be an upper bound on $e$ because using overly large simplices can significantly undermine the effectiveness and theoretical guarantees of the method. Theoretically, the validity of Lemma \ref{lem:number_links} is harmed as the number of links in the sequence of straddle simplices requires the simplices to be of moderate size. Indeed, large simplices will bridge between manifolds far from the intersecting region. The $R_{\max}$ in Assumption \ref{assump:volume_growth}, actually controls how far one would allow the mixed simplices to be away from the intersection. These simplices may create artificial low-angle paths, reducing the bLAPD and collapsing the gap. Hence, we should always avoid having overly sized simplices in practice.
when manifolds are curved, the curvature will require a much more restrictive upper bound on the simplex size $e$ than the one appearing in Theorem \ref{thm:LAPD gap}. When the curvature is large, the wLAPD can become large where the manifold bends rapidly, and simplices must be kept small to mitigate this. We do not compute this curvature-based upper bound explicitly, as a precise characterization of the volumes arising in our chaining arguments is already cumbersome even in the linear case (see Proposition \ref{prop:single_link_prob}). From an algorithmic perspective, the strategy is already clear: take $e$ as small as possible while still satisfying \eqref{eq:LAPD gap} will allow the algorithm to tolerate as much curvature as possible. 

\section{Numerical Implementation \& Experiments}
\label{sec:Experiments}

This section provides implementation details and experimental results. In particular, Subsection \ref{sec:implementation} presents pseudo-code for the LAPD algorithm, analyzes the computational complexity, and discusses a variant of the weight definition in \eqref{eq:simplex graph weight} which enhances performance in some settings. We then conduct numerous experiments investigating the performance of LAPD and compare against two benchmark subspace clustering methods SSC-ADMM \citep{elhamifar_sparse_2011} and EKSS \citep{Lipor_2020} and three benchmark manifold clustering methods LocPCA \citep{arias-castro_spectral_2017}, PBC \citep{babaeian_multiple_2015}, and DCV \citep{wu2022deep}. Subsection \ref{sec:subspace clustering} reports results for subspace clustering on synthetic data, Subsection \ref{sec:manifold_clustering} reports results for nonlinear multimanifold clustering on synthetic data, and Subsection \ref{sec:real world} reports performance on real data. 

\subsection{Numerical Implementation}
\label{sec:implementation}

The pseudo-code of the LAPD algorithm is given in Algorithm \ref{alg:pseudo}. Our implementation\footnote{\url{https://github.com/HYfromLA/LAPD}} provides default values to the parameters ($B=25$, $e=\sqrt{2}\tau$, $q=\frac{1}{1.25+0.15(d-2)}$, $\kappa=10\log{n}$, $k=100$) to make the code easy to use, and estimates the intrinsic dimension $d$ and noise level $\tau$ using \cite{LITTLE2017504} if not provided by the user. Note that the choice of $B,k$ is to keep computation and memory use feasible, the choice of $e$ is to ensure the simplex size to be above the noise level, and the choice of $q$ and $\kappa$ is empirical. The default value of $\eta$ is set to be the elbow point when all the $\kappa$NN LAPDs of the simplices are sorted in a non-decreasing order. An illustration is given in Figure \ref{fig:default eta}. Careful readers may find that the parameter set of Algorithm 1 does not include $r_0$, the one that constraints the volume of the simplices. The reason is quite simple: keeping track of the volume of each single simplex is computationally unfeasible and empirically, we observe that characterizing the quality of simplices with $e,q$ is sufficient to guarantee a good performance of the algorithm. 

\begin{figure}[tb]
     \centering
     \begin{subfigure}[b]{0.45\textwidth}
         \centering        \includegraphics[width=\textwidth]{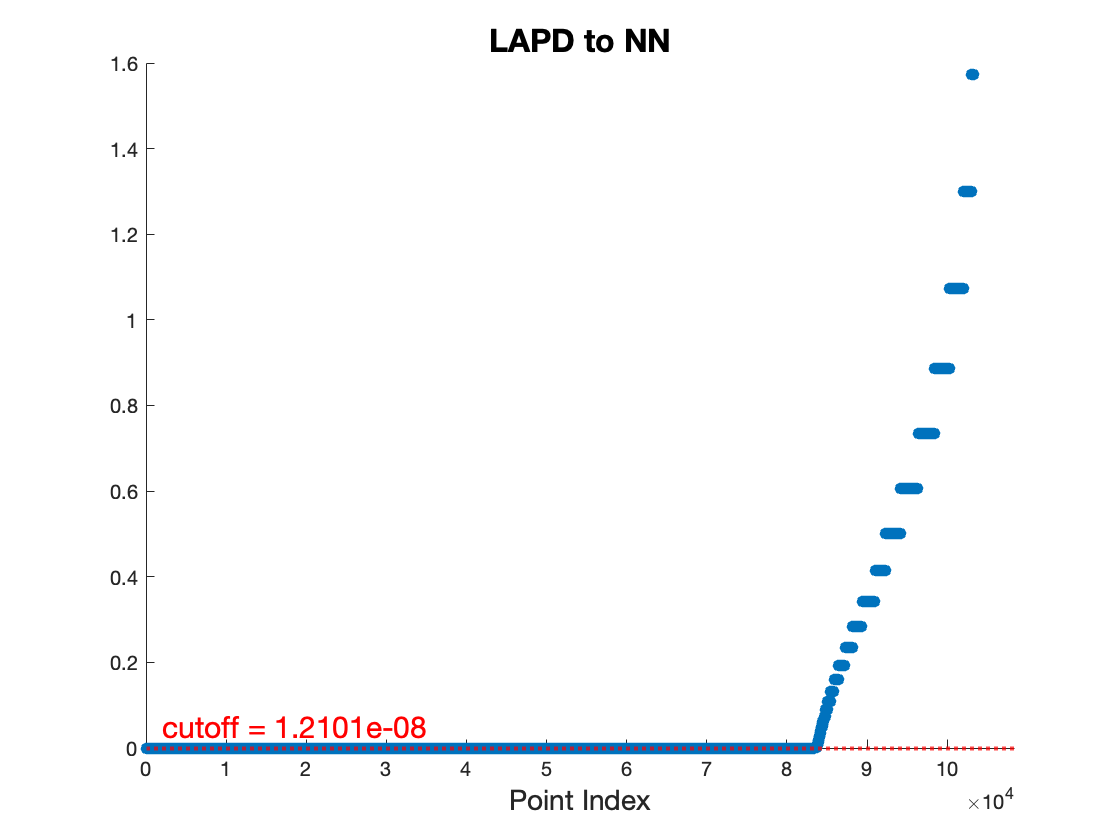}
         \caption{Default $\eta$ for a noiseless dataset.}
         \label{subfig: eta noiseless}
     \end{subfigure}
     \hfill
     \begin{subfigure}[b]{0.45\textwidth}
         \centering       \includegraphics[width=\textwidth]{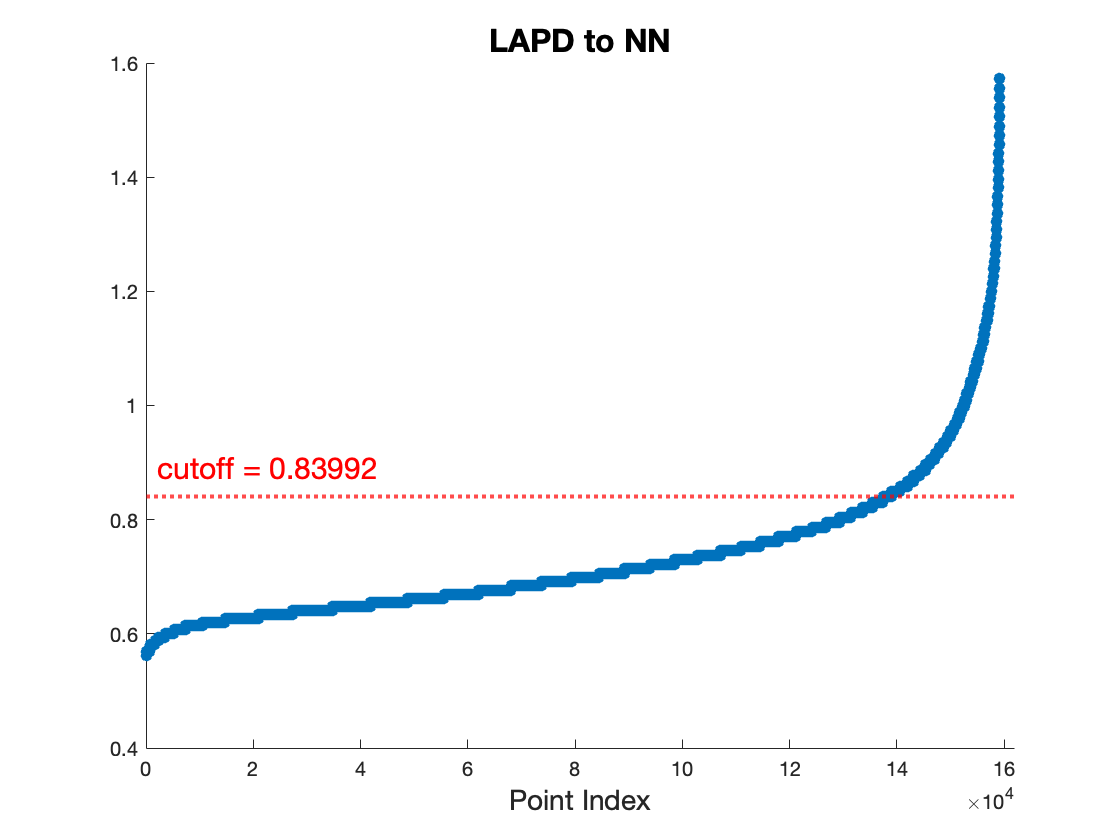}
         \caption{Default $\eta$ for same data with added noise.}
          \label{subfig:eta noisy}
     \end{subfigure}
     \caption{Default $\eta$ (cutoff) from the elbow point criteria; figure on the left is for noiseless data, and figure on the right is for the same data with added noise.}
     \label{fig:default eta}
\end{figure}

Based on the pseudo-code, the total complexity of LAPD as an algorithm for MMC can be calculated as follows: (i) searching and finding nearest neighbors on dataset $X$ takes $O(Dn\log(n))$ using KD-Tree; (ii) constructing the weighted simplex graph takes $nB^{d}\frac{d(d+1)}{2}D = O(Dnd^2B^d)$; (iii) using the multiscaled graph procedure from \cite{little2020path} to approximate a pruned single linkage dendrogram takes \small\[O\left(ndB^{d+1} \left(t \wedge \log \left(ndB^{d+1}\right)\right) + nB^dt\log \left(nB^d\right)\right)\]
(iv) denoising takes $(nB^d)\kappa = O(dB^d n\log(n))$. Putting all the pieces together gives the total complexity of the LAPD algorthm 
\small \[  O\left(\left(D+dB^d\right)n\log n +\left( Dd^2+dB\left(k \wedge \log \left( ndB^{d+1}\right)\right)\right)nB^d \right) = O\left(\left(D+B^d\right)n\log n + DnB^d \right),\]
assuming $d, k, B=O(1)$, i.e. the LAPD algorithm is linear in the ambient dimension $D$ and quasi linear in the dataset size $n$. We verify these computational dependencies with experiments (see Figure \ref{fig:Runtime}), while most of the benchmark algorithms are at least quadratic. On the other hand, the complexity of LAPD is exponential in the intrinsic dimension $d$, both memory-wise and computation-wise, which makes it hard to use for datasets of large intrinsic dimension (normally when $d > 5$). 

\begin{algorithm}[tb]
    \SetAlgoLined
    \KwInput{Data $X=\{x_1, \ldots, x_n\}$; Optional parameters $d, \tau, B,e,q,\kappa, \eta, k, m.$}
    \KwOutput{Clustering labels on $X$.}
            \vspace{.2cm}
            Estimate intrinsic dimension $d$ and noise level $\tau$ using the algorithm from \cite{LITTLE2017504} if not provided by user. \\
            [.1cm]
            Construct annular locality graph $\mathcal{G}_X(e,B)=(X,E_X)$ on $X$ and candidate simplices via \eqref{equ:cand_simplices}. \\[.1cm]
            Filter simplices with quality parameter $q$ via \eqref{eq:simplex quality constraint} to obtain simplex set $S$. \\ [.1cm]
            Construct an angle-based weighted graph on $S$: $\mathcal{G}_S\leftarrow(S,E_S,W_S)$ with $W_S(\Delta_i,\Delta_j)$ from \eqref{eq:simplex graph weight} or \eqref{eq:two way weight} dependent on data distribution. \\[.1cm]  
            Approximate LAPD in $\mathcal{G}_S$ from $k$ scales using the multi-scale procedure in \cite{little2020path}. \\ [.1cm]
            Denoise $S$ according to \eqref{eq:kappa} and \eqref{eq:eta} with thresholds $\kappa$, $\eta$ to obtain $S_\text{dns}$ and re-compute LAPD in $\mathcal{G}_{S_\text{dns}}$. \\ [.1cm]
            Use the multi-scale procedure \cite{little2020path} to approximate hierarchical clustering on $S_\text{dns}$. \\ [.1cm] 
            If unknown, estimate number of clusters $m$ via \eqref{equ:estimate_num_clusters} (persistence) and extract corresponding labels.  \\ [.1cm] 
            Extend the labels assigned to $S_\text{dns}$ to labels on $X$ via a majority vote (see \eqref{equ:majority_vote}).
\caption{LAPD for Multi-Manifolds Clustering.}
\label{alg:pseudo}
\end{algorithm}

We also introduce the following weight definition: 
\begin{equation}
    \label{eq:two way weight}
    W_S^2(\Delta_i, \Delta_j) = \min \{\pi-\theta(\Delta_i, \Delta_j), \theta(\Delta_i, \Delta_j) \} 
\end{equation}
as an alternative to \eqref{eq:simplex graph weight}. We use the superscript 2 to distinguish this second version of weight definition from \eqref{eq:simplex graph weight}. Indeed, \eqref{eq:two way weight} allows ``backflipping", i.e. if two simplices connect with a dihedral angle close to 0, this is also deemed a flat connection, while \eqref{eq:simplex graph weight} requires that the dihedral angle be close to $\pi$ for a flat connection. If we refer to the LAPD defined with \eqref{eq:simplex graph weight} as LAPD$^1$ and respectively the LAPD defined with \eqref{eq:two way weight} as LAPD$^2$, then it is straightforward to see that 
\begin{equation}
\label{eq:two LAPD relationship}
    \text{LAPD}^1(\Delta_i, \Delta_j) \geq \text{LAPD}^2(\Delta_i, \Delta_j) \, ,
\end{equation}
since all simplices are more connected (closer) under LAPD$^2$. Using LAPD$^2$ thus decreases the wLAPD (better within-manifold cohesiveness) at the cost of also decreasing the bLAPD (worse between manifold separability), and it is a priori unclear what is most advantageous for multi-manifold clustering, i.e. which metric will lead to the largest LAPD gap. Although our theoretical guarantees cover both constructions, we found LAPD$^1$ to be preferable for the synthetic data sets considered in Sections \ref{sec:subspace clustering} and \ref{sec:manifold_clustering} while LAPD$^2$ was preferable for the real data sets considered in Section \ref{sec:real world}. We conjecture that this is because in our synthetic data sets, the data density is high at the intersections, while for the real data sets the data density near class intersections tends to be lower (clusters are more separated or even disjoint); when the intersections are of high density, the risk of a bad path forming is greatly increased by allowing backflipping. Note if no superscript is specified, LAPD refers to the default LAPD$^1$.

\subsection{Subspace Clustering}
\label{sec:subspace clustering}

We first evaluate LAPD on a linear subspace clustering task using synthetic data. Each dataset consists of a pair of intersecting unit hypercubes: for \(d = 1\) or \(d = 2\), the data represent intersecting unit vectors or squares, respectively. We fix the dataset size at \(n = 6{,}000\), with \(3{,}000\) points drawn from each subspace, and set the ambient dimension to \(D = 100\). To introduce noise, we add a uniform random variable in each coordinate from the interval $[-\frac{\sqrt{3}\,\sigma}{\sqrt{D-d}}, \; \frac{\sqrt{3}\,\sigma}{\sqrt{D-d}}]$
resulting in a global noise variance of \(\sigma^2\) and a global noise level \( \tau = \sqrt{3}\sigma\).
The experiment proceeds as follows: for \(d \in \{1,2\}\), we test intersection angles \(\Theta \in \left\{\tfrac{\pi}{2}, \tfrac{\pi}{4}, \tfrac{\pi}{6}\right\}\). For each pair \((d, \Theta)\), we vary \(\sigma\) from \(0\) to \(0.1\) in increments of \(0.01\), and so equivalently the global noise level \(\tau\) from \(0\) to \(0.1\sqrt{3}\) in increments of \(0.01\sqrt{3}\).

For LAPD, we record the clustering accuracy both from using the default simplex size parameter $e$ (LAPD-D) and from using manually tuned $e$ (LAPD-T) in Figure \ref{fig:SSC results} (all other parameters were set to the default values). We compare against SSC-ADMM \citep{elhamifar_sparse_2011} and EKSS \citep{Lipor_2020}, methods designed for subspace clustering, as well as LocPCA \citep{arias-castro_spectral_2017}, which projects neighborhoods of data points onto a tangent subspace, and PBC \citep{babaeian_multiple_2015}, a path-based method; we omit comparison with the deep learning method DCV \citep{wu2022deep} in this section since it fails even without noise. Note all comparison algorithms require the ground truth number of clusters $m$. To make the comparison fair, we provide $m$ to LAPD as well. We shall explore LAPD's capability of automatically learning the number of clusters in Subsection \ref{sec:manifold_clustering}.

\begin{figure}[tb]
     \centering
     \begin{subfigure}[b]{0.32\textwidth}
         \centering        \includegraphics[width=\textwidth]{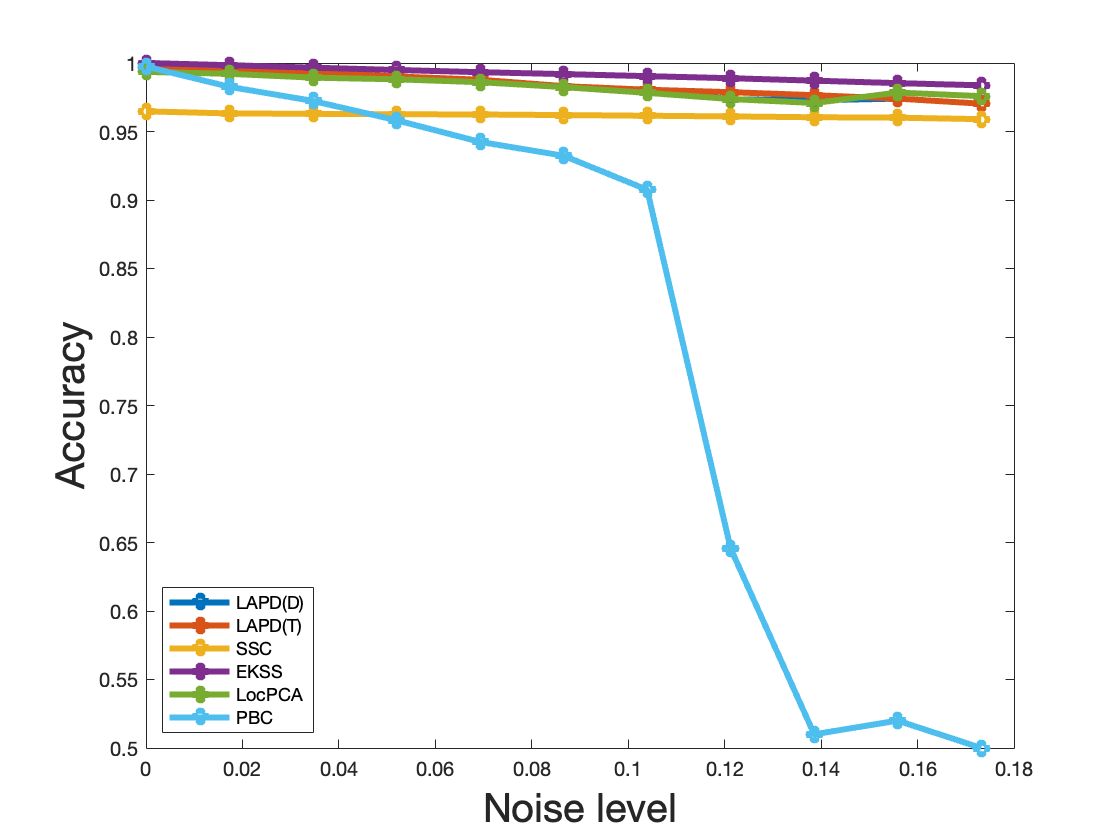}
         \caption{$d=1, \Theta=\frac{\pi}{2}$}
     \end{subfigure}
     \hfill
     \begin{subfigure}[b]{0.32\textwidth}
         \centering        \includegraphics[width=\textwidth]{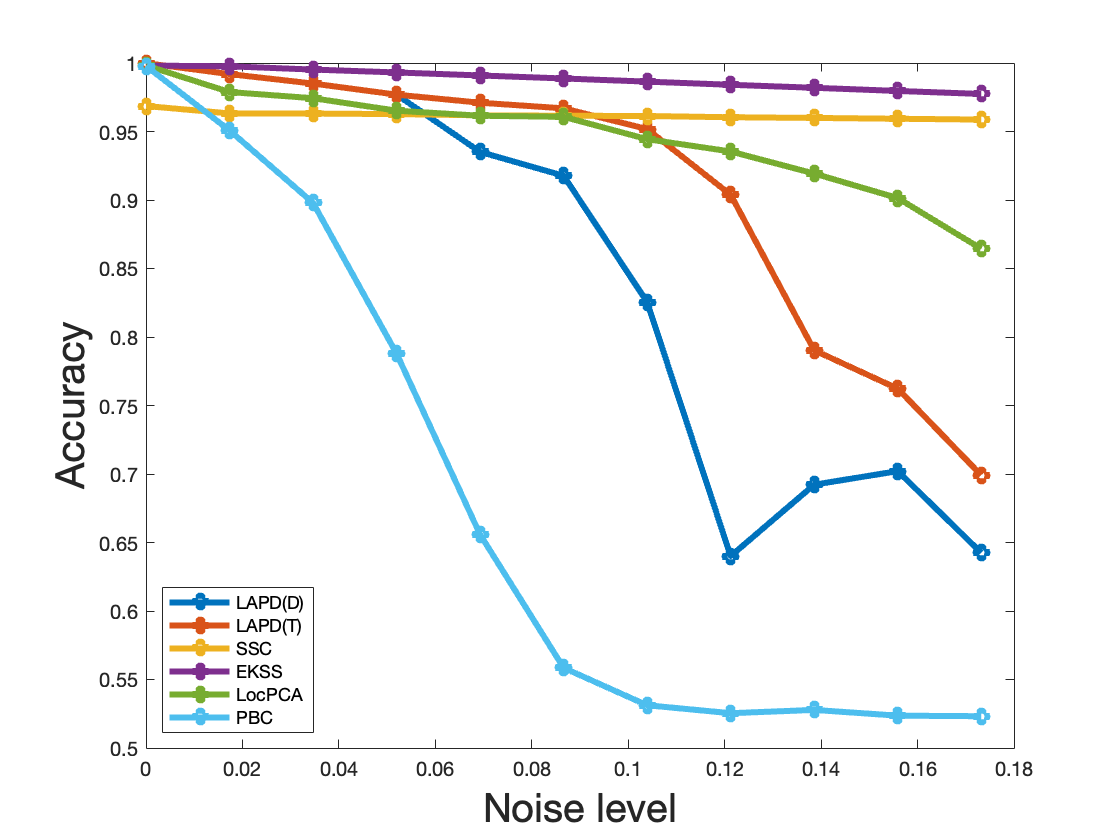}
         \caption{$d=1, \Theta=\frac{\pi}{4}$}
     \end{subfigure}
     \hfill
     \begin{subfigure}[b]{0.32\textwidth}
         \centering        \includegraphics[width=\textwidth]{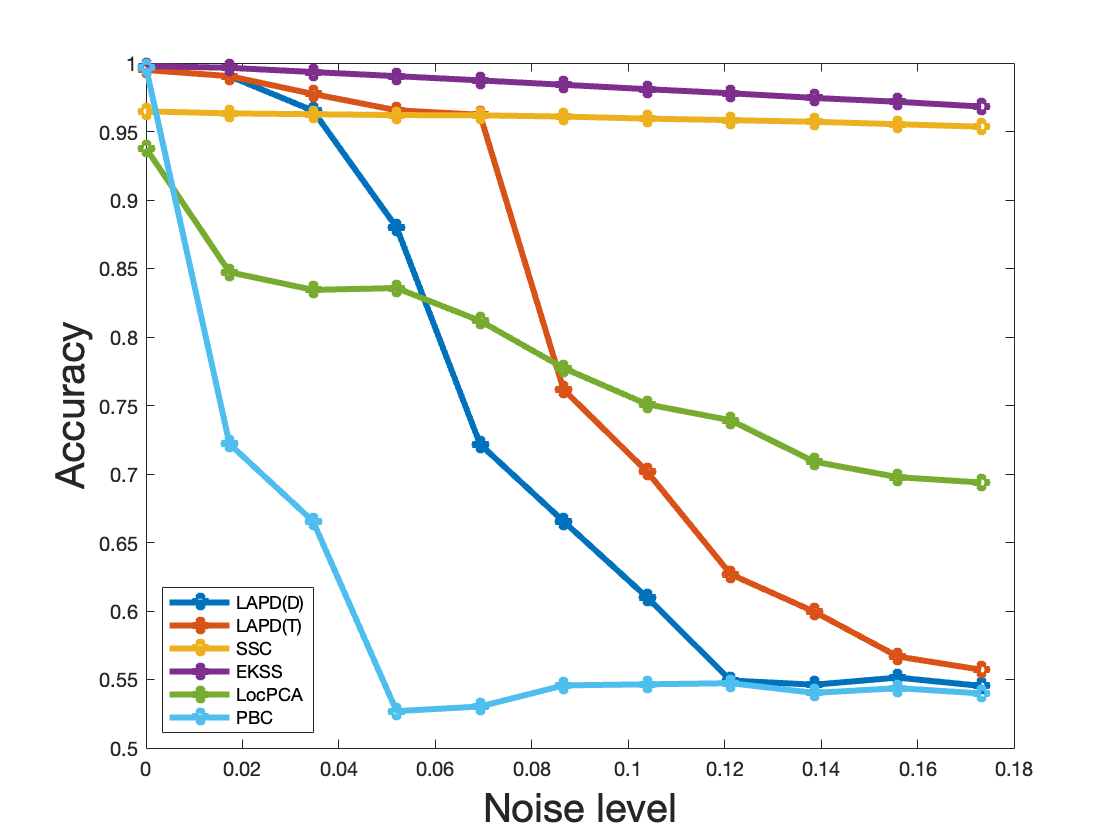}
         \caption{$d=1, \Theta=\frac{\pi}{6}$}
     \end{subfigure}
     \begin{subfigure}[b]{0.32\textwidth}
         \centering        \includegraphics[width=\textwidth]{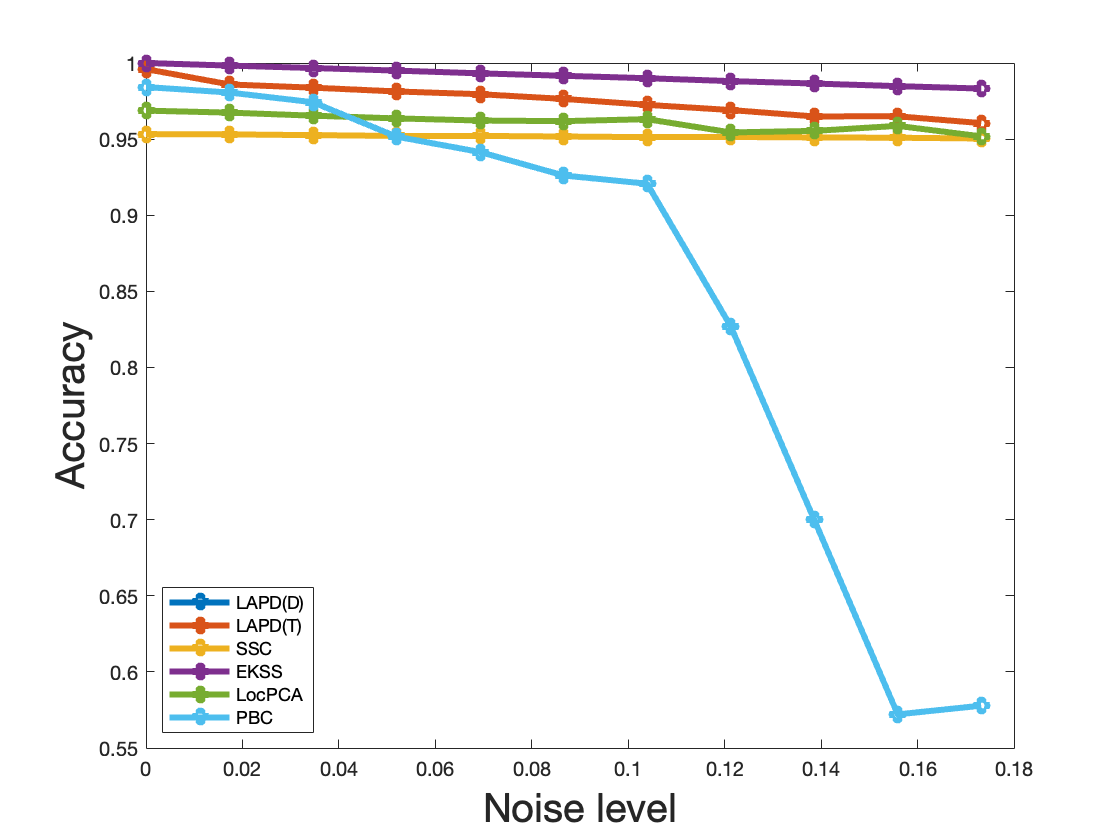}
         \caption{$d=2, \Theta=\frac{\pi}{2}$}
     \end{subfigure}
     \hfill
     \begin{subfigure}[b]{0.32\textwidth}
         \centering         \includegraphics[width=\textwidth]{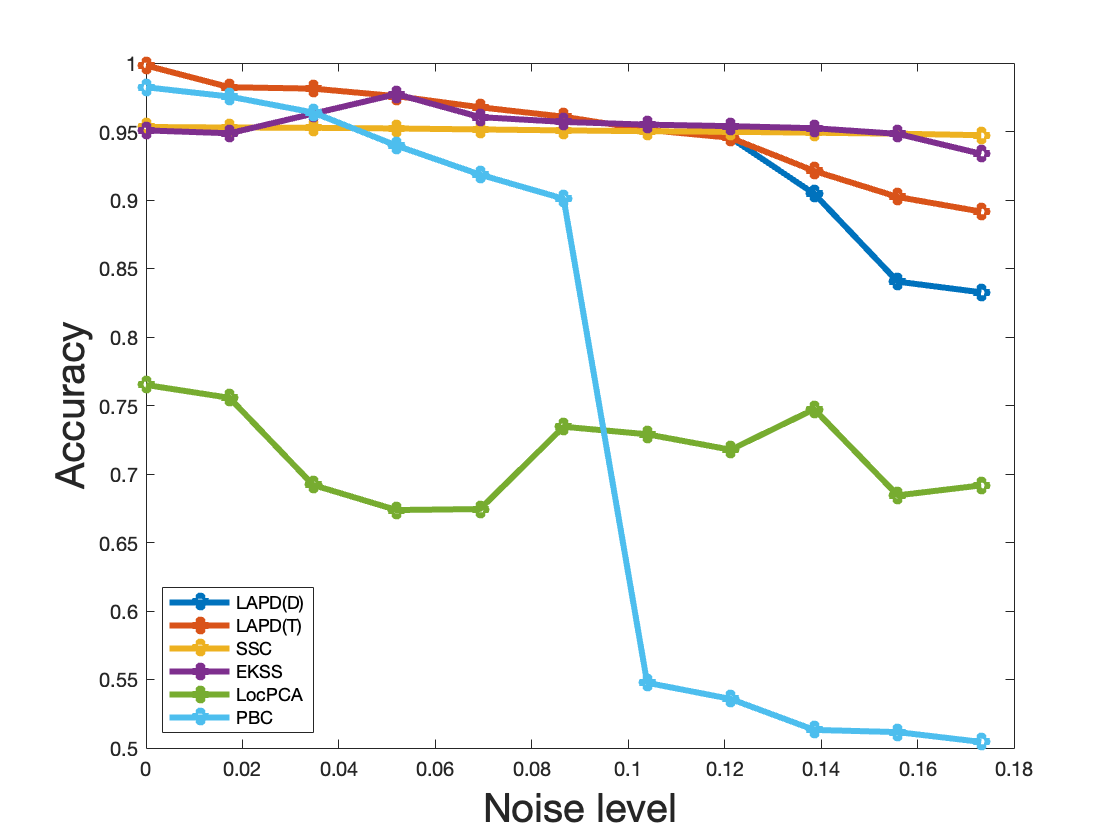}
         \caption{$d=2, \Theta=\frac{\pi}{4}$}
     \end{subfigure}
     \hfill
     \begin{subfigure}[b]{0.32\textwidth}
         \centering         \includegraphics[width=\textwidth]{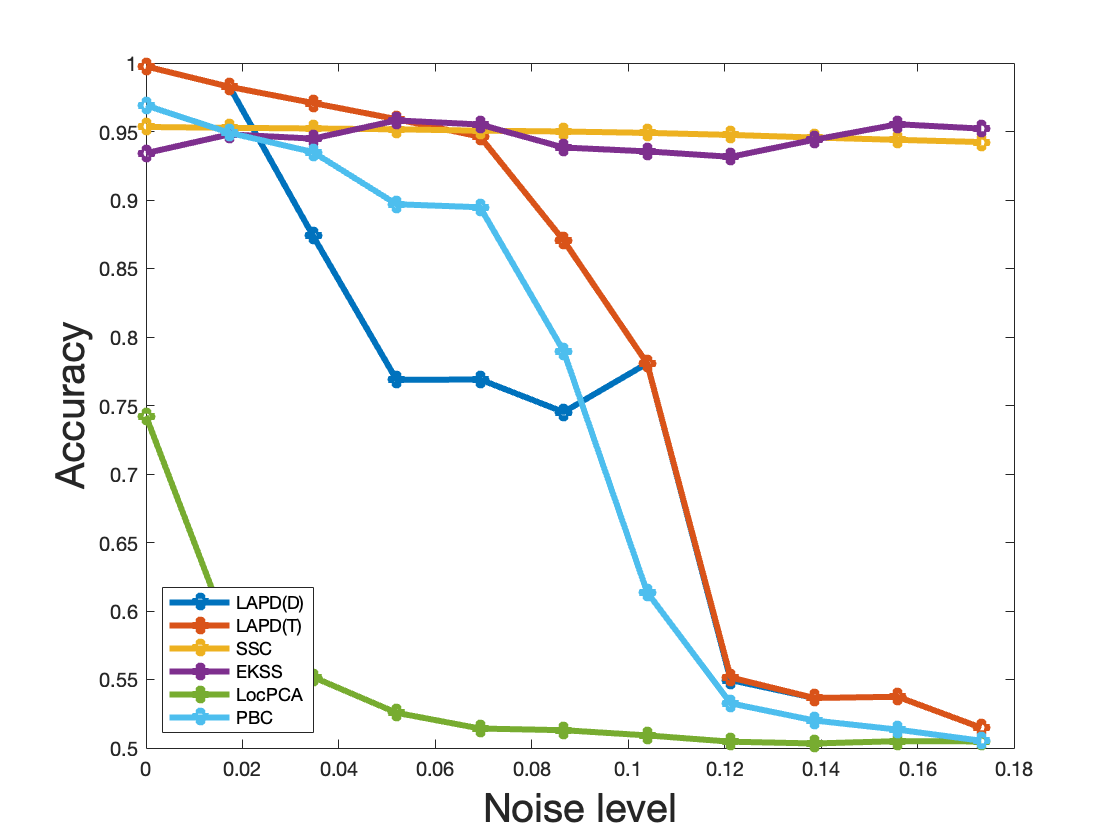}
         \caption{$d=2, \Theta=\frac{\pi}{6}$}
     \end{subfigure}
        \caption{Subspace clustering accuracy.}
        \label{fig:SSC results}
\end{figure}

Comparisons for the subspace clustering tasks are reported in Figure \ref{fig:SSC results}. As expected, SSC-ADMM and EKSS achieve the highest clustering accuracy in this context as they are specifically designed for subspace clustering. Among the manifold clustering methods, LAPD maintains the most reliable performance across various noise levels and intersection angles. In particular, LocPCA achieves higher clustering accuracy than LAPD for $d=1$ but performs poorly for $d=2$. LocPCA's deteriorating performance is observed for larger $d$'s as well, even with a large intersection angle of $\Theta = \frac{\pi}{2}$. PBC, another path based clustering method, is much less robust to noise and intersection angles. LAPD-D (all default parameters) performs quite well when the noise is moderate and intersection angle not too small; however manual tuning of the edge length parameter $e$ allows for LAPD-T to tolerate higher noise levels. For comparison, we remark that we manually tuned many parameters for all competing methods, and generally performance was sensitive to these parameters. Overall, these results support our theoretical result in \eqref{eq:LAPD gap} as we empirically observe that increasing $\tau$ and decreasing $\theta$ results in a more limited range of parameter values where the method can succeed, since the edge length $e$ must still remain a local construction.

We now examine how runtime (in seconds) varies with the sample size $n$ and the ambient dimension $D$.
These experiments use a dataset of two intersecting unit hypercubes with $d=2$. The intersection angle $\Theta$ is set to $\frac{\pi}{4}$ and $\sigma$ fixed at $0.05$ (noise level $\tau$ fixed at \(0.05\sqrt{3}\)). In theory, the computational complexities of the benchmark methods are as follows: \textsc{SSC-ADMM} is \(O(n^3)\), \textsc{EKSS} is \(O(n^2)\), and \textsc{PBC} is \(O(n \log n)\), assuming the number of source points is \(O(1)\). We did not find a formal complexity analysis for \textsc{LocPCA} in~\cite{arias-castro_spectral_2017}, but in our experiments, it proved highly scalable.

To investigate runtime dependence on \(n\), we fix \(D = 100\) and take a grid of \(n\) values ranging from 1000 to 35000. 
%\(\{1000, 2500, 5000, 7500, 10000, 15000, 20000, 25000, 30000, 35000\}\). 
As shown in Figure~\ref{subfig:runtime n}, \textsc{LAPD} demonstrates near-linear scalability with respect to \(n\). Note we denote by \(\text{LAPD(NE)}\) our proposed method \emph{not estimating} the intrinsic dimension \(d\) and noise \(\tau\), while \(\text{LAPD(E)}\) includes those estimation steps. 
\textsc{SSC} and \textsc{EKSS} appear to grow at least quadratically, whereas \textsc{LocPCA} and \textsc{PBC} display performance comparable to \textsc{LAPD}. 

To examine how runtime depends on \(D\), we fix the dataset size \(n = 6{,}000\) and take a grid of \(D\) values ranging from 25 to 2500.
%\(\{25, 50, 100, 500, 1000, 1500, 2000, 2500\}\). 
Figure~\ref{subfig:runtime d} plots the measured runtimes against \(D\). 
Overall, we observe a linear relationship between the LAPD runtime and \(D\). 
By contrast, \textsc{EKSS}, \textsc{LocPCA}, and \textsc{PBC} exhibit rapid growth as \(D\) increases, indicating they suffer from the ``curse of dimensionality.'' 
Among the compared methods, only \textsc{SSC} outperforms LAPD; however, its apparent independence from \(D\) arises because the \(O(n^3)\) term dominates the dependence on the ambient dimension, which is \(O(Dn^2)\). Taken together, these findings show that \textsc{LAPD} is highly scalable for large datasets in high-dimensional settings.

\begin{figure}[h]
    \centering
    \begin{subfigure}[b]{0.45\textwidth}
         \centering
        \includegraphics[width=\textwidth]{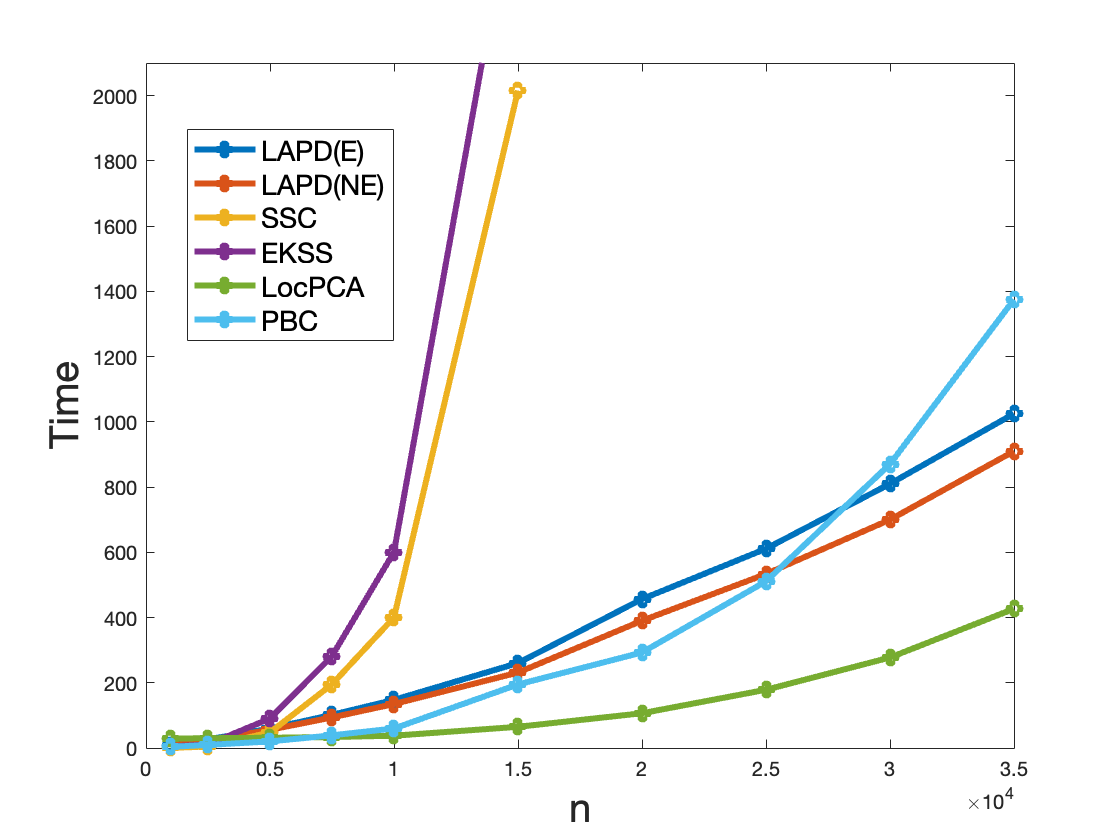}
         \caption{Run time w.r.t. $n$}
          \label{subfig:runtime n}
     \end{subfigure}
     \hfill
     \begin{subfigure}[b]{0.45\textwidth}
         \centering
         \includegraphics[width=\textwidth]{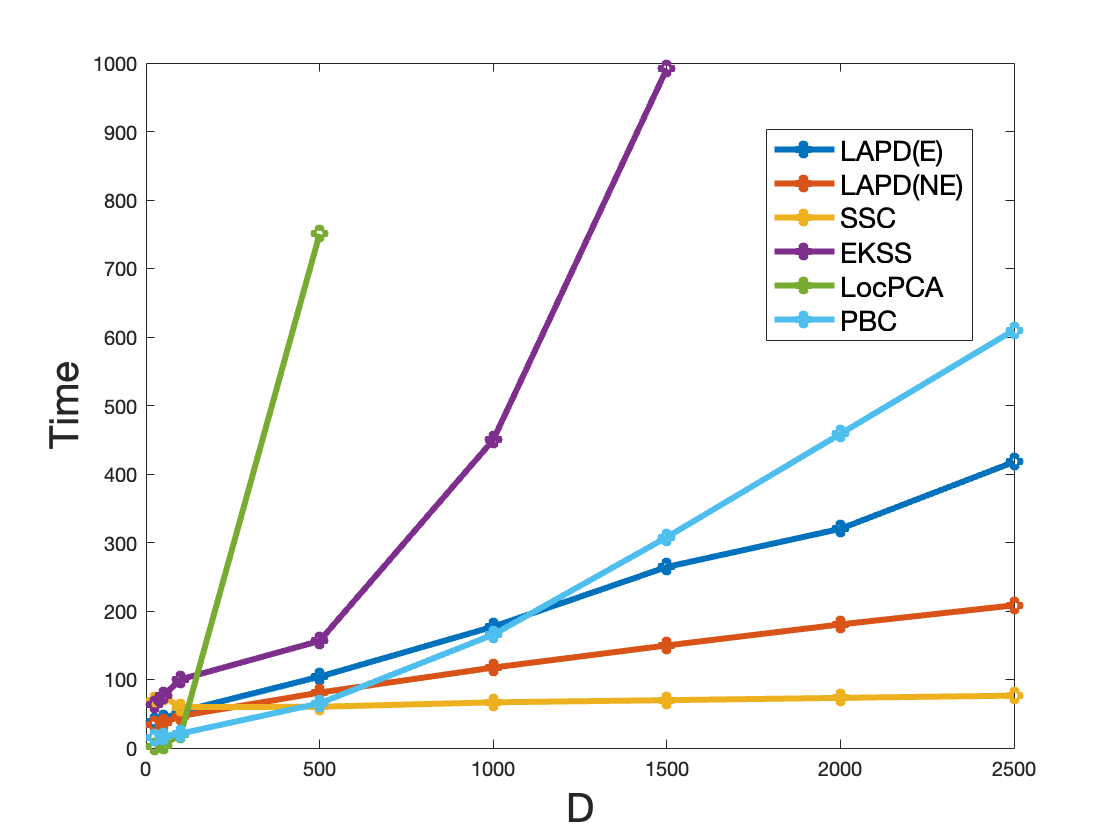}
         \caption{Run time w.r.t. $D$}
         \label{subfig:runtime d}
     \end{subfigure}
     \caption{Runtime (in seconds) of various clustering methods. (a) Runtime against the datasize $n$; (b) Runtime against the ambient dimension $D$. LAPD(NE) records the runtime of LAPD only, whereas LAPD(E) includes the time for estimating the intrinsic dimension $d$ and noise level $\tau$ using the algorithm from \cite{LITTLE2017504}.}
        \label{fig:Runtime}
\end{figure}

\subsection{Manifold Clustering}
\label{sec:manifold_clustering}

We next evaluate LAPD on manifold clustering using 11 synthetic datasets whose underlying manifolds may be nonlinear (see Figure~\ref{fig:nonlinear datasets}). 
The first four datasets have intrinsic dimension \(d = 1\), and the next four have \(d = 2\). 
We refer to the \(d = 1\) datasets as \emph{Dollar Sign} (DS), \emph{Olympic Rings} (OR), \emph{Three Curves} (TC), and \emph{Rose Circle} (RC), and the \(d = 2\) datasets as \emph{Two 2-Spheres} (2S), \emph{Two Triangles} (TT), \emph{Three Planes} (TP), and \emph{Swiss Roll} (SR).
To explore performance in even higher dimensions, we also run experiments on unit \(d\)-spheres (\(d \in \{3,4,5\}\)), referred to as 3S, 4S, and 5S for \emph{two 3-spheres}, \emph{two 4-spheres}, and \emph{two 5-spheres}, respectively.
We fix the ambient dimension at \(D = 100\), the dataset size at \(n = 6000\), and the overall noise level at \(\tau = 0.05\sqrt{3}\). Note that the noise is evenly distributed across the $D$ ambient dimensions and so only a small portion of the noise is visible in Figure \ref{fig:nonlinear datasets} 
We compare \textsc{LAPD} against the manifold clustering methods \textsc{LocPCA}, \textsc{PBC}, and \textsc{DCV} from the previous section, with clustering results shown in Table~\ref{table:manifold clustering result (1)}.  
For \textsc{LAPD}, we report clustering accuracy both when the ground truth number of clusters \(m\) is known (denoted as \(\textsc{LAPD}(m)\)) and when \(m\) is estimated by \textsc{LAPD} (denoted as \(\textsc{LAPD}(\hat{m})\)). 
The runtime remains the same in both cases.

Table~\ref{table:manifold clustering result (1)} shows that \textsc{LAPD} achieves the highest clustering accuracy for all datasets under default parameters and oracle \(m\). The only dataset where it performs poorly is TP, and in fact manual parameter tuning of $e$ can increase the accuracy of LAPD($m$) to .938 on the TP dataset. 
%and .004 for  but can be saved by manually resetting $e=.200$, from which the accuracy and standard deviation can be bumped to .938 and .004 for LAPD ($m$) and .788 and .154 for LAPD ($\hat{m}$) respectively. This again suggests that moderately increasing $e$ (from default $e=.125$) can alleviate the effect of ``bending'' caused by the noise, and hence enlarge the LAPD gap, when $e$ is not too constrained from above. 
Competing methods may succeed on simpler or more regular manifolds---such as those with lower curvature (DS, OR, TT), larger intersection angles (DS), or smoothly varying shapes (2S, TT, 3S, 4S, 5S)---but they struggle with more challenging cases. 
For instance, TP presents vastly different intersection angles \(\Theta_1 = \frac{\pi}{2}\) and \(\Theta_2 = \frac{\pi}{5}\), SR has continuously changing curvature, and RC features multiple clusters lying close to each other. 
Only \textsc{LAPD} reliably handles such complexities. \textsc{LocPCA}, in particular, experiences the same difficulty here as in subspace clustering: its performance drops when \(d = 2\) compared to \(d = 1\). 
Although \textsc{LocPCA} handles higher-dimensional spheres (3S, 4S, 5S) reasonably well, its behavior on more complex shapes warrants further study. 
\textsc{PBC} achieves an accuracy level similar to \textsc{LocPCA} but often exhibits larger standard deviations, suggesting less consistent cluster assignments. 
While the runtime of these other methods may increase more slowly with the intrinsic dimension \(d\), extensive parameter tuning is required for each, offsetting any practical advantage over \textsc{LAPD} for higher dimensions.

\begin{figure}
     \centering
     \begin{subfigure}[b]{0.2\textwidth}
         \centering         \includegraphics[width=\textwidth]{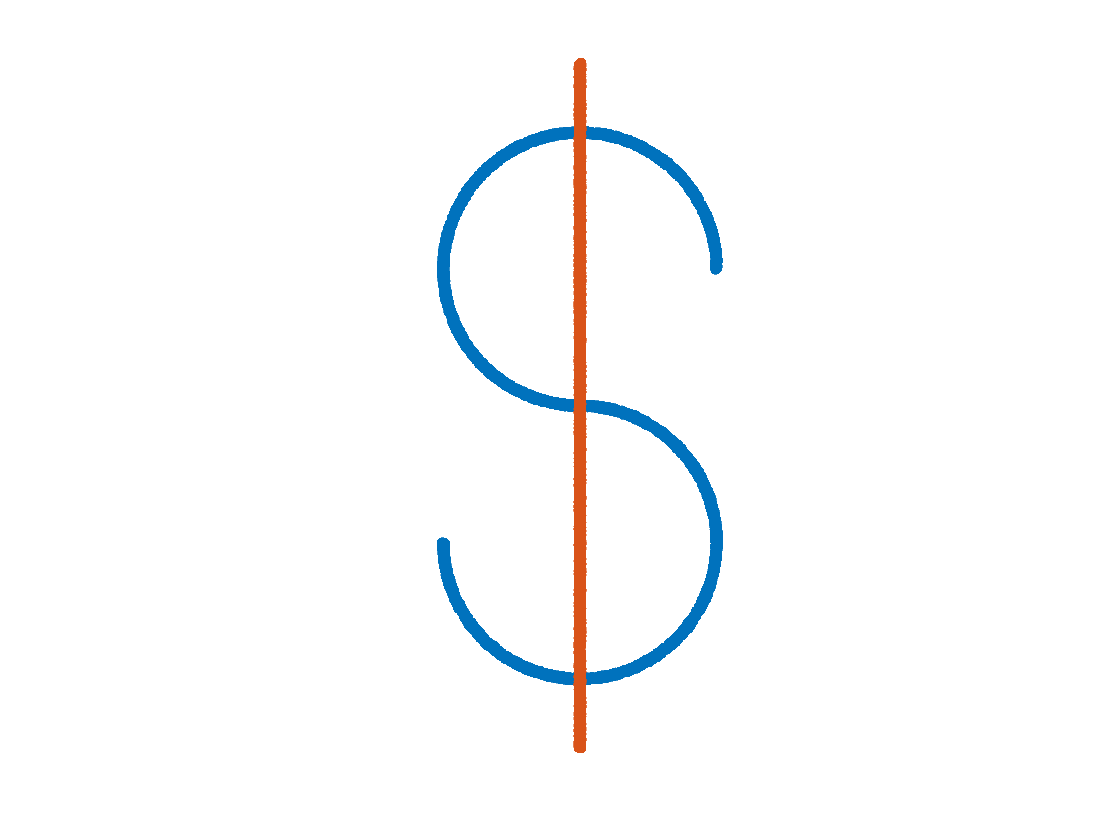}
         \caption{Dollar Sign}
         \label{fig:dollar sign}
     \end{subfigure}
     \hfill
     \begin{subfigure}[b]{0.2\textwidth}
         \centering
         \includegraphics[width=\textwidth]{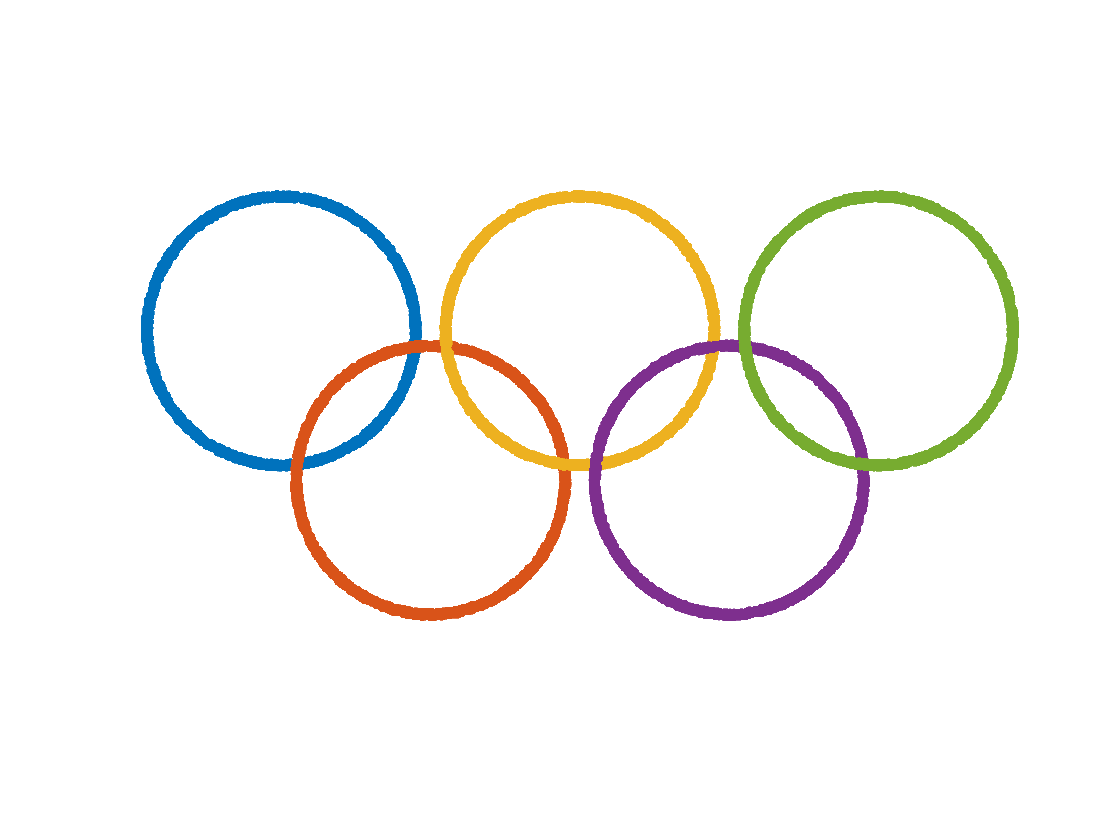}
         \caption{Olympic Rings}
         \label{fig:olympic rings}
     \end{subfigure}
     \hfill
      \begin{subfigure}[b]{0.2\textwidth}
         \centering
         \includegraphics[width=\textwidth]{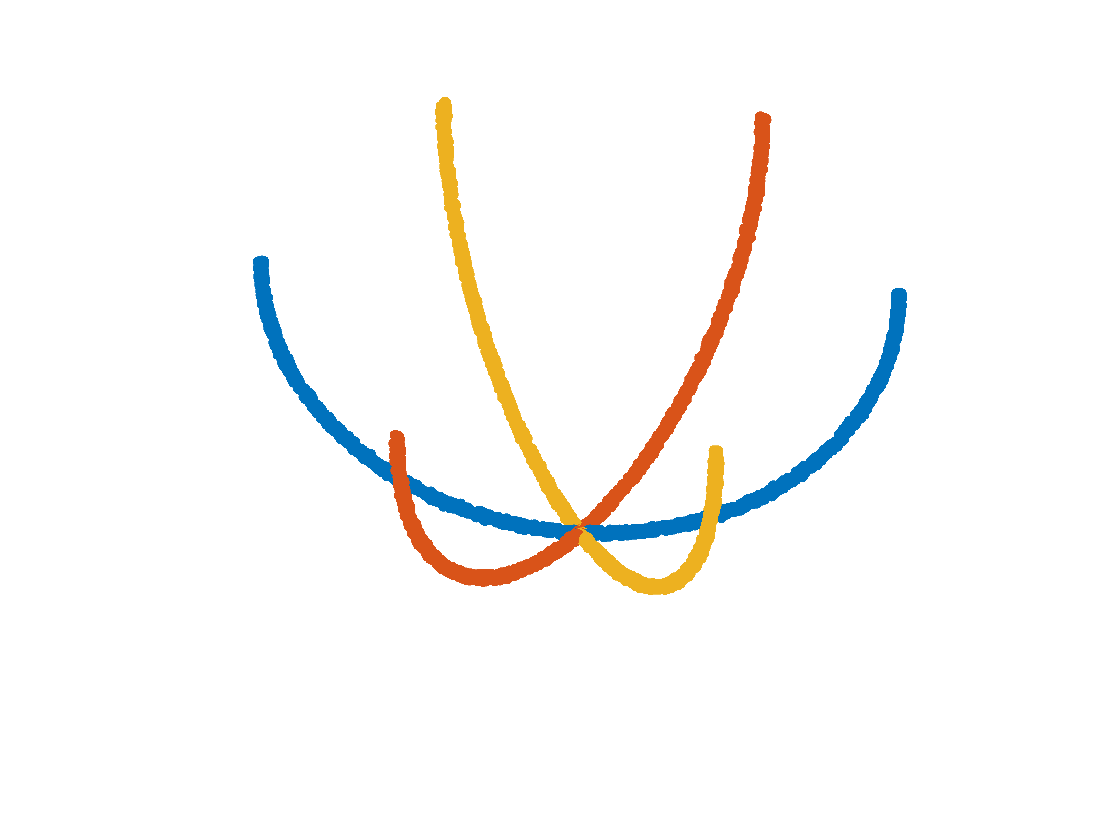}
         \caption{Three Curves}
         \label{fig:three curves}
     \end{subfigure}
     \hfill
     \begin{subfigure}[b]{0.2\textwidth}
         \centering
         \includegraphics[width=\textwidth]{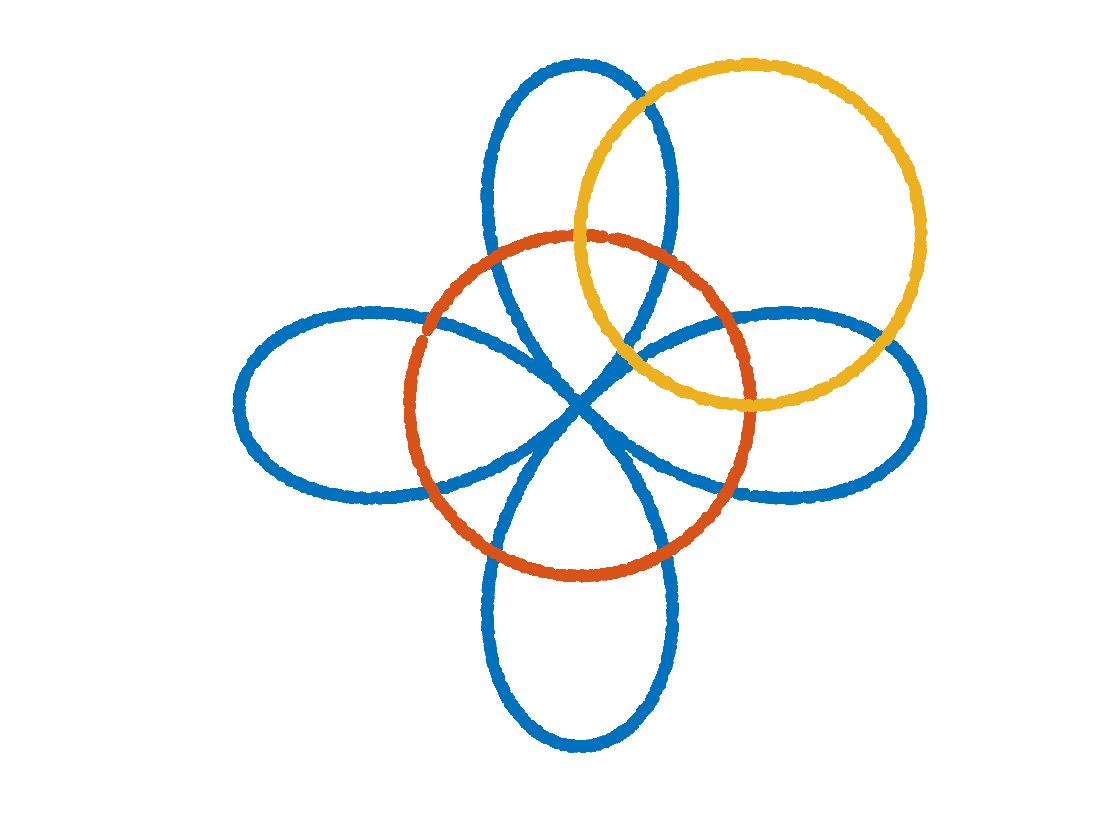}
         \caption{Rose Circles}
         \label{fig:Rose & Circle}
     \end{subfigure}
     \hfill
     \begin{subfigure}[b]{0.2\textwidth}
         \centering
         \includegraphics[width=\textwidth]{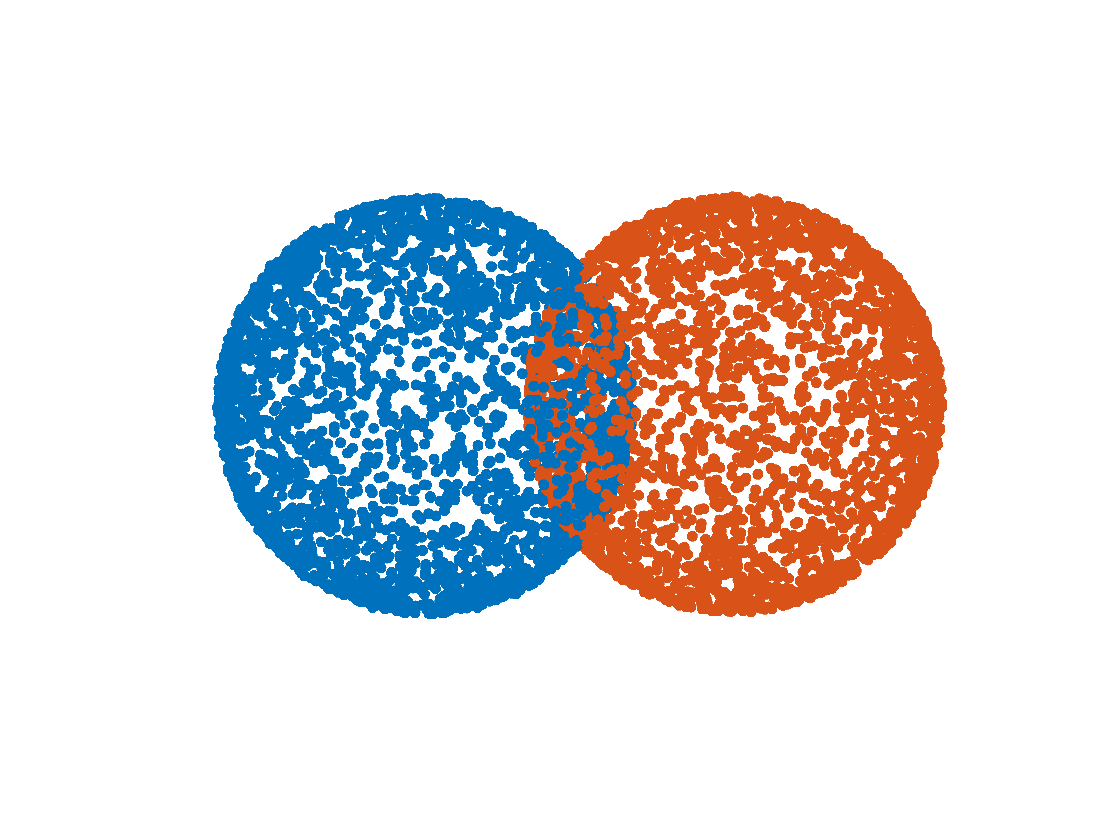}
         \caption{Two Spheres}
         \label{fig:two spheres}
     \end{subfigure}
     \hfill
     \begin{subfigure}[b]{0.2\textwidth}
         \centering
         \includegraphics[width=\textwidth]{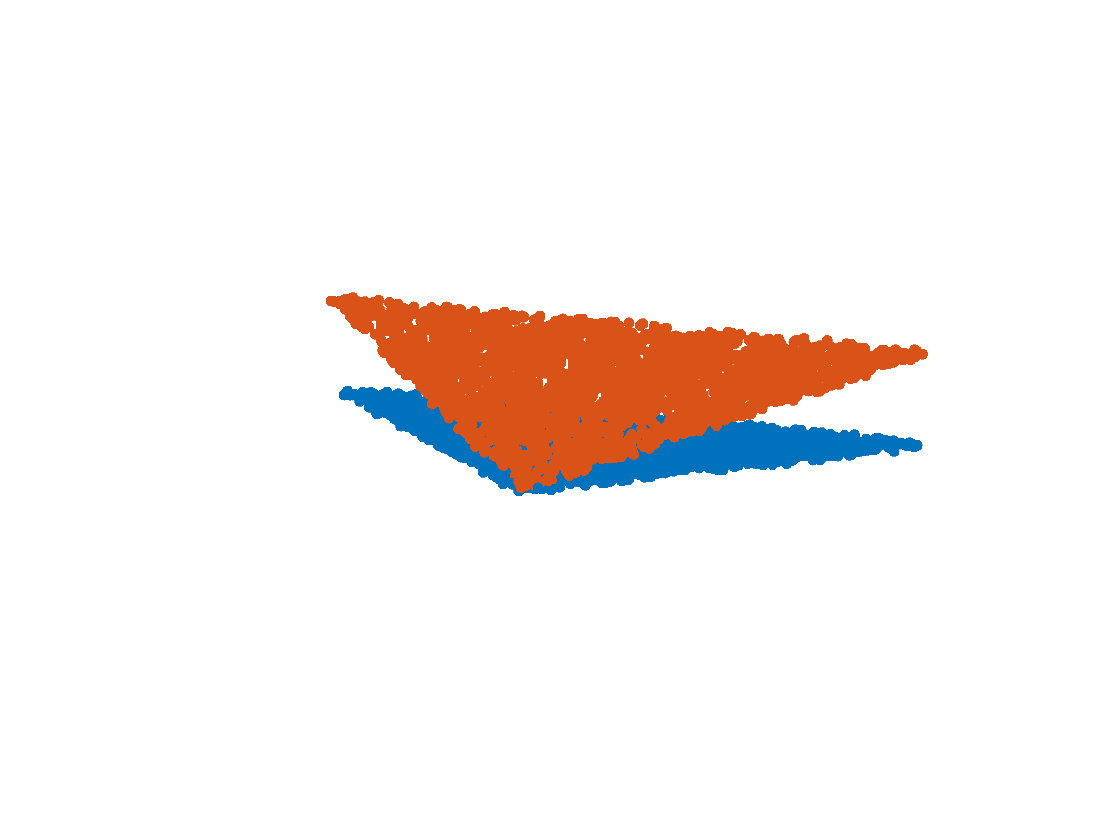}
         \caption{Two Triangles}
         \label{fig:two triangles}
     \end{subfigure}
     \hfill
     \begin{subfigure}[b]{0.2\textwidth}
         \centering
         \includegraphics[width=\textwidth]{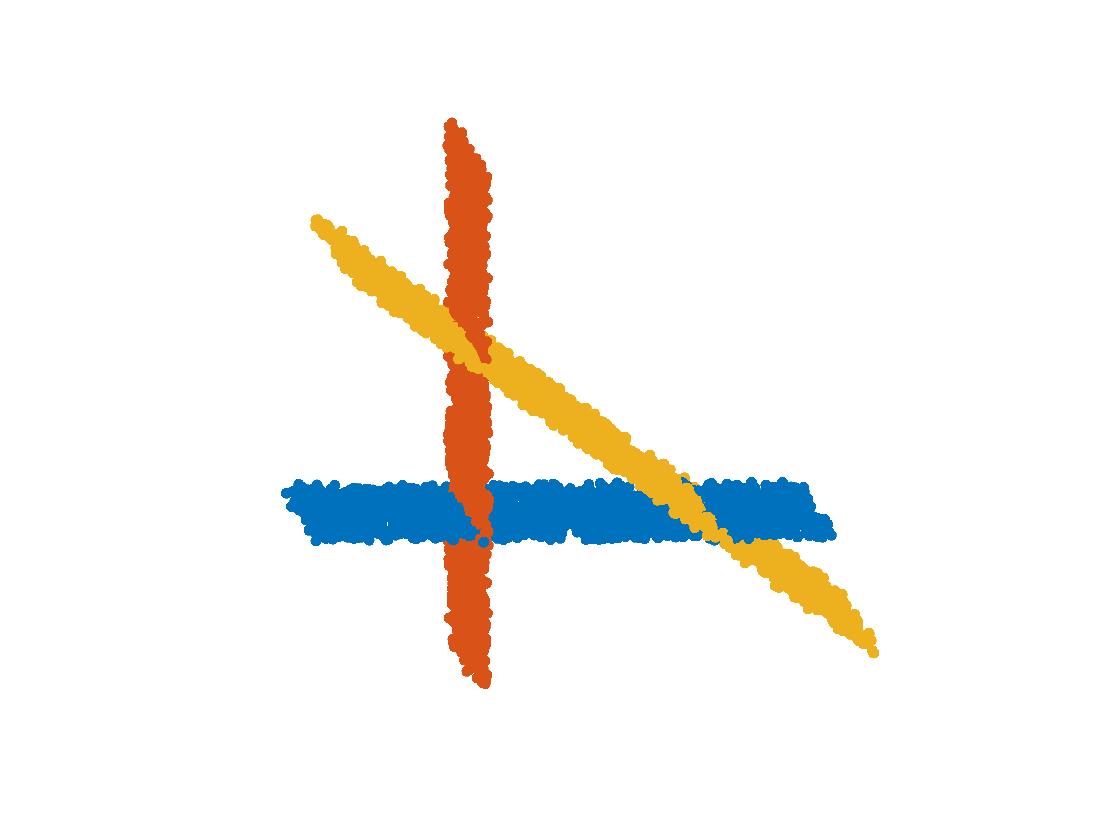}
         \caption{Three Planes}
         \label{fig:three planes}
     \end{subfigure}
     \hfill
     \begin{subfigure}[b]{0.2\textwidth}
         \centering
         \includegraphics[width=\textwidth]{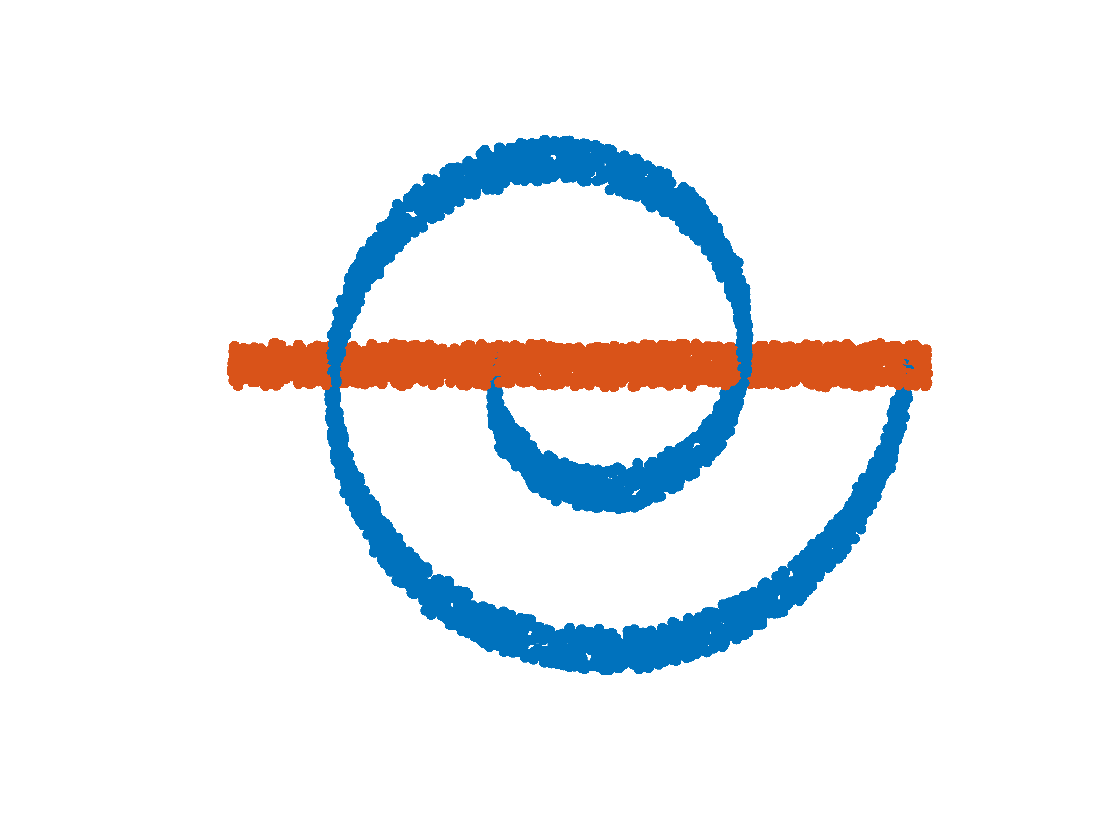}
         \caption{Swiss Roll}
         \label{fig:swiss roll}
     \end{subfigure}     
        \caption{Synthetic manifold learning datasets. Objects on the first row have intrinsic dimension $d=1$; objects on the second row have intrinsic dimension $d=2$. All the objects have ambient dimension $D=100$. For plotting purpose, the figures display the datasets' projection on the $d+1$ ($d+2$ for TC) dimensional space.}
        \label{fig:nonlinear datasets}
\end{figure}

\begin{table}[h]
 \begin{center}
\resizebox{\textwidth}{!}{
\begin{tabular}{|l||*{14}{c|} }
    \hline
\backslashbox[25mm]{Objects}{Methods} 
            & \multicolumn{3}{c|}{LAPD ($m$)}
            & \multicolumn{2}{c|}{LAPD ($\hat{m}$)}
            & \multicolumn{3}{c|}{LocPCA}
            & \multicolumn{3}{c|}{PBC}
            & \multicolumn{3}{c|}{DCV}\\
    \hline\hline
DS & .992 & .002 & 12.9 & .992 & .002 & .984 & .003 & 11.4 & .956 & .007 & 10.5 & .510 & .006 & 175.6 \\ \hline
OR & .982 & .002 & 11.4 & .982 & .002 & .881 & .043 & 54.6 & .838 & .121 & 7.0 & .597 & .007 & 213.7 \\ \hline
TC & .989 & .001 & 12.8 & .989 & .001 & .984 & .003 & 9.6  & .694 & .115 & 26.8 & .475 & .006 & 295.3\\ \hline
RC & .976 & .003 & 12.9 & .917 & .092 & .533 & .039 & 50.1 & .740 & .017 & 1.5 & .482 & .005 & 233.9 \\  \hline 
2S & .982 & .003 & 16.4 & .982 & .003 & .965 & .003 & 85.0 & .920 & .014 & 13.4 & .851 & .006 & 227.5 \\ \hline
TT & .988 & .003 & 35.4 & .988 & .003 & .547 & .050 & 38.0 & .980 & .009 & 6.9 & .516 & .004 & 234.6 \\ \hline
TP & .733 & .174 & 41.3 & .675 & .078 & .542 & .032 & 88.6 & .640 & .056 & 16.3 & .413 & .003 & 227.1 \\ \hline
SR & .975 & .002 & 26.7 & .975 & .002 & .700 & .064 & 39.4 & .803 & .114 & 15.8 & .608 & .005 & 234.8 \\ \hline
3S & .976 & .005 & 42.5 & .976 & .005  & .946 & .004 & 121.2 & .937 & .010 & 23.1 & .929 & .004 & 220.3\\ \hline
4S & .971 & .003 & 180.2 & .971 & .003 & .954 & .002 & 194.2 & .925 & .006 & 35.2 & .960 & .003 & 193.2\\ \hline
5S & .960 & .005 & 3352.9 & .960 & .005 & .963 & .002 & 204.5 & .941 & .006 & 34.0 & .973 & .002 & 247.7\\ \hline
\hline
\end{tabular}}
\end{center}
 \caption{Manifolds clustering results. The three sub-columns under each method record the average clustering accuracy, standard deviation, and runtime in seconds. LAPD($m$) denotes clustering with oracle cluster numbers $m$; LAPD($\hat{m}$) denotes when $m$ is estimated by LAPD. The two share about the same runtime. }
 \label{table:manifold clustering result (1)}
\end{table}

\subsection{Real-world Data}
\label{sec:real world}

\begin{table}[h]
  \begin{center}
    \resizebox{\textwidth}{!}{
    \begin{tabular}{|l||*{13}{c|} }
        \hline
        \backslashbox[15mm]{}{} 
                & \multicolumn{1}{c|}{LAPD$^1$}
                & \multicolumn{1}{c|}{LAPD$^2$}
                & \multicolumn{1}{c|}{time}
                & \multicolumn{1}{c|}{SSC}
                & \multicolumn{1}{c|}{time}
                & \multicolumn{1}{c|}{EKSS\footnote{We use the same data and parameters suggested in the original paper but could not reproduce the accuracy reported in the paper.}}
                & \multicolumn{1}{c|}{time}
                & \multicolumn{1}{c|}{LocPCA}
                & \multicolumn{1}{c|}{time}
                & \multicolumn{1}{c|}{PBC}
                & \multicolumn{1}{c|}{time}
                & \multicolumn{1}{c|}{DCV}
                & \multicolumn{1}{c|}{time}\\
        \hline\hline
    COIL [0:6,8:17,19] & .997 & .982 & 4.1 & .876 & 24.6 & .799 & 95.3 & .473 & 165.7 & .386 & 112.7 & .957 & 101.4  \\ \hline
    COIL [0:17,19] & .913 & .931 & 4.1 & .863 & 28.4 & .833 & 110.7 & .464 & 199.5 & .405 & 126.8 & .963 & 96.8\\ \hline
    COIL [0:19] & .857 & .904 & 4.8 & .836 & 31.2 & .806 & 132.8 & .360 & 180.9 & .437 & 134.2 & .900 & 104.6 \\  \hline\hline
    MNIST-T [0:7] & .952 & .961 & 84.9 & .737 & 403.8 & .869 & 4029.7 & .715 & 101.3 & .452 & 1658.0 & .967 & 400.6 \\ \hline
    MNIST-T [0:8] & .757 & .958 & 97.5 & .797 & 650.5 &.791 & 8752.5 & .557 & 387.6 & .414 & 3422.5 & .952 & 451.2\\ \hline
    %[0:8] & .702 & .935 & 144.0 & .688 & 1283.3 &.779 & 11,023.4 & .506 & 503.6 & .387 & 4457.1 & .940 & 432.4 \\ \hline
    MNIST-T [0:9] & .688 & .795 & 112.1 & .472 & 1081.3 & .685 & 13,675.7 & .489 & 604.0 & .358 & 5521.8 & .855 & 544.4 \\ \hline\hline
    %[0:5] & .950 & .978 & 796.4 & NA & NA & NA & NA & & \\ \hline
    MNIST-F [0:6] & .948 & .975 & 836.2 & NA & NA & NA & NA & .618 & 1320.4 & NA & NA & NA & NA \\ \hline %.385 & 612.4 \\ \hline
    %[0:7] & .782 & .971 & 1084.3 & NA & NA & NA & NA & .574 & 1548.1 \\ \hline
    MNIST-F [0:8] & .751 & .957 & 1204.9 & NA & NA & NA & NA & .562 & 1803.6 & NA & NA & NA & NA \\ \hline
    MNIST-F [0:9] & .718 & .859 & 1338.4 & NA & NA & NA & NA & .538 & 2095.2 & NA & NA & NA & NA \\ \hline\hline
    USPS [0:6] & .944 & .984 & 40.7 & .665 & 356.5 & .823 & 347.8 & .653 & 1.41 & .440 & 64.6 & .980 & 214.5 \\ \hline
    %[0:6] & .845 & .973 & 59.9 & .528 & 345.4 & .767 & 642.3 & .640 & 32.0 \\ \hline
    USPS [0:6,8:9] & .724 & .942 & 77.1 & .679 & 1037.8 & .782 & 573.4 & .635 & 55.5 & .424 & 13.3 & .967 & 300.7 \\ \hline
    USPS [0:9] & .748 & .935 & 87.5 & .774 & 1400.4 & .778 & 1218.2 & .583 & 63.7 & .378 & 70.0 & .950 & 340.3 \\ \hline\hline
    \end{tabular}
    }
  \end{center}
  \caption{Clustering accuracy and runtime on real-world datasets. Four sections are each for COIL20, MNIST-test, MNIST-full, and USPS in order. NA indicates the method either encounters an out-of-memory issue or consumes a very long runtime on that particular dataset. For EKSS, we did not incorporate the pre-processing steps so our results are different from those reported in their original paper \citep{Lipor_2020}. }
  \label{table:real-world result}
\end{table}

Finally, we evaluate LAPD on four widely used real-world datasets: COIL20 \citep{nene1996columbia}, USPS \citep{uspsdataset}, MNIST-test \citep{deng2012mnist}, and MNIST-full \citep{deng2012mnist}. For COIL20, we set $d=1$ as the intrinsic dimension is clear from the data construction. For the remaining data sets we use $d=2$, which generally yielded good performance. These datasets are popular in subspace clustering \citep{pengDSC2020}, but their underlying manifolds generally have low curvature, and most classes are well-separated \citep{wu2022deep, wu2022generalized, McConville2021}. Thus, these benchmarks do not fully expose the advantages of our geometry-based approach, which is designed to handle more pronounced manifold intersections. 

Despite the overall separability of most classes, certain subsets do have strong similarities. For instance, COIL20 classes 2, 5, and 18 (cars) are nearly identical, and some MNIST digit classes overlap (e.g., 4, 7, and 9). To give a more nuanced view, for each real-world dataset we analyze 3 subsets: the first uses classes that are relatively distinct, the second introduces more difficult classes, and the third includes all classes. Detailed results appear in Table \ref{table:real-world result}.

A few observations stand out. First of all, LAPD\textsuperscript{2} typically outperforms LAPD\textsuperscript{1}; on these datasets, utilizing the two-sided weight \eqref{eq:two way weight} enhances within-class connections without substantially increasing inter-class connections, but we emphasize this was not the case for the synthetic data sets analyzed in Sections \ref{sec:subspace clustering} and \ref{sec:manifold_clustering}. Secondly, LocPCA and PBC generally yield lower accuracy than LAPD and can be sensitive to parameter tuning. Interestingly, DCV often matches LAPD’s high accuracy on these well-separated data sets, but the method fails on the majority of the synthetic data in Section \ref{sec:manifold_clustering} where manifolds intersect more substantially. Moreover, DCV’s deep-learning framework has many parameters, which can be challenging to optimize in an unsupervised setting. 
Finally, we note EKSS requires a large ensemble size (up to 1{,}000), additional preprocessing (e.g., whitening for COIL20), and carefully chosen parameters (e.g., intrinsic dimensions), and these factors can make its workflow more complex.
Overall, LAPD provides consistently strong performance on these benchmarks with relatively few parameters, reinforcing its utility as a practical method for manifold clustering.

\section{Conclusion}
\label{sec:Conclusion}
We propose and investigate a new algorithm for MMC, using angles between $d$-simplices as an ingredient for computing with the LAPD metric. Highlights of the algorithm include: (i) it works extremely well for clustering intersecting manifolds; (ii) it works with nonlinear manifolds and is robust to noise; (iii) it can generally give an estimate of the number of manifolds; and (iv) it is scalable with quasi-linear computational complexity. Our results demonstrate that LAPD is competitive with or superior to alternative MMC methods on both synthetic and real-world datasets. 

In future work we hope to extend the method to collections of manifolds with varying intrinsic dimension; such an extension would be non-trivial but potentially possible through an iterative procedure. 
In addition, in this article we have explored a completely angle-based path distance (LAPD) instead of a density-based path distance (LLPD). In practice, it could be helpful to define a metric combining both of these components, since as observed in the real data experiments class intersections typically consist of low-density regions.

% Acknowledgements and Disclosure of Funding should go at the end, before appendices and references

\acks{AL and HC were partially supported by NSF DMS 2309570 and NSF DMS CAREER 2441153. AN and HC were partially supported by NSF DMS-1848508, AFOSR FA9550-20-1-0338, and AFOSR FA9550-23-1-0749. All authors were partially supported by NSF RTG DMS 2136198.}

% Manual newpage inserted to improve layout of sample file - not
% needed in general before appendices/bibliography.

\newpage

\appendix

\section{Proof of Theorem \ref{thm:within LAPD upper bound}}
\label{pf:within LAPD upper bound}

\first*

Recall our setting that the noise polluted data $X=\{x_i\}_{i \in [n]}$ is sampled from a tubular region $T_\tau(\M) = \cup_{j \in [m]} T_{\tau}(\M_j)$ around the set of underlying manifolds. Note that when $\tau=0$, the result is trivial since wLAPD $=0$; we thus assume $\tau>0$. We first bound the wLAPD for a fixed manifold component $\M_1$:
\begin{align*}
    \text{wLAPD}(\M_1) &= \max_{\Delta_+,\Delta_- \in S(\M_1)} \text{LAPD}(\Delta_+, \Delta_-) \, ,
\end{align*}
with $S(\M_1)$ %S(T_\tau(\M_1)) 
denoting the set of \textit{valid, pure} simplices of $\M_1$ constructed from noisy samples from $T_\tau(\M_1)$. Refer to Section \ref{sec:methodology} for the definition of valid and pure simplices. For the rest of the section, since we are focusing on the fixed manifold $\M_1$, we are only dealing with the pure simplices of $\M_1$ and we will not specify this fact as long as the context is clear. We will also write the dihedral angle $\theta_{ij}$ for $\theta(\Delta_i, \Delta_j)$ or $\theta_{\pm}$ for $\theta(\Delta_-,\Delta_+)$. 

We derive a (nonrandom) upper bound by replacing the max over the discrete set $S\left(\M_1\right)$ with a max over $V(\M_1)$, the set of all $(e,q)$-simplices (equation \ref{eq:simplex quality constraint}) which can be constructed from points in a $D$-dimensional tube of radius $\tau$ about $\M_1$. Specifically
\small{\begin{align*}
    V(\M_1) &=\{ \Delta=\{x_1,\ldots,x_{d+1}\} : \|x_i - P_{\M_1}x_i\| \leq \tau \, , \, e \leq \| x_i - x_j \| \leq \frac{e}{q}\, \forall\, i,j\in[d+1],\, i<j\},   
\end{align*}}where $P_{\M_1}$ is projection onto $\M_1$; note $S(\M_1)\subseteq V(\M_1)$ because $V(\M_1)$ contains \textit{all} valid simplices, while $S(\M_1)$ contains just those arising from our discrete samples. 
We prove that Theorem \ref{thm:within LAPD upper bound} holds for both weight constructions considered in our paper, i.e. for $W^1_S(\Delta_i,\Delta_j)=\pi - \theta_{ij}$ when $\theta_{ij} \geq \frac{\pi}{2}$ (else not connected; see \eqref{eq:simplex graph weight}) and for $W^2_S(\Delta_i,\Delta_j)=\min\{\pi -\theta_{ij}, \theta_{ij}\} $ (see \eqref{eq:two way weight}). 
Note that since $\mathcal{G}_S$ is connected, all distances are finite, and: 
\begin{align*}
    \text{wLAPD}(\M_1, W_S^1) &\leq \max_{\Delta_i,\Delta_j \in V(\M_1),\, \theta_{ij} \geq \frac{\pi}{2} } \, \pi - \theta_{ij}  := (A) \\
    \text{wLAPD}(\M_1, W_S^2) &\leq \max_{\Delta_i,\Delta_j \in V(\M_1)} \min\{\pi -\theta_{ij}, \theta_{ij}\} := (B)
\end{align*}
Since $(A) \leq (B)$, $\text{wLAPD}(\M_1) \leq (B)$ under both weight constructions, and it suffices to bound (B). Letting $V(\M_1,D)$ denote the dependence on the ambient dimension of the simplex set construction, we first establish that (B) is independent of the ambient dimension $D$. Specifically, since $\Delta_i,\Delta_j$ must always live in a $(d+1)$-dimensional space by construction, we have:
\small{\begin{align*}
    \max_{\Delta_i,\Delta_j \in V(\M_1,D)} \min\{\pi -\theta_{ij}, \theta_{ij}\} 
    &= \max_{\substack{\text{subspaces } S,\\ \text{dim}(S)=d+1}} \max_{\Delta_i,\Delta_j \in V(\M_1,D)\cap S} \min\{\pi -\theta_{ij}, \theta_{ij}\} \\
    &= \max_{\Delta_i,\Delta_j \in V(\M_1,d+1)} \min\{\pi -\theta_{ij}, \theta_{ij}\} \, .
\end{align*}}
This is verified with numerical results from optimization in Figure \ref{fig:wLAPD bound vs D}. Thus moving forward without loss of generality we can consider $D=d+1$. 

The proof consists of the following steps:
\begin{enumerate}
    \item Assuming $q=1$, i.e. all simplices being regular, the optimal configuration of a pair of simplices which generates $(B)$ occurs when the shared face lies on one side of the boundary of $T_{\tau}(\M_1)$ and the two apex vertices lie on the opposite side of the boundary; see Subsubsection \ref{ss: largest configuration}. 
    \item LAPD upper bound corresponding to this configuration is $(B)=\sqrt{\frac{32d}{d+1}}\frac{\tau}{e} \sim O(\frac{\tau}{e})$; see Subsubsection \ref{ss: largest wLAPD}. 
    \item When $q\ne 1$, the optimal configuration still occurs when this configuration happens, and thus $(B)$ will depend on $q$ in a continuous fashion; we provide numerical evidence supporting this claim in Subsubsection \ref{ss: nonregular simplices}. Analytically computing the exact dependence on $q$ is difficult and not particularly insightful, but clearly we still have $(B) =O\left( \frac{\tau}{e} \right)$. 
\end{enumerate}
Finally, since an identical argument shows $\text{wLAPD}(\M_2)=O\left( \frac{\tau}{e} \right)$, we can conclude $\text{wLAPD}=O\left( \frac{\tau}{e} \right)$, which proves Theorem \ref{thm:within LAPD upper bound}.

\subsubsection{Optimal simplex configuration}
\label{ss: largest configuration}

We denote an arbitrary pair of adjacent $d$-simplices in a $(d+1)$-dimensional ambient space as $\Delta_-=[x_1, \dots, x_{d}, x_-]$ and $\Delta_+=[x_1, \dots, x_{d}, x_+]$ with $x_i, x_\pm \in \mathbb{R}^{d+1}$ for $i \in [d]$, i.e. the first $d$ vertices $[x_1, \dots, x_d]$ denote the shared face and $x_\pm$ denote the apex vertices from the pair of adjacent simplices $\Delta_\pm$. We choose a coordinate system so that $\M_1 \subseteq \text{span}\{e_1, \ldots, e_d\}$, i.e. the manifold lives in the first $d$ dimensions and the noise in dimension $d+1$, so that $|x_{i}^{(d+1)}| \leq \tau$ and similarly for $x_{\pm}$. The following proposition establishes the optimal configuration generating $(B)$ occurs when the shared face lies on the boundary of the tubular noise region and the two non-shared apex vertices lie on the other side of the boundary.

\begin{proposition}
\label{prop:largest within LAPD orientation}
    Assume $q=1$ and let $\Delta_+, \Delta_-\in V(\M_1,d+1)$ be two adjacent simplices maximizing $B(\Delta_+,\Delta_- )=\max_{\Delta_+, \Delta_-\in V(\M_1, d+1)}\min\{\pi-\theta_{\pm},\theta_{\pm}\}$. Then either
    \begin{equation*}
        x_i^{(d+1)} = -\tau,\, 1 \leq i \leq d \;\; \text{and}\;\; x_\pm^{(d+1)} =\tau
    \end{equation*}
    or 
    \begin{equation*}
        x_i^{(d+1)} = \tau,\, 1 \leq i \leq d \;\; \text{and} \;\; x_\pm^{(d+1)} = -\tau. 
    \end{equation*}
\end{proposition}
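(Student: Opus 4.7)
The plan is a constrained geometric optimization. With $q = 1$ both simplices are regular with common edge length $e$, so they share a regular $(d-1)$-face $F = [x_1, \ldots, x_d]$ and each apex $x_\pm$ lies on a circle of fixed radius $h = e\sqrt{(d+1)/(2d)}$ inside the unique $2$-dimensional affine plane $\Pi$ through the centroid $c$ of $F$ and perpendicular to $\mathrm{span}(F)$. Writing $x_\pm = c + h u_\pm$ with unit vectors $u_\pm \in \Pi$, one has $\cos \theta_\pm = u_+ \cdot u_-$, so the dihedral angle is fully determined by the angular positions of $u_+, u_-$ on this circle.

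Next I would translate the noise constraints $|x_i^{(d+1)}|, |x_\pm^{(d+1)}| \leq \tau$ into this parametrization. The heights of the face vertices, together with the mutual-distance constraint $\|x_i - x_j\| = e$, bound the tilt of $F$ and the centroid height $c^{(d+1)}$; the apex heights reduce to $x_\pm^{(d+1)} = c^{(d+1)} + h \langle u_\pm, e_{d+1}\rangle$, confining each $u_\pm$ to two arcs in $\Pi$ symmetric about the horizontal direction. To maximize $\min\{\theta_\pm, \pi - \theta_\pm\}$ one wants $\theta_\pm$ as far from $\{0, \pi\}$ as possible, which is achieved by placing $u_+$ and $u_-$ at outer endpoints of the two distinct arcs (same vertical sign, opposite horizontal sign) while simultaneously pushing the face vertices to the opposite vertical extreme so that the inner product $\langle u_\pm, e_{d+1}\rangle$ can be as large as possible. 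This selection yields precisely the two symmetric configurations in the statement, related by the reflection $x^{(d+1)} \mapsto -x^{(d+1)}$.

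The final step is to establish rigorously that every inequality constraint is active at the optimum. The plan is a first-order perturbation argument: reduce to the finite-dimensional parameter space of feasible configurations (face orientation, centroid height, and the angular positions $\beta_\pm$ of $u_\pm$), and show that at any strictly interior configuration there is an admissible direction along which the objective is strictly increasing. Compactness of the feasible set ensures a maximizer exists, and the first-order conditions together with reflection symmetry then pin it down to one of the two listed forms.

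The main obstacle is that the regularity constraint $q = 1$ couples the vertices non-trivially, so one cannot perturb a single coordinate $x_i^{(d+1)}$ independently while preserving all edge lengths. The cleanest way around this is to work entirely in the intrinsic four-parameter description (face tilt direction, $c^{(d+1)}$, $\beta_+$, $\beta_-$), which already encodes the regularity constraint, and to check that the dependence of $\theta_\pm$ on each of these parameters is monotone over a box-like admissible region. The problem then reduces to a transparent box-constrained optimization whose extrema sit at the corners, precisely the configurations described in the proposition.
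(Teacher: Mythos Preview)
Your approach is correct in spirit and genuinely different from the paper's. The paper proceeds by brute-force coordinate computation: for $d=1$ it parametrizes the two edges in polar coordinates, for $d=2$ it introduces an explicit tilt angle $\phi$ for the shared edge and rotation angles $\theta_\pm$ for the apexes, and in each case splits into the two regimes $\theta \in [0,\pi/2]$ and $\theta \in [\pi/2,\pi]$, solving the resulting one-variable $\arcsin$ optimizations by hand. For $d\geq 3$ the paper does \emph{not} give a proof; it states that the analytical derivation is ``straightforward but cumbersome'' and defers to numerical optimization. Your intrinsic parametrization via the centroid $c$, the $2$-plane $\Pi = (\mathrm{span}\,F)^\perp$, and the unit vectors $u_\pm$ is cleaner and, if carried through, would actually yield a dimension-independent argument that the paper lacks.

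One point deserves more care than your sketch gives it. The phrase ``face tilt direction'' suggests a single scalar, but for $d\geq 3$ the orientation of the regular $(d-1)$-face inside $\R^{d+1}$ has more degrees of freedom than that. What makes your reduction work is the orthogonal splitting $e_{d+1} = w_F + w_\Pi$ with $w_F \in \mathrm{span}(F-c)$ and $w_\Pi \in \Pi$: the face-vertex height constraints depend only on $w_F$ and $c^{(d+1)}$, while the apex-height constraints and the objective depend only on $\|w_\Pi\|$, $c^{(d+1)}$, and $\beta_\pm$. Since $\|w_F\|^2 + \|w_\Pi\|^2 = 1$, maximizing the admissible arcs for $u_\pm$ forces $\|w_F\|=0$, i.e.\ the face is horizontal, after which the remaining three-parameter box optimization is exactly as you describe. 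Make this decomposition explicit and your argument goes through for all $d$, which is strictly more than the paper establishes.
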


\begin{proof}
We first present a proof for $d=1,2$. For larger $d$'s, a rigorous proof becomes challenging and we will present an empirical result from optimization to support our conclusion.

Consider the optimization problem: 
\begin{align}
\label{eq:LAPD within optimization (2)}
     \max_{\Delta_+,\Delta_-\in V(\M_1,d+1)} \min\{\pi-\theta_{\pm},\theta_{\pm}\}. 
\end{align}
For $d=1$, we make the shared node $x_1$ of the pair of adjacent vectors $\overrightarrow{x_1 x_-}$ and $\overrightarrow{x_1 x_+}$ the origin of $e_1$, i.e. $x_1 = (0, x_1^{(2)})^\intercal$. Using the polar coordinate system gives a simplex expression for $x_-, x_+$ that satisfies the edge length constraint, $x_- = (e\cos{\theta_-}, x_1^{(2)}+e\sin{\theta_-})^\intercal$ and $x_+ = (e\cos{\theta_+}, x_1^{(2)}+e\sin{\theta_+})^\intercal$ where $\theta_-$ and $\theta_+$ are the angle made by the vector $\overrightarrow{v}_- \coloneqq \overrightarrow{x_1 x_-} = (e\cos{\theta_-}, e\sin{\theta_-})^\intercal$ and $\overrightarrow{v}_+ \coloneqq \overrightarrow{x_1 x_+} = (e\cos{\theta_+}, e\sin{\theta_+})^\intercal$ from $e_1$. W.l.o.g, we let $\theta_- \in [-\frac{\pi}{2}, \frac{\pi}{2}]$ and $|\theta_+ - \theta_-| = \theta$ for $\theta \in [0,\pi]$. The case for $\theta_- \in [\frac{\pi}{2}, \frac{3\pi}{2}]$ can be proved with the same argument and thus will lead to the same conclusion. It follows that the optimization problem \eqref{eq:LAPD within optimization (2)} for $d=1$, along with the appropriate constraints, can be reformulated as: 
\begin{align*}
\max_{\theta_-, \theta_+, x_1^{(2)}}  \min &\{\pi- \arccos\frac{v_- \cdot v_+}{\|v_-\| \|v_+\|}, \arccos\frac{v_- \cdot v_+}{\|v_-\| \|v_+\|}\} \\ 
\text{s.t.} \hspace{5pt} |x_1^{(2)}| \leq \tau &, \hspace{5pt} |x_1^{(2)} +e\sin{\theta_-}| \leq \tau, \hspace{5pt} |x_1^{(2)}+e\sin{\theta_+}| \leq \tau   
\end{align*}
whose objective function can be furthered simplified to 
\[
\max_{\theta \in [0, \pi]} \min \{\pi-\theta, \theta\}.
\]
Here two cases need be considered: (i) $\theta \in [0, \frac{\pi}{2}]$ and (ii) $\theta \in [ \frac{\pi}{2}, \pi]$. For (i) the problem simplifies to
\begin{align*}
        \max_{\theta \in [0, \frac{\pi}{2}]} & \quad \theta \\
        \text{s.t.} \hspace{5pt} |x_1^{(2)}| \leq \tau, \hspace{5pt} \theta_-, \theta_+ & \in \arcsin{[\frac{-\tau-x_1^{(2)}}{e}, \frac{\tau-x_1^{(2)}}{e}]}
    \end{align*}
Because $\arcsin$ is a strictly increasing function on its domain, the maximum is achieved when $\theta_-,\theta_+$ take the two ends of their constraint interval. It follows that 
\begin{align*}
        \max \theta & = \arcsin\left(\frac{\tau-x_1^{(2)}}{e}\right) - \arcsin\left(\frac{-\tau-x_1^{(2)}}{e}\right) \\ 
    & = \arcsin\left(\frac{\tau-x_1^{(2)}}{e}\right) + \arcsin\left(\frac{\tau+x_1^{(2)}}{e}\right) \\ 
    & = \arcsin(a-b) + \arcsin(a+b) \eqqcolon f(b) 
    \end{align*}
where $a = \frac{\tau}{e}$ is a constant and $b=\frac{x_1^{(2)}}{e}$ is a variable in $[-a, a]$.  Note that $f(b)$ is an even function and the maximum is achieved at the boundary points $b=\pm a$. Equivalently, we must have $x_1^{(2)}=\pm \tau$. We list all the possible configurations that achieve the maximum 
\begin{itemize}
    \item $x_1^{(2)}=\tau, x_-^{(2)}=\tau, x_+^{(2)}=-\tau$. 
    \item $x_1^{(2)}=\tau, x_-^{(2)}=-\tau, x_+^{(2)}=\tau$.
     \item $x_1^{(2)}=-\tau, x_-^{(2)}=\tau, x_+^{(2)}=-\tau$. 
    \item $x_1^{(2)}=-\tau, x_-^{(2)}=-\tau, x_+^{(2)}=\tau$.
\end{itemize}
In words, the optimal configuration for the $\theta \in [0, \frac{\pi}{2}]$ case occurs when the shared face $x_1$ lies on one side of the boundary of $T_{\tau}(\M_1)$, along with one of the two apex vertices $\{x_-, x_+\}$ lying on the same side of the boundary; The other apex vertex lies on the opposite side of the boundary. See $\theta_1$ in Figure \ref{fig:1-simplex in D=2 space}. One can verify that $\theta_{\max}$, in this case, equals $\arcsin\frac{2\tau}{e}$ and so the optimal value of case (i), for $\theta \in [0, \frac{\pi}{2}]$, is $\arcsin\frac{2\tau}{e}$. 

\begin{figure}[h]
\centering
\begin{subfigure}{0.4\textwidth}
\includegraphics[width=\linewidth]{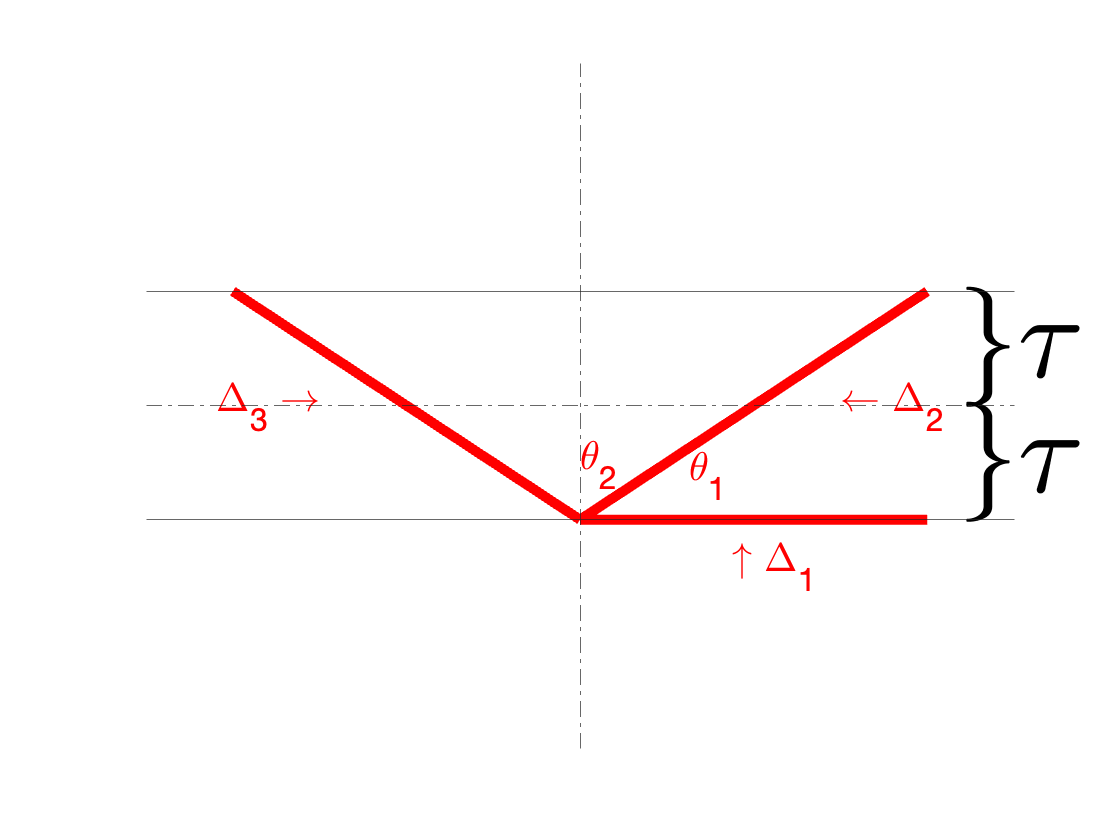}
\caption{$d=1$, $D=2$.} 
\label{fig:1-simplex in D=2 space}
\end{subfigure}\hspace*{\fill}
\begin{subfigure}{0.4\textwidth}
\includegraphics[width=\linewidth]{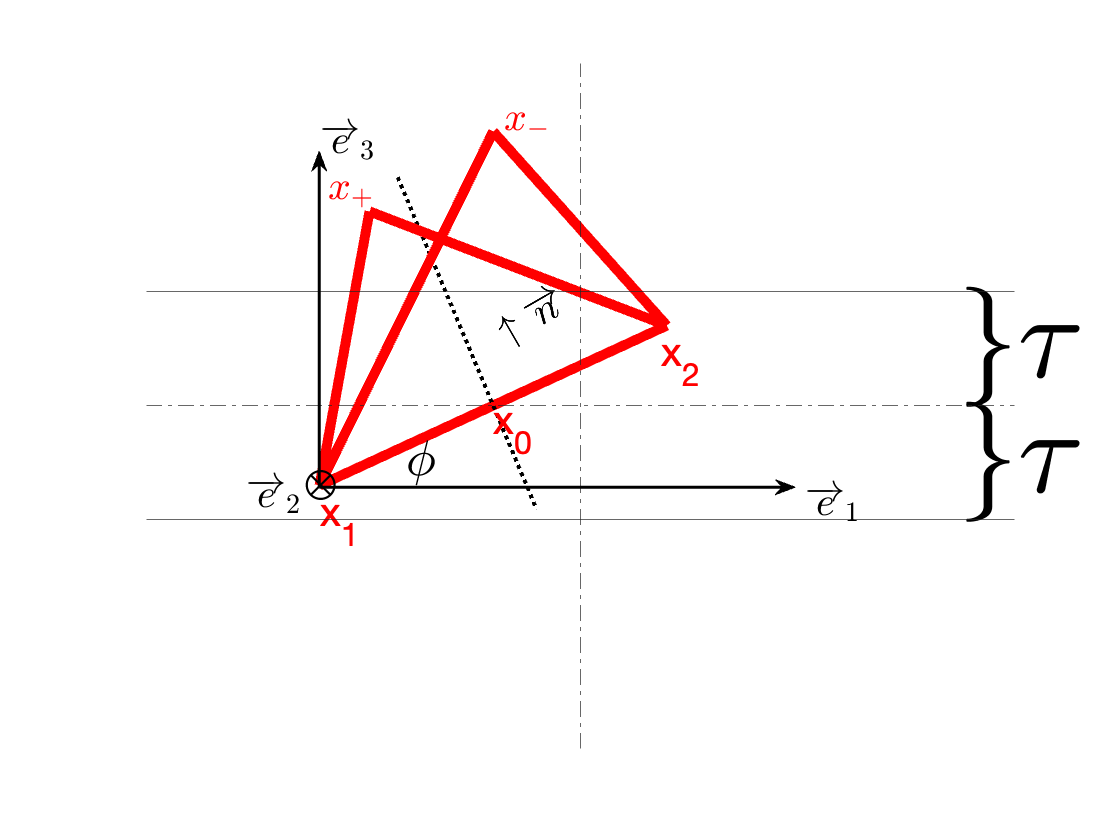}
\caption{Coordinate construction $d=2$} 
\label{fig:construction d=2}
\end{subfigure}
\caption{(a) Optimal configuration for $d=1$ that attains the largest wLAPD with constraints of noise $\tau$ and simplex size $e$. $\theta_1$ corresponds to the (local) optimal configuration for $\theta=|\theta_+-\theta_-| < \frac{\pi}{2}$; $\theta_2$ corresponds to the (global) optimal configuration for $\theta=|\theta_+-\theta_-| > \frac{\pi}{2}$. (b) Construction of coordinates for $d=2$.} \label{fig:largest wLAPD d=1}
\end{figure}

For (ii), the problem reduces to 
\begin{equation*}
    \label{eq:simplified obj (2)}
    \begin{alignedat}{2}
        & \hspace{100pt}\max_{\theta \in [\frac{\pi}{2}, \pi]} \quad \pi-\theta \\
        \text{s.t.} \quad & |x_1^{(2)}| \leq \tau, \quad \theta_- \in \arcsin[0, \frac{\tau-x_1^{(2)}}{e}], \hspace{5pt} \theta_+ \in \pi -\arcsin[0, \frac{\tau-x_1^{(2)}}{e}] \\
        \text{or} \quad & |x_1^{(2)}| \leq \tau, \quad \theta_- \in \arcsin[ \frac{-\tau-x_1^{(2)}}{e}, 0], \hspace{5pt} \theta_+ \in -\pi-\arcsin[\frac{-\tau-x_1^{(2)}}{e}, 0]
    \end{alignedat}
\end{equation*}

The two constraint regions simplify the objective function individually to the following two cases: 
\begin{align*}
    \max_{\theta \in [\frac{\pi}{2}, \pi]} \pi-\theta = 
    \begin{cases}
        2\arcsin\frac{\tau-x_1^{(2)}}{e}, \; & \text{if} \; \theta_- = \arcsin \frac{\tau-x_1^{(2)}}{e}, \theta_+ = \pi-\arcsin \frac{\tau-x_1^{(2)}}{e}.  \\
        2\arcsin\frac{\tau+x_1^{(2)}}{e}  & \text{if} \; \theta_- = \arcsin \frac{-\tau-x_1^{(2)}}{e}, \theta_+ = -\pi-\arcsin \frac{-\tau-x_1^{(2)}}{e}.
    \end{cases}
\end{align*}

It is easy to verify that in the former case, we need $x_1^{(2)}=-\tau$ and that leads to $\theta = \pi-2\arcsin\frac{2\tau}{e}$ and $x_\pm^{(2)}= \tau$. In the later case, we need $x_1^{(2)}=\tau$ and that leads to $\theta = \pi-2\arcsin\frac{2\tau}{e}$ again and $x_\pm^{(2)}= -\tau$. We list all the possible configurations that achieve the optimum
\begin{itemize}
    \item $x_1^{(2)}=\tau, x_\pm^{(2)}=-\tau$.
    \item $x_1^{(2)}=-\tau,x_\pm^{(2)}=\tau$. 
\end{itemize}

In words, the optimal configuration for the $\theta \in [\frac{\pi}{2},\pi]$ case occurs when the shared face lies on one side of the boundary of $T_{\tau}(\M_1)$ and the two apex vertices lie on the other side of the boundary. See $\theta_2$ in Figure \ref{fig:1-simplex in D=2 space}. The maximal value for case (ii) is $2\arcsin\frac{2\tau}{e}$. 

Combining (i) and (ii), we see that (ii) has the configuration that achieves the optimal value for problem \eqref{eq:LAPD within optimization (2)}, which is $2\arcsin\frac{2\tau}{e} \approx \frac{4\tau}{e}$ when $\tau \ll e$ by Assumption \ref{assump:small noise}. The corresponding optimal configuration is 
\begin{align*}
    x_1^{(2)} = -\tau, \;\; \text{and}  \;\; x_{\pm}^{(2)} = \tau, 
\end{align*}
or 
\begin{align*}
    x_1^{(2)} = \tau, \;\; \text{and}  \;\; x_{\pm}^{(2)} = -\tau\, ,  
\end{align*}
and we conclude the proof for $d=1$. 

Now consider $d=2$ for the same optimization problem in \eqref{eq:LAPD within optimization (2)}. Denote by $\Delta_- = [x_1, x_2, x_-]$ and $\Delta_+ = [x_1, x_2, x_+]$ the two adjacent regular triangles. Following the same logic for $d=1$, we shift the coordinate system such that $x_1 = (0,0,x_1^{(3)})^\intercal$ is the origin of span$\{e_1, e_2\}$ and $x_2$ lives in the subspace of span$\{e_1, e_3\}$. $\overrightarrow{x_1x_2}$ satisfies the edge length constraint and so we must have $x_2 = (e\cos\phi,0,x_1^{(3)}+e\sin\phi)^\intercal$ with $\phi \in [0, \pi)$ denoting the angle of the shared face $\overrightarrow{x_1x_2}$ flipped from span$\{ e_1, e_2 \}$. See Figure \ref{fig:construction d=2} for illustration. The case when $\phi \in [\pi, 2\pi)$ can be be analogously analyzed with a symmetry argument. Fixing the position of the shared face $\overrightarrow{x_1x_2}$, the two triangles $\Delta_-, \Delta_+$ can rotate about $\overrightarrow{x_1x_2}$. We introduce $\theta_\pm$ to characterize the angle between $\Delta_\pm$ from span$\{ e_2, \overrightarrow{x_1x_2} \}$. Again we enforce $\theta_- \in [-\frac{\pi}{2}, \frac{\pi}{2}]$ and $\theta = |\theta_--\theta_+| \in [0, \pi]$ w.l.o.g. Due to the edge length constraint, the coordinates of the apex points can be explicitly calculated as 

%\begin{figure}[h]
%   \centering  \includegraphics[width=0.5\textwidth]{Plots/Largest WLAPD(5).png}
%   \caption{Construction of the coordinates for $d=2$.}
%   \label{fig:construction d=2}
%\end{figure}

\begin{align*}
    x_- & = (-\frac{\sqrt{3}}{2}e\sin\theta_-\sin\phi+\frac{e}{2}\cos\phi, \frac{\sqrt{3}}{2}e\cos\theta_-, x_1^{(3)}+\frac{\sqrt{3}}{2}e\sin\theta_-\cos\phi+\frac{e}{2}\sin\phi)^\intercal \\ 
    x_+ & = (-\frac{\sqrt{3}}{2}e\sin\theta_+\sin\phi+\frac{e}{2}\cos\phi, \frac{\sqrt{3}}{2}e\cos\theta_+, x_1^{(3)}+\frac{\sqrt{3}}{2}e\sin\theta_+\cos\phi+\frac{e}{2}\sin\phi)^\intercal
\end{align*}
Note that the middle point $x_0$ of $\overrightarrow{x_1x_2}$ is equal to $\frac{x_1+x_2}{2} = (\frac{e}{2}\cos\phi, 0, x_1^{(3)}+\frac{e}{2}\sin\phi)^\intercal$ and so the two residual vectors $v_- \coloneqq \overrightarrow{x_0x_-}$ and $v_+ \coloneqq\overrightarrow{x_0x_+}$ can be calculated. The optimization problem in \eqref{eq:LAPD within optimization (2)} then boils down to

\begin{align*}
\max_{\theta_-, \theta_+, x_1^{(3)}}  \min &\{\pi- \arccos\frac{v_- \cdot v_+}{\|v_-\| \|v_+\|}, \arccos\frac{v_- \cdot v_+}{\|v_-\| \|v_+\|}\} \\ 
\text{s.t.} \hspace{5pt} |x_1^{(3)}| &\leq \tau, \hspace{5pt} |x_1^{(3)} +e\sin{\phi}| \leq \tau, \\ |x_1^{(3)}+\frac{\sqrt{3}}{2}e\sin\theta_-\cos\phi &+\frac{e}{2}\sin\phi| \leq \tau, \hspace{5pt} |x_1^{(3)}+\frac{\sqrt{3}}{2}e\sin\theta_+\cos\phi+\frac{e}{2}\sin\phi| \leq \tau.    
\end{align*}

Straightforward calculation, using trigonometric properties, reveals that $v_- \cdot v_+ = \frac{3}{4}e^2\cos{\theta}$ with $\|v_-\| = \|v_+\| = \frac{\sqrt3}{2}e$ and so, the above objective can be further reduced to 
\begin{align*}
    \max_{\theta \in [0, \pi]} \min \{\pi-\theta, \theta\}
\end{align*}
Once again we are facing two cases: (i) $\theta \in [0,\frac{\pi}{2}]$ and (ii) $\theta \in [\frac{\pi}{2}, \pi]$. 

For (i), the optimization problem becomes
    \begin{align*}
        \max_{\theta \in [0, \frac{\pi}{2}]} &\theta \\
        \text{s.t.} \hspace{5pt} |x_1^{(3)}| \leq \tau , \hspace{5pt} |x_1^{(3)} & +e\sin\phi|\leq \tau
        \\
        \theta_-, \theta_+ \in \arcsin[\frac{-2\tau-2x_1^{(3)}-e\sin\phi}{\sqrt{3}e\cos\phi}&, \frac{2\tau-2x_1^{(3)}-e\sin\phi}{\sqrt{3}e\cos\phi}]
    \end{align*}
Note that the maximum is obtained when $\theta_-$ and $\theta_+$ take the two endpoints of the domain interval. That is, 
\begin{align*}
    \max \theta &= \arcsin\left(\frac{2\tau-2x_1^{(3)}-e\sin\phi}{\sqrt{3}e\cos\phi}\right)-\arcsin\left(\frac{-2\tau-2x_1^{(3)}-e\sin\phi}{\sqrt{3}e\cos\phi} \right) \\ 
    &=\arcsin\left(\frac{2\tau-(2x_1^{(3)}+e\sin\phi)}{\sqrt{3}e\cos\phi}\right)+\arcsin\left(\frac{2\tau+2x_1^{(3)}+e\sin\phi}{\sqrt{3}e\cos\phi} \right) \\ 
    &= \arcsin(a-b)+\arcsin(a+b) \eqqcolon f(b)
\end{align*}
Note that for a fixed $\phi$, $a = \frac{2\tau}{\sqrt{3}e\cos\phi}$ is a constant whereas $b = \frac{2x_1^{(3)}+e\sin\phi}{\sqrt{3}e\cos\phi} \in [-a, a]$ is the variable. Analogous to $d=1$, the maximum of $f(b)$ is attained at $b=\pm a$. $b=a$ implies $2x_1^{(3)}+e\sin\phi=2\tau$. Due to the constraints $|x_1^{(3)}|\leq\tau$, $|x_1^{(3)}+e\sin\phi|\leq\tau$, and $\phi \in [0,\pi)$, this can only happen when $x_1^{(3)}=\tau$ and $x_1^{(3)}+e\sin\phi=\tau$, which uncovers the configuration $x_1^{(3)}=\tau$ and $\phi=0$. Analogously, $b=-a$ implies $x_1^{(3)}=-\tau, \phi=0$. In the former case, when $x_1^{(3)}=\tau$ and $\phi=0$, one of $\theta_-, \theta_+$ equals 0 and the other equals $\arcsin\left(\frac{-4\tau}{\sqrt3e}\right)$, which suggests that one of $x_-^{(3)}, x_+^{(3)}$ equals $\tau$ and the other equals $-\tau$; In the later case, when $x_1^{(3)}=-\tau$ and $\phi=0$, one of $\theta_-, \theta_+$ equals 0 and the other equals $\arcsin\left(\frac{4\tau}{\sqrt3e}\right)$, which suggests that one of $x_-^{(3)}, x_+^{(3)}$ equals $-\tau$ and the other one equals $\tau$. We list all the possible configurations that achieve the maximum. 

\begin{itemize}
    \item $x_1^{(3)}=x_2^{(3)}=\tau, x_-^{(3)}=\tau, x_+^{(3)}=-\tau$. 
    \item $x_1^{(3)}=x_2^{(3)}=\tau, x_-^{(3)}=-\tau, x_+^{(3)}=\tau$.
    \item $x_1^{(3)}=x_2^{(3)}=-\tau, x_-^{(3)}=\tau, x_+^{(3)}=-\tau$. 
    \item $x_1^{(3)}=x_2^{(3)}=-\tau, x_-^{(3)}=-\tau, x_+^{(3)}=\tau$.
\end{itemize}
In other words, the optimal configuration for $\theta \in [0, \frac{\pi}{2}]$ occurs when the shared face $\overrightarrow{x_1x_2}$ lies on one side of the boundary of $T_{\tau}(\M_1)$, with one of the two apex vertices $\{x_-, x_+\}$ lying on the same side of the boundary and the other apex vertex lying on the opposite side. One can verify that $\theta_{\max}$, in this case, equals  $\arcsin\frac{4\tau}{\sqrt3e}$ and so the optimal value of case (i), for $\theta \in [0, \frac{\pi}{2}]$, is $\arcsin\frac{4\tau}{\sqrt3e}$.

For (ii), without loss of generality, the optimization problem becomes

\begin{align*}
        \max_{\theta \in [\frac{\pi}{2}, \pi]} & \pi- \theta \\
        \text{s.t.} \hspace{5pt} |x_1^{(3)}| \leq \tau & , \hspace{5pt} |x_1^{(3)}+e\sin\phi|\leq \tau \\
        \theta_- \in  [0, \frac{2\tau-2x_1^{(3)}-e\sin\phi}{\sqrt{3}e\cos\phi}], & \;\;\theta_+ = \pi - [0, \frac{2\tau-2x_1^{(3)}-e\sin\phi}{\sqrt{3}e\cos\phi}] \\ 
        \text{or} \; \theta_- \in  [\frac{-2\tau-2x_1^{(3)}-e\sin\phi}{\sqrt{3}e\cos\phi}, 0], & \;\;\theta_+ = -\pi - [ \frac{-2\tau-2x_1^{(3)}-e\sin\phi}{\sqrt{3}e\cos\phi},0]
    \end{align*}
where the two constraints on $\theta_\pm$ depend on the quadrants where the two simplices live. The above can be further simplified to 
\begin{align*}
    \max_{\theta \in [\frac{\pi}{2}, \pi]} \pi- \theta = 
    \begin{cases}
        2\arcsin\frac{2\tau-2x_1^{(3)}-e\sin\phi}{\sqrt3e\cos\phi}, & \theta_- \in [0, \frac{\pi}{2}], \theta_+ \in [\frac{\pi}{2}, \pi]  \\ % & \text{1st and 2nd quadrants,} \\
        2\arcsin\frac{2\tau+2x_1^{(3)}+e\sin\phi}{\sqrt3e\cos\phi}, & \theta_- \in [-\frac{\pi}{2}, 0],  \theta_+ \in [-\pi, -\frac{\pi}{2}] 
        %& \text{3rd and 4th quadrants.}
    \end{cases}
\end{align*}

It is easy to verify that in the former case, we need $2x_1^{(3)}+e\sin\phi=-2\tau$ and in the later case, we need $2x_1^{(3)}+e\sin\phi=2\tau$. Due to the constraints, it has to be that in the former case, $x_1^{(3)}=-\tau, x_1^{(3)}+e\sin\phi=-\tau$, which indicates $\phi=0$ and so $x_-=x_+ = \tau$. Similarly, in the later case, $x_1^{(3)}=\tau$, which indicates $\phi=0$ and so $x_-=x_+= -\tau$. We list all the possible configurations that achieve the maximum.
\begin{itemize}
    \item $x_1^{(3)}=x_2^{(3)}=\tau, x_\pm^{(3)}=-\tau$
    \item $x_1^{(3)}=x_2^{(3)}=-\tau,x_\pm^{(3)}=\tau$ 
\end{itemize}

In words, the optimal configuration for $\theta \in [\frac{\pi}{2},\pi]$ occurs when the shared face lies on one side of the boundary of $T_{\tau}(\M_1)$ and the two apex vertices lie on the other side of the boundary. The maximal value for case (ii) is $2\arcsin\frac{4\tau}{\sqrt3e}$. 

Combining (i) and (ii), we see that (ii) has the configuration that achieves the (global) optimal value for the objective function \eqref{eq:LAPD within optimization (2)} for $d=2$. See Figure \ref{fig:optimal configuration d=2}. In particular, the optimal value equals $2\arcsin\frac{4\tau}{\sqrt3e} \approx \frac{8\tau}{\sqrt3e}$ when $\tau \ll e$ by Assumption \ref{assump:small noise}. The optimal configuration is 
\begin{align*}
    x_1^{(3)} =  x_2^{(3)} = -\tau, \;\; \text{and } x_{\pm}^{(3)} = \tau, 
\end{align*}
or 
\begin{align*}
    x_1^{(3)} =  x_2^{(3)} = \tau, \;\; \text{and } x_{\pm}^{(3)} = -\tau.  
\end{align*}
We therefore finish the proof for $d=2$. 

\begin{figure}[h]
   \centering    \includegraphics[width=0.65\textwidth]{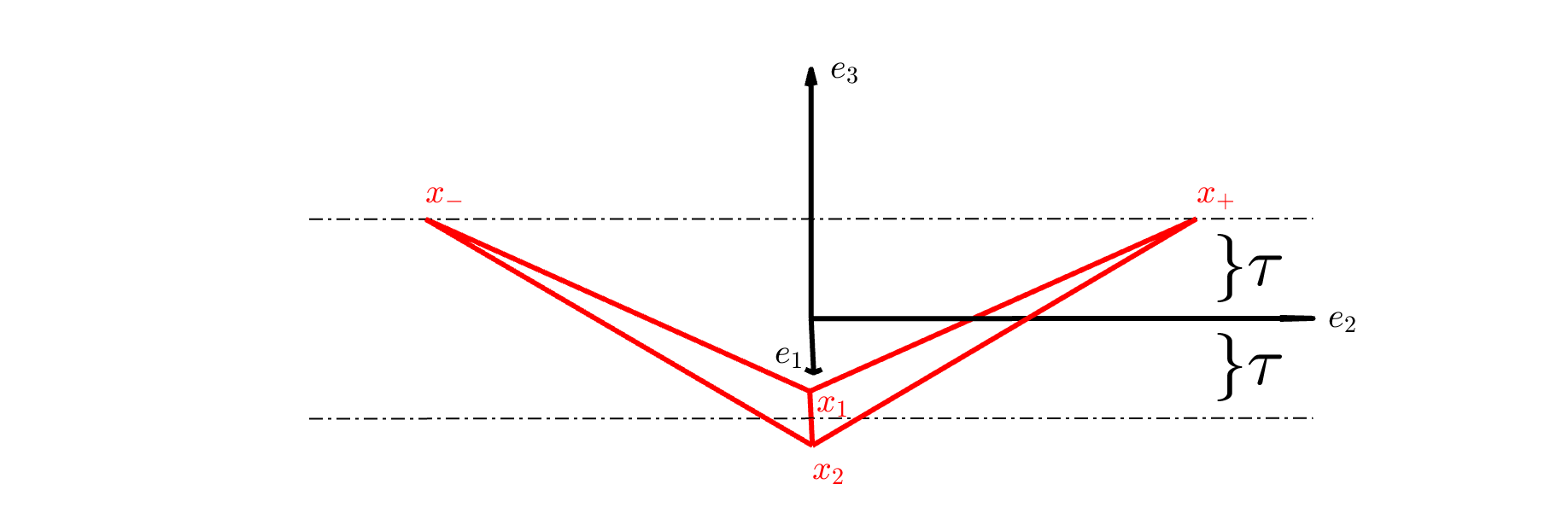}
    \caption{Optimal Configuration for $d=2$.}
   \label{fig:optimal configuration d=2}
\end{figure}

To draw the same conclusion for $d>2$, one needs $x_1^{(d+1)}$ and $d+1$ additional projection parameters (the angles) to capture the constraints, i.e. one must solve an (nonconvex) optimization problem with $d+2$ unknowns. The analytical derivation of the solution is straightforward but cumbersome, and the optimal configuration for any dimension $d$ is always when the apex vertices and the shared face appear on the opposite side of $T_{\tau}(\M_1)$. We skip the proof for the sake of brevity. 

\end{proof}

\subsubsection{wLAPD Upper Bound}
\label{ss: largest wLAPD}
In the last section, we showed that when $q=1$, the global optimum for \eqref{eq:LAPD within optimization (2)} occurs when $x_i^{(d+1)}$ for $i \in [d]$ and $x_{\pm}^{(d+1)}$ have the same scale $\tau$ but opposite sign. Following this optimal configuration, we are going to find a general formula for the upper bound on \eqref{eq:LAPD within optimization (2)} for arbitrary $d$'s. Note that as a byproduct of the calculation in the last section, this upper bound for $d=1$ and $d=2$ were found to be $\frac{4\tau}{e}$ and $\frac{8\tau}{\sqrt{3}e}$, respectively. Our general formula to be found should verify these results. 

Since we can translate the optimal configuration along the manifold, without loss of generality we can make the centroid of the shared face be at zero in the first $d$ coordinates, i.e.
\begin{align*}
    x_0 &= (0, \ldots, 0, -\tau).
\end{align*}
Also, since we can always rotate the optimal configuration in the manifold dimensions, we can assume without loss of generality that the first $d-1$ coordinates of $x_-, x_+$ are zero (they span a 2-dimensional space, but only 1 manifold dimension, which we can rotate to the $d^{th}$ coordinate), and symmetry will give:
\begin{align*}
    x_- &= (0, \ldots, -x, \tau), \;\;\; x_+ = (0, \ldots, x, \tau)
\end{align*}
for some positive constant $x$. However, from the Pythagorean theorem we know $\|\overrightarrow{x_0 x_-}\| =\|\overrightarrow{x_0 x_+}\| = e \sqrt{\frac{d+1}{2d}}$. Solving for $x$ gives $x = \sqrt{e^2\frac{(d+1)}{2d} - (2\tau)^2}$, so that the angle $\theta$ between $\overrightarrow{x_0 x_-},\overrightarrow{x_0 x_+}$ satisfies $\cos{(\theta)} = \frac{16d}{d+1} \frac{\tau^2}{e^2} - 1$. Converting this to $\sin(\theta)$ and by the assumption $\tau \ll e$ we have 
%\begin{equation}
%\label{Worst case LAPD conclusion}
\[
\theta \approx \sin{(\theta)} = \sqrt{\frac{32d}{d+1} \frac{\tau^2}{e^2}- \frac{(16d)^2}{(d+1)^2}\frac{\tau^4}{e^4}} \approx \sqrt{\frac{32d}{d+1}} \frac{\tau}{e}. 
\]
Hence for any $d$, wLAPD has an upper bound $ \sqrt{\frac{32d}{d+1}} \frac{\tau}{e} \sim O\left( \frac{\tau}{e} \right)$. This upper bound verifies our finding from explicit construction and calculation in the last section for $d=1,2$. Note that the empirical result from optimization ($q=1$ curve in Figure \ref{fig:wLAPD bound vs q}) reflects this theoretical bound. 

\subsubsection{Generalization to nonregular simplices}
\label{ss: nonregular simplices}

In the previous section, we found the theoretical upper bound on wLAPD for $q=1$, i.e. when all the simplices are regular. It is hard to navigate the situation when $q$ need be relaxed, which is always the case in practice. We provide a numerical result from optimization in Figure \ref{fig:wLAPD bound vs q}. It is clearly seen that wLAPD remains upper bounded when $q$ is moderately relaxed. 

\begin{figure}[h]
\centering
\begin{subfigure}{0.45\textwidth}
\includegraphics[height=5cm, width=\linewidth]{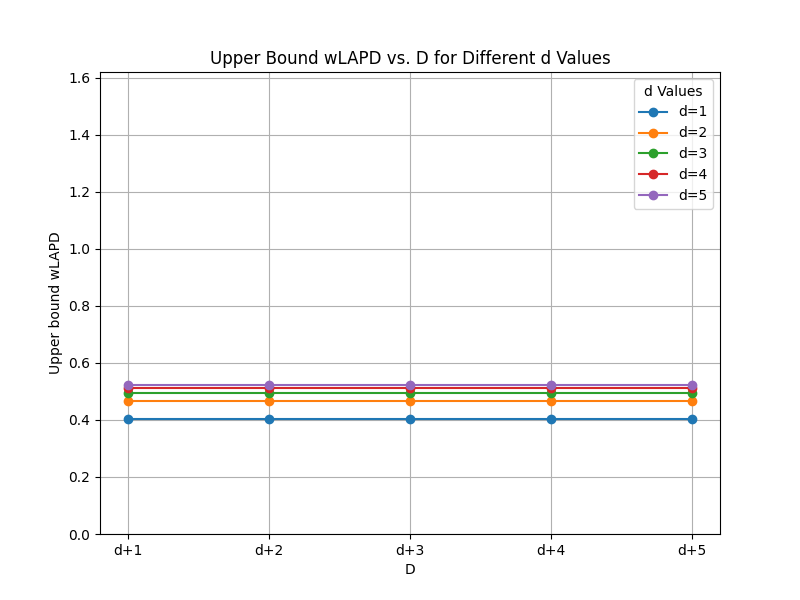}
\caption{wLAPD Upper Bound vs $D$.} 
\label{fig:wLAPD bound vs D}
\end{subfigure}\hspace*{\fill}
\begin{subfigure}{0.45\textwidth}
\includegraphics[height=5cm,width=\linewidth]{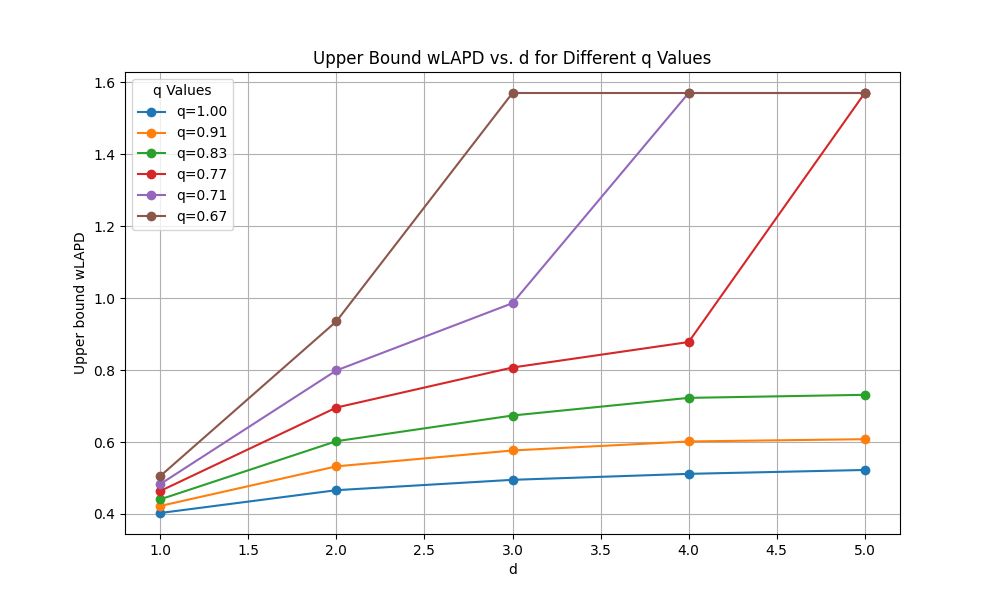}
\caption{wLAPD Upper Bound vs $q$.} 
\label{fig:wLAPD bound vs q}
\end{subfigure}
\caption{wLAPD upper bound simulation results. In (a), $q=1$ is fixed. It verifies the independence of the optimial configuration and wLAPD upper bound from the ambient dimension $D$. In (b), $D=d+1$ is fixed but $q$ is relaxed. It reveals that moderately relaxing $q$ maintains a reasonable upper bound on wLAPD. }
\label{Largest WLAPD (2)}
\end{figure}

\section{Proof of Theorem \ref{thm:between LAPD lower bound}}
\label{pf:between LAPD lower bound}

We now prove the lower bound on bLAPD given in Theorem \ref{thm:between LAPD lower bound}. Due to the relationship between the two weight constructions LAPD$^1\geq$ LAPD$^2$ by \eqref{eq:two LAPD relationship}, it is sufficient to obtain a lower bound for bLAPD under the two-way weight construction:
\begin{align*}    W^2_S(\Delta_i,\Delta_j)&=\min\{\pi -\theta_{ij}, \theta_{ij}\} \, .
\end{align*}

Recall we have $n$ points i.i.d sampled from the uniform volume measure $\text{vol}_D$ on $T_\tau(\M)= T_\tau(\M_1) \cup T_\tau(\M_2)$, where $\text{vol}_D$ refers to the $D$-dimensional Hausdorff measure. For adjacent simplices $\Delta_i, \Delta_j$, we say $\Delta_i$ $\epsilon$-links to $\Delta_j$ (denoted $\Delta_i \rightarrow_{\epsilon} \Delta_j$) if $W^2_S(\Delta_i,\Delta_j) \leq \epsilon$. We also define a simplex to be \textit{straddle} if it forms an angle of at least $\frac{\Theta}{4}$ with both $\M_1$ and $\M_2$, and let $S^{\text{str}}$ denote the set of valid straddle simplices. To lower bound bLAPD by a path enumeration argument, we will need the following supporting results.

\begin{lemma}
\label{lem:tube_size}
Suppose $R \geq \frac{5e}{q \sin\Theta}$. Then any mixed simplex is formed by points contained in $T_R(\M_1 \cap \M_2)$.
\end{lemma}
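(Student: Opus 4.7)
The plan is to exploit the linearity of $\M_1,\M_2$ (Assumption \ref{assump:linear}) together with the geometry of the intersection angle $\Theta$ to convert a mixed simplex condition into a proximity bound on the intersection $I := \M_1 \cap \M_2$. Fix a mixed simplex $\Delta = \{x_1,\ldots,x_{d+1}\} \in S$ with clean counterparts $z_1,\ldots,z_{d+1}$; by definition of ``mixed,'' we can find indices $i,j$ with $z_i \in \M_1$ and $z_j \in \M_2$. By Assumption \ref{assump:size-quality} we have $\|x_i - x_j\| \leq e/q$, and by Assumption \ref{assump:small noise} the noise satisfies $\|x_k - z_k\| \leq \tau \leq e/q$ for all $k$. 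A triangle inequality therefore gives
\begin{equation*}
    \|z_i - z_j\| \;\leq\; \|z_i - x_i\| + \|x_i - x_j\| + \|x_j - z_j\| \;\leq\; 2\tau + \tfrac{e}{q} \;\leq\; \tfrac{3e}{q}.
\end{equation*}

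Next, I would translate $\|z_i-z_j\|$ into a bound on the distance from $z_i$ to $I$. For linear subspaces intersecting at angle $\Theta$, any $z \in \M_1$ satisfies $d(z, \M_2) = \sin(\Theta)\, d(z, I)$: decompose $z = p + q$ with $p \in I$ and $q \in I^\perp \cap \M_1$, and note that the definition of the principal angle $\Theta$ forces $d(q,\M_2) = \|q\|\sin\Theta$, which equals $d(z,\M_2)$ since $p \in \M_2$. Applying this to $z_i$ (and using $z_j \in \M_2$) yields
\begin{equation*}
    d(z_i, I) \;=\; \frac{d(z_i, \M_2)}{\sin\Theta} \;\leq\; \frac{\|z_i - z_j\|}{\sin\Theta} \;\leq\; \frac{3e}{q\sin\Theta}.
\end{equation*}

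Finally, for \emph{any} vertex $x_k$ of the simplex (not just $x_i$ or $x_j$), another triangle inequality combined with the edge length and noise bounds gives
\begin{equation*}
    d(x_k, I) \;\leq\; \|x_k - x_i\| + \|x_i - z_i\| + d(z_i, I) \;\leq\; \tfrac{e}{q} + \tau + \frac{3e}{q\sin\Theta} \;\leq\; \frac{2e}{q} + \frac{3e}{q\sin\Theta} \;\leq\; \frac{5e}{q\sin\Theta},
\end{equation*}
where the last step uses $\sin\Theta \leq 1$ to absorb the first term. Hence $x_k \in T_{5e/(q\sin\Theta)}(I) \subseteq T_R(\M_1 \cap \M_2)$ by the hypothesis $R \geq 5e/(q\sin\Theta)$, and every vertex of the mixed simplex lies inside $T_R(\M_1 \cap \M_2)$ as required.

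I do not anticipate any real obstacle here; the only subtlety is justifying the identity $d(z,\M_2) = \sin(\Theta)\,d(z,I)$ for $z \in \M_1$, which requires a brief orthogonal decomposition argument and the standard definition of the principal (intersection) angle between two linear subspaces whose intersection has codimension one in each (Assumption \ref{assump:intersection_dim}). Every other step is a direct triangle inequality using Assumptions \ref{assump:size-quality} and \ref{assump:small noise}.
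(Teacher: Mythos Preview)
Your proposal is correct and follows essentially the same route as the paper: bound $\|z_i-z_j\|$ by $e/q+2\tau$ via the edge-length and noise assumptions, convert this to a bound on $d(z_i,I)$ via a $\sin\Theta$ factor coming from the intersection angle, and then push the bound to every vertex with one more triangle inequality. Your use of the identity $d(z,\M_2)=\sin(\Theta)\,d(z,I)$ for $z\in\M_1$ is just a cleaner packaging of the paper's Law-of-Sines decomposition $z_2-z_1=p_1+p_{\text{int}}+p_2$, and both arguments implicitly work with the affine hulls of $\M_1,\M_2,I$ when invoking orthogonal projections.
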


\begin{proof}
    Let $P$ denote projection onto the intersection. We need to show that if $\Delta$ is a mixed simplex, then $\|x-Px\|\leq \frac{5e}{q\sin\Theta}$ for all $x \in \Delta$. Pick an arbitrary $x_1\in\Delta$; without loss of generality assume $z_1\in\M_1$, i.e. the noiseless sample corresponding to $x_1$ came from $\M_1$; since $\Delta$ is mixed, there exists $x_2\in\Delta$ such that $z_2\in\M_2$. Since $\|x_1-x_2\| \leq \frac{e}{q}$ by construction of valid simplex set $S$ in \eqref{eq:simplex quality constraint}, $\|z_1-z_2\| \leq \frac{e}{q}+2\tau$ by the triangle inequality. Note:
    \begin{align*}
        z_2 - z_1 = Pz_1-z_1 + Pz_2-Pz_1+z_2-Pz_2 := p_1 + p_{\text{int}} + p_2
    \end{align*}
    where $p_1 = Pz_1-z_1$, $p_{\text{int}}=Pz_2-Pz_1$, $p_2 = z_2-Pz_2$. Note that $p_{\text{int}}$ is orthogonal to $p_1+p_2$, so
    \begin{align*}
        \| z_2-z_1\| &= \|p_{\text{int}}\| + \|p_1+p_2\| \\
        \implies \|p_1+p_2\| &\leq \| z_2-z_1\| \leq \frac{e}{q}+2\tau\, .
    \end{align*}
    Furthermore, by the Law of Sines $\|p_1\| \leq \frac{\|p_1+p_2\|}{\sin\Theta} \leq \frac{\frac{e}{q}+2\tau}{\sin\Theta}$ and similarly for $\|p_2\|$. Thus we conclude $\|Pz_1-z_1\| \leq \frac{\frac{e}{q}+2\tau}{\sin\Theta}$. Finally we relate to the noisy distances:
    \begin{align*}
        \| x_1 - Px_1\| &= \| x_1 - z_1 + z_1-Pz_1 + Pz_1-Px_1\| \\
        &\leq \| x_1 - z_1\| + \|z_1-Pz_1 \|+ \|Pz_1-Px_1\| \\
        &\leq \| x_1 - z_1\| + \|z_1-Pz_1 \|+ \|z_1-x_1\| \\
        &\leq \frac{\frac{e}{q}+2\tau}{\sin\Theta} + 2\tau \leq \frac{\frac{e}{q}+4\tau}{\sin\Theta} \leq \frac{5e}{q\sin\Theta}
    \end{align*}
    where the last inequality is due to Assumption \ref{assump:small noise}. 
\end{proof}

\begin{lemma}
\label{lem:bound_intersec_pts}
    Let $n_T$ denote the number of points which land in $\mathcal{T}_R = T_R(\M_1 \cap \M_2) \cap T_{\tau}(\M)$ for some $R \in[\tau, R_{\max}]$ as defined in Assumption \ref{assump:volume_growth}. 
    %Then $n_T \leq  \frac{2 \elongub}{(1-\delta)\sin\theta} nE$
    Then $n_T \leq 2 C_2 n R$ with probability at least $1-\exp\left(-\frac{C_1 nR}{3}\right)$, with $C_1, C_2$ defined in \eqref{equ:elong}.
\end{lemma}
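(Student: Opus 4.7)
The plan is to recognize $n_T$ as a binomial count and apply a multiplicative Chernoff bound, since under Assumption \ref{assump:uniform_samp} each sample lands in $\mathcal{T}_R$ independently with probability $p = \text{vol}_D(\mathcal{T}_R)/V$. Assumption \ref{assump:volume_growth} then gives two-sided control of this probability: absorbing the factor $\tau^{D-d}/V$ into the constants $C_1, C_2$ (consistent with how these constants are used in Theorem \ref{thm:between LAPD lower bound}), we have $C_1 R \leq p \leq C_2 R$, and in particular $\mathbb{E}[n_T] = np \in [C_1 nR,\, C_2 nR]$.

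The first step is to write $n_T = \sum_{i=1}^n \mathbf{1}\{x_i \in \mathcal{T}_R\}$, a sum of iid Bernoulli$(p)$ variables. Next I apply the standard multiplicative Chernoff bound in the form
\begin{equation*}
    \Pr\bigl(n_T \geq (1+\delta)\mathbb{E}[n_T]\bigr) \leq \exp\!\left(-\frac{\delta^2}{2+\delta}\mathbb{E}[n_T]\right),
\end{equation*}
and specialize to $\delta=1$, which gives $\Pr(n_T \geq 2\mathbb{E}[n_T]) \leq \exp(-\mathbb{E}[n_T]/3)$.

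The final step is to substitute the upper and lower bounds on $\mathbb{E}[n_T]$ from Assumption \ref{assump:volume_growth}: the upper bound $\mathbb{E}[n_T] \leq C_2 nR$ implies $\{n_T \geq 2C_2 nR\} \subseteq \{n_T \geq 2\mathbb{E}[n_T]\}$, while the lower bound $\mathbb{E}[n_T] \geq C_1 nR$ lets me replace $\mathbb{E}[n_T]/3$ in the exponent by the smaller quantity $C_1 nR/3$. Chaining these inclusions/inequalities yields exactly $\Pr(n_T \geq 2C_2 nR) \leq \exp(-C_1 nR/3)$, which is the stated bound.

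There is no serious obstacle here; the only thing to be careful about is the bookkeeping of constants, specifically the implicit rescaling of $C_1, C_2$ so that the bound is phrased purely in terms of $nR$ (rather than $n R \tau^{D-d}/V$), and confirming that the hypothesis $R \in [\tau, R_{\max}]$ from Assumption \ref{assump:volume_growth} is in force so that the two-sided volume estimate actually applies.
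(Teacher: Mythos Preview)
Your proposal is correct and follows essentially the same approach as the paper: both recognize $n_T$ as a Binomial$(n,p)$ count, bound $p$ via Assumption~\ref{assump:volume_growth} (with the same implicit absorption of $\tau^{D-d}/V$ into $C_1,C_2$), apply the multiplicative Chernoff bound with $\delta=1$, and then substitute the two-sided estimate $C_1 nR \le \mathbb{E}[n_T] \le C_2 nR$ to reach the conclusion. Your explicit remark about the constant rescaling is, if anything, more careful than the paper's own treatment.
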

\begin{proof}
    The number of points landing in $\mathcal{T}_R$ is a Binomial random variable with parameters $(n, p)$ where $p$ is the volume ratio between the tubular region of radius $R$ around the $d-1$ dimensional intersection $\text{vol}_D\left(\mathcal{T}_R\right)$ and the tubular region of radius $\tau$ around the collection of $d$ dimensional manifolds $\text{vol}_D \left(T_\tau \left(\M \right) \right)$, i.e. by  \eqref{equ:elong}  %$p =\text{vol}_d(T)$
    \begin{align*}        p=\frac{\text{vol}_D\left(\mathcal{T}_R\right)}{\text{vol}_D(T_\tau(\M))} \in
    \frac{[C_1, C_2] R \tau^{D-d}}{\text{vol}_d(\M)\tau^{D-d}} \sim R.
    \end{align*}  
    Hence, the Binomial distribution of $n_T$ has a mean that scales like $\mu \sim nR$. A multiplicative Chernoff bound gives
    \begin{align*}
        \mathbb{P}\left(n_T \geq (1+t) \mu\right) &\leq \exp\left(-\frac{t^2\mu}{2+t}\right)
    \end{align*}
    for $t\geq 0$. For convenience, we choose $t=1$, and using \eqref{equ:elong}, we have $C_1 n R \leq \mu \leq C_2 n R$, so that
    \begin{align*}
        \mathbb{P}\left(n_T \geq 2 C_2 n R \right) \leq \mathbb{P}\left(n_T \geq 2 \mu\right) &\leq \exp\left(-\frac{\mu}{3}\right) \leq \exp\left(-\frac{C_1 nR}{3}\right) \, .
    \end{align*}
\end{proof}

\begin{lemma}
\label{lem:number_links}
 If bLAPD $\leq \epsilon$, then there exists a path of $\frac{\Theta}{2\epsilon}$ consecutive straddle simplices which link with a weight bounded by $\epsilon$.   
\end{lemma}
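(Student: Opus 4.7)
The plan is a Lipschitz-and-counting argument: the angle a simplex makes with $\M_1$ is $1$-Lipschitz with respect to the weight $W^2_S$, so along any simplex path with all edge weights bounded by $\epsilon$ there must be a long consecutive run whose orientation lands in the straddle band $[\Theta/4,3\Theta/4]$.

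First, I would unpack $\text{bLAPD} \leq \epsilon$ via Definitions \ref{def:Between LAPD} and \ref{def:LAPD} to extract an adjacent-simplex path $\Delta_0,\Delta_1,\ldots,\Delta_L$ with $\Delta_0\in S(\M_1)$, $\Delta_L\in S(\M_2)$, and $W_S(\Delta_i,\Delta_{i+1})\leq\epsilon$ for every $i$. Since $W^1_S\geq W^2_S$ and hence $\text{LAPD}^1\geq\text{LAPD}^2$ by \eqref{eq:two LAPD relationship}, it suffices to handle the two-way weight. Define $\phi(\Delta) := \angle(\mathrm{span}(\Delta),\M_1)$, the principal angle between the affine span of $\Delta$ and $\M_1$. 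The geometric bookkeeping inside the proof of Theorem \ref{thm:within LAPD upper bound} yields $\phi(\Delta_0)=O(\tau/e)$ and $\phi(\Delta_L)=\Theta-O(\tau/e)$; under Assumption \ref{assump:small noise} for sufficiently small $\tau/e$, we may assume $\phi(\Delta_0)\leq\Theta/8$ and $\phi(\Delta_L)\geq 7\Theta/8$.

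Second, I would prove the key estimate $|\phi(\Delta_i)-\phi(\Delta_{i+1})|\leq W^2_S(\Delta_i,\Delta_{i+1})$. Adjacent simplices share a $(d-1)$-face $F$, so the two $d$-planes $\mathrm{span}(\Delta_i)$ and $\mathrm{span}(\Delta_{i+1})$ both lie in the pencil through $F$; $d-1$ of their principal angles therefore vanish, and a direct calculation inside the $2$-dimensional complement of $F$ (where the apex vectors $v_i, v_j$ of \eqref{eq:simplex graph weight} live) identifies the remaining nonzero principal angle with $\min(\theta_{i,i+1},\pi-\theta_{i,i+1}) = W^2_S(\Delta_i,\Delta_{i+1})$. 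Combined with the fact that the projection-operator-norm metric on the Grassmannian is a genuine metric, the triangle inequality yields the $1$-Lipschitz bound.

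Third, I would run a discrete intermediate-value count. Let $b=\max\{i:\phi(\Delta_i)\leq 3\Theta/4\}$ and $a=\max\{i\leq b:\phi(\Delta_i)\leq\Theta/4\}$, both well-defined by the endpoint bounds above. The Lipschitz estimate forces $\phi(\Delta_{a+1})\leq\Theta/4+\epsilon$ and $\phi(\Delta_b)>3\Theta/4-\epsilon$, so $(b-a-1)\epsilon \geq \Theta/2 - 2\epsilon$, i.e.\ the consecutive block $\Delta_{a+1},\ldots,\Delta_b$ has length at least $\Theta/(2\epsilon)-1$, and every simplex in it satisfies $\phi\in(\Theta/4,3\Theta/4]$. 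Under Assumption \ref{assump:linear}, simplices near the intersection have their plane directions in the $2$-dimensional complement of $\M_1\cap\M_2$, where $\phi(\Delta)+\angle(\mathrm{span}(\Delta),\M_2)=\Theta$; thus $\phi\in[\Theta/4,3\Theta/4]$ simultaneously certifies angle $\geq\Theta/4$ with $\M_2$, so these simplices are straddle.

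The main technical obstacle is step 2: rigorously identifying $W^2_S$ with the unique nonzero principal angle between adjacent simplex planes and establishing $1$-Lipschitzness with respect to a true Grassmannian metric rather than just a pseudometric. A secondary subtlety appears in step 3, where one must convert the orientation-band condition $\phi\in[\Theta/4,3\Theta/4]$ into the straddle condition involving both $\M_1$ and $\M_2$; the linear structure in Assumption \ref{assump:linear} together with the $O(\tau/e)$ deviation of pure simplices controlled by Theorem \ref{thm:within LAPD upper bound} makes this conversion clean.
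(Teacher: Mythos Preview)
Your approach is essentially the paper's: a $1$-Lipschitz estimate on $\phi(\Delta)=\angle(\Delta,\M_1)$ followed by a discrete intermediate-value count. The paper proves the Lipschitz step by a bare triangle inequality $\angle(\Delta_i,\M_1)\leq W^2_S(\Delta_i,\Delta_{i+1})+\angle(\Delta_{i+1},\M_1)$ without going through the Grassmannian, and then locates an index where $\phi\approx\Theta/2$ and steps $\lfloor\Theta/(4\epsilon)\rfloor$ in each direction to stay inside $[\Theta/4,3\Theta/4]$; the straddle conversion uses only the inequality $b_i+c_i\geq\Theta$.

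Two slips to correct. First, your indices $a,b$ do not force the entire block into the band: with $b=\max\{i:\phi(\Delta_i)\leq 3\Theta/4\}$ the sequence may overshoot $3\Theta/4$ somewhere in $(a,b)$ and return, so ``every simplex in it satisfies $\phi\in(\Theta/4,3\Theta/4]$'' is unjustified. Replace $b$ by the \emph{first} index after $a$ with $\phi>3\Theta/4$ (or adopt the paper's centering device) and the count goes through. Second, the identity $\phi(\Delta)+\angle(\Delta,\M_2)=\Theta$ requires $\mathrm{span}(\Delta)$ to contain $\M_1\cap\M_2$, which a generic $d$-simplex does not; only the triangle inequality $\phi(\Delta)+\angle(\Delta,\M_2)\geq\Theta$ holds, and that is precisely what is needed to deduce $\angle(\Delta,\M_2)\geq\Theta/4$ from $\phi\leq 3\Theta/4$.
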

\begin{proof}
Since bLAPD $\leq \epsilon$, there exists a path of all weights $W^2_S(\Delta_i,\Delta_j)$ less than $\epsilon$ between the manifolds; that is, we have a path connecting the manifolds where the sequence $a_i = W^2_S(\Delta_i, \Delta_{i+1})$ is small. Consider also the dihedral angle made by $\Delta_i$ with the two manifolds $b_i = \angle(\Delta_i, \M_1)$ and $c_i = \angle(\Delta_i, \M_2)$. Note that we have the following triangle inequality
\begin{align*}
    & \angle(\Delta_i, \M_1) \leq W^2_S(\Delta_i, \Delta_{i+1}) + \angle(\Delta_{i+1}, \M_1) \leq \epsilon + \angle(\Delta_{i+1}, \M_1) \\
    & \angle(\Delta_{i+1}, \M_1) \leq W^2_S(\Delta_{i+1}, \Delta_{i}) + \angle(\Delta_{i}, \M_1) \leq \epsilon + \angle(\Delta_{i}, \M_1) \\
    & \implies |b_i - b_{i+1}| = |\angle(\Delta_{i}, \M_1) - \angle(\Delta_{i+1}, \M_1)| \leq \epsilon\, ,
\end{align*}
and similarly $|c_i - c_{i+1}| \leq \epsilon$. Now the sequence $b_i$ increases from $O(\frac{\tau}{e})$ to $\Theta -O(\frac{\tau}{e})$ and $c_i$ decreases from $\Theta -O(\frac{\tau}{e})$ to $O(\frac{\tau}{e})$. At some point $b^*$ it is very close to $\frac{\Theta}{2}$. Note that since the sequence can move by at most $\epsilon$ in one step, if we take $\lfloor\frac{\Theta}{4\epsilon}\rfloor$ steps before and after $b^*$, we will not exit the interval $[\frac{1}{4}\Theta, \frac{3}{4}\Theta]$. That is, there has to be a sequence of $\lfloor\frac{\Theta}{2\epsilon}\rfloor$ consecutive links $\Delta_i \rightarrow \Delta_{i+1}$ where $b_i \in [\frac{1}{4}\Theta, \frac{3}{4}\Theta]$. Application of the triangle inequality ($b_i+c_i \geq \Theta$) gives $c_i \geq \Theta - b_i \geq \frac{\Theta}{4}$. Hence we conclude that there is a sequence of $\lfloor\frac{\Theta}{2\epsilon}\rfloor$ consecutive simplices whose angles with both $\M_1$ and $\M_2$ are at least $\frac{\Theta}{4}$, which is our definition of the straddle simplices.
\end{proof}

\begin{proposition}[Single Link Estimate]\label{prop:single_link_prob}
Given a fixed simplex $[x_1,\ldots,x_{d+1}]=\Delta \in S^\text{str}$, the volume of the region where a new point $y$ can be placed so that $\Gamma = [x_2,\ldots,x_{d+1},y] \rightarrow_{\epsilon} \Delta$ is bounded by $C_{\Theta,q,d,r_0} e^d \epsilon \tau^{D-d}$ for $\epsilon$ small enough.   
\end{proposition}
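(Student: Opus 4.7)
The plan is to bound the $D$-dimensional Hausdorff volume of
\begin{equation*}
\mathcal{R}(\Delta,\epsilon) := \{y \in T_\tau(\M) : \Gamma := [x_2,\ldots,x_{d+1},y] \in S,\ W_S^2(\Delta,\Gamma) \leq \epsilon\}
\end{equation*}
by introducing orthogonal coordinates adapted to the shared face $F = \{x_2,\ldots,x_{d+1}\}$ and the apex direction of $\Delta$. Let $A$ be the $(d-1)$-dimensional affine span of $F$, let $c$ be the centroid of $F$, and let $v_1 \in A^\perp$ be the apex residual of $\Delta$ as in Section \ref{ssec:method-lapd}. Every candidate $y$ admits the unique decomposition
\begin{equation*}
y = c + s + \alpha\,\hat{v}_1 + \eta,\qquad s \in A-c,\ \alpha \in \mathbb{R},\ \eta \in V := A^\perp \cap \hat{v}_1^\perp,
\end{equation*}
with unit Jacobian; the $(d-1)$-dim $s$, the scalar $\alpha$, and the $(D-d)$-dim $\eta$ respectively encode face position, apex height, and apex transverse displacement.

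First I translate the constraints defining $\mathcal{R}(\Delta,\epsilon)$ into $(s,\alpha,\eta)$ coordinates. The Pythagorean identity $\|y-x_i\|^2 = \|s-(x_i-c)\|^2 + \alpha^2 + \|\eta\|^2$ combined with Assumption \ref{assump:size-quality} restricts $s$ to a region $R_s \subset A-c$ of $(d-1)$-volume $\lesssim_{q,d} e^{d-1}$ and restricts $|\alpha|$ to an interval $[\alpha_-,\alpha_+]$ with both endpoints of order $e$ (implicit constants depending on $q,r_0,d$). Since $v_y = \alpha\hat{v}_1 + \eta$ satisfies $\tan\angle(v_y,\pm v_1) = \|\eta\|/|\alpha|$, the dihedral condition $W_S^2(\Delta,\Gamma) \leq \epsilon$ translates (for $\epsilon$ small) to the cone constraint $\|\eta\| \leq 2|\alpha|\epsilon$. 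Finally, Assumption \ref{assump:linear} gives $T_\tau(\M_j) = \{y : \|P_{N_j}(y-y_0)\| \leq \tau\}$ for any $y_0 \in \M_j$, where $P_{N_j}$ denotes the orthogonal projection onto the $(D-d)$-dim normal space $N_j$ of $\M_j$; it then suffices to bound $\mathrm{vol}(\mathcal{R}(\Delta,\epsilon) \cap T_\tau(\M_j))$ for $j=1,2$ separately, and by symmetry I treat $j=1$.

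The heart of the argument uses the straddle hypothesis: $\beta_1 := \angle(\hat{v}_1,\M_1) \geq \Theta/4$, so $\|P_{N_1}\hat{v}_1\| = \sin\beta_1 \geq \sin(\Theta/4)$. Setting $\hat{n}_1 := P_{N_1}\hat{v}_1/\sin\beta_1$, I split $V = \mathbb{R}\hat{a}' \oplus V'$, where $\hat{a}' \in V$ is the unit vector with $P_{N_1}\hat{a}' = \cos\beta_1\,\hat{n}_1$ and $V' := V \cap \hat{n}_1^\perp$ has dimension $D-d-1$, and write $\eta = \mu\hat{a}' + \eta'$. A direct projection computation then shows the noise condition $\|P_{N_1}y\|^2 \leq \tau^2$ takes the form
\begin{equation*}
\bigl(\alpha\sin\beta_1 + \mu\cos\beta_1 + p_1(s)\bigr)^2 + \|\eta' + p_2(s)\|^2 \leq \tau^2,
\end{equation*}
where $p_1,p_2$ depend only on $(c,s)$. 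For fixed $(s,\alpha,\mu)$ the $\eta'$ integration is over a $(D-d-1)$-ball of radius at most $\tau$ and contributes at most $C_d\,\tau^{D-d-1}$. The remaining $(\alpha,\mu)$-region is the intersection of the cone $|\mu| \leq |\alpha|\epsilon$, the edge-length box $|\alpha| \in [\alpha_-,\alpha_+]$, and the affine slab $|\alpha\sin\beta_1 + \mu\cos\beta_1 + p_1(s)| \leq \tau$; for $\epsilon$ small its area is at most $\alpha_+\epsilon \cdot 2\tau/\sin\beta_1 \lesssim e\epsilon\tau/\sin(\Theta/4)$. Multiplying by the $\eta'$-factor and integrating over $s \in R_s$, then summing over the two cones ($\pm\hat{v}_1$) and the two manifolds, yields the stated bound $C_{\Theta,q,d,r_0}\,e^d\epsilon\tau^{D-d}$.

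The main obstacle is careful bookkeeping across parameter regimes. Two subtleties require attention: (i) when $\beta_1 \to \pi/2$ so that $\cos\beta_1 \to 0$, the $\mu$-coordinate decouples from the noise slab and must instead be controlled by the cone alone, so the $(\alpha,\mu)$-area estimate is replaced by the cone-only bound $\lesssim \alpha_+^2\epsilon$; and (ii) when the face $F$ lies so close to $\M_1 \cap \M_2$ that the noise slab on $(\alpha,\mu)$ passes through $\alpha=0$, the integral $\int|\alpha|\,d\alpha$ over the slab is only of order $(\tau/\sin\beta_1)^2$ rather than $\alpha_+ \cdot \tau/\sin\beta_1$, and this must be absorbed into the desired bound using Assumption \ref{assump:small noise} ($\tau \leq e/q$). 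In each regime the estimate still reduces to $\lesssim e^d\epsilon\tau^{D-d}$ with a constant depending only on $(\Theta,q,d,r_0)$.
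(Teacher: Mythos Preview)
Your overall strategy---decompose $y$ relative to the shared face and apex direction, read the dihedral constraint as a cone $\|\eta\|\le 2|\alpha|\epsilon$, then intersect with the $\tau$-tube---is sound and is genuinely different from the paper's route. The paper proves the noiseless case by a compactness/Taylor-expansion argument (showing the cosine of the dihedral angle is \emph{uniformly} strongly concave in one well-chosen coordinate over a compact parameter set) and then only sketches the noisy extension informally. Your approach is more explicit and, if completed, would actually give the noisy case directly rather than by analogy.

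However, the step you flag as ``the heart of the argument'' is where the proof breaks. The straddle hypothesis says $\angle(\mathrm{span}(\Delta),\M_1)\ge\Theta/4$; it does \emph{not} imply $\beta_1:=\angle(\hat v_1,\M_1)\ge\Theta/4$. The apex residual $\hat v_1$ is one particular unit vector in $\mathrm{span}(\Delta)$, and its angle with $\M_1$ can be zero even when the simplex plane is tilted. Concretely, take $d=2$, $\M_1=\mathrm{span}\{e_1,e_2\}$, and $\Delta=[(0,0,0),(1,0,1),(0,1,0)]$ with shared face $F=\{(0,0,0),(1,0,1)\}$. Then $\angle(\mathrm{span}(\Delta),\M_1)=\pi/4$, yet $\hat v_1=(0,1,0)\in\M_1$ so $\beta_1=0$. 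In this configuration your slab bound $2\tau/\sin\beta_1$ is infinite and the decomposition $V=\R\hat a'\oplus V'$ (which needs $\hat n_1=P_{N_1}\hat v_1/\sin\beta_1$) is undefined. This is precisely a case not covered by your two listed subtleties, which handle $\beta_1\to\pi/2$ and the slab passing through $\alpha=0$, but not $\beta_1\to0$.

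The fix is to use the angle between the \emph{plane} $\mathrm{span}(\Delta)=A\oplus\R\hat v_1$ and $\M_1$, not between $\hat v_1$ and $\M_1$. What the straddle condition guarantees is that some direction in $\mathrm{span}(\Delta)$---possibly lying in $A$ rather than along $\hat v_1$---makes angle $\ge\Theta/4$ with $\M_1$. In the counterexample above, it is the face direction that is tilted, so the tube constraint acts on the $s$-coordinate rather than on $(\alpha,\mu)$; working it out, the volume is still $\lesssim e^d\epsilon\tau^{D-d}$, but the bookkeeping must be reorganized. One clean way is to rotate the $(s,\alpha)$ coordinates inside $\mathrm{span}(\Delta)$ to align one axis with the principal direction realizing $\angle(\mathrm{span}(\Delta),\M_1)$, and let \emph{that} axis play the role your $\alpha$ currently plays in the slab estimate. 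The cone constraint $\|\eta\|\le 2|\alpha|\epsilon$ then needs to be re-expressed in the rotated frame, which is where the extra work lies.
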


\begin{proof}
  See Appendix \ref{app:single-link}.
\end{proof}

Utilizing the above, we are ready to prove Theorem \ref{thm:between LAPD lower bound}, which we restate here. 

\second*

\begin{proof}
We first control the number of points landing near the intersection. Defining the event $\Omega := \{ n_T \leq 2C_2 nR\}$, we have $P(\Omega^C)\leq \exp\left(-\frac{C_1 nR}{3}\right)$ by Lemma \ref{lem:bound_intersec_pts}. Thus:
\begin{align*}
    \mathbb{P}(\text{bLAPD} < \epsilon ) &= \mathbb{P}(\text{bLAPD} < \epsilon | \Omega )\mathbb{P}(\Omega)+\mathbb{P}(\text{bLAPD} < \epsilon | \Omega^C )\mathbb{P}(\Omega^C) \\
    &\leq \mathbb{P}( \text{bLAPD} < \epsilon | \Omega )+\mathbb{P}(\Omega^C) \\
    &\leq \mathbb{P}( \text{bLAPD} < \epsilon | n_T = w )+\mathbb{P}(\Omega^C)
\end{align*}
where $w:=\lfloor 2C_2 nR \rfloor$. Note that the conditional probability above is simply the probability that $\text{bLAPD} < \epsilon$ if we sample $w$ i.i.d. points from $\mathcal{T}_R$. If $\text{bLAPD} < \epsilon$, then by Lemma \ref{lem:number_links} there exists a path of $s=\frac{\Theta}{2\epsilon}$ consecutive straddle simplices which link with weights bounded by $\epsilon$, i.e. there exists a path of straddle simplices such that $\Delta_1 \rightarrow_\epsilon \ldots \rightarrow_\epsilon \Delta_s$. We will denote $\Delta_i \rightarrow_{\epsilon, \text{str}} \Delta_j$ if $\Delta_i \rightarrow_{\epsilon} \Delta_j$ and $\Delta_i$ is straddle. By Lemma \ref{lem:tube_size}, if we choose $R=\frac{5e}{q\sin\Theta}$, 
then all of these simplices are in the tube $\mathcal{T}_R$.  
We note there is a one-to-one correspondence between a simplex path $\Delta_1 \rightarrow \ldots \rightarrow \Delta_s$ of $s$ simplices and a point path $x_1 \rightarrow \ldots \rightarrow x_{s+d}$ of $d+s$ points.
Consider now a fixed simplex path $\Delta_1 \rightarrow \ldots \rightarrow \Delta_s$, i.e. the simplex path defined by a fixed $x_1 \rightarrow \ldots \rightarrow x_{s+d}$. Thus
\begin{align*}
    \mathbb{P}(\Delta_1 \rightarrow_{\epsilon,\text{str}} \ldots \rightarrow_{\epsilon,\text{str}} \Delta_s)
    =&\mathbb{P}( \Delta_1 \rightarrow_{\epsilon,\text{str}} \Delta_2)\mathbb{P}(\Delta_2 \rightarrow_{\epsilon,\text{str}} \Delta_3 | \Delta_1 \rightarrow_{\epsilon,\text{str}} \Delta_2) \\ & \ldots
    \mathbb{P}(\Delta_{s-1} \rightarrow_{\epsilon,\text{str}} \Delta_s | \Delta_1 \rightarrow_{\epsilon,\text{str}} \ldots \rightarrow_{\epsilon,\text{str}} \Delta_{s-1}) 
\end{align*}
Now by Proposition \ref{prop:single_link_prob}, each term in the above product is bounded by 
\begin{align*}
  \frac{C_{\Theta,q,d} e^d \epsilon \tau^{D-d}}{\text{vol}_D(\mathcal{T}_R)} &\leq \frac{C_{\Theta,q,d} e^d \tau^{D-d}\epsilon}{C_1 R \tau^{D-d}} \leq \frac{C_{\Theta,q,d} e^d \epsilon q\sin\Theta}{ 5eC_1} = C_3 e^{d-1} \epsilon, \quad C_3 =  \frac{C_{\Theta,q,d}q \sin\Theta}{5C_1} \, ,
\end{align*}
so that for a fixed path,
\begin{align*}
    \mathbb{P}(\Delta_1 \rightarrow_{\epsilon,\text{str}} \ldots \rightarrow_{\epsilon,\text{str}} \Delta_s) &\leq \left(C_3 e^{d-1}\epsilon \right)^{s-1} \, .
\end{align*}
Note here we have used the fact that
\begin{align*}
   \mathbb{P}(\Delta_i \rightarrow_{\epsilon,\text{str}} \Delta_{i+1}) &= \mathbb{P}(\Delta_i \rightarrow_{\epsilon} \Delta_{i+1} \cap \Delta_i \in S^{\text{str}}) \\
   &=\mathbb{P}(\Delta_i \rightarrow_{\epsilon} \Delta_{i+1} | \Delta_i \in  S^{\text{str}})\mathbb{P}(\Delta_i \in  S^{\text{str}})  \\
   &\leq \mathbb{P}(\Delta_i \rightarrow_{\epsilon} \Delta_{i+1} | \Delta_i \in  S^{\text{str}})
\end{align*}
(similarly under conditioning on the previous links). Since we have $w$ possible points from which to form paths, there are a total of ${\frac{w!}{(w-(s+d))!}} \leq w^{s+d}$ ordered sets of points of size $s+d$, so that
 \begin{align*}
    \label{BLAPD bound without noise}
        \mathbb{P}( \text{bLAPD} < \epsilon | n_T = w ) &\leq w^{s+d} (C_3 e^{d-1}\epsilon)^{s-1} \\
    &= w^{d+1} (C_3 w e^{d-1}\epsilon)^{s-1} \\
    &= C_4 (ne)^{d+1} (C_5 n e^{d}\epsilon)^{\frac{\Theta}{2\epsilon}-1},
    \end{align*}
where $C_4 = \left(\frac{10C_2}{q\sin\Theta}\right)^{d+1}$, $C_5 = \frac{10C_2 C_3}{q\sin\Theta}=\frac{2C_{\Theta,q,d}C_2}{C_1}$. We define $\epsilon:=(2C_5ne^d)^{-1}$ and note that
\begin{align*}
n e^d &\geq \frac{2(d+1)}{C_5\Theta\log 2} \log n \quad \implies \quad \frac{\Theta}{4\epsilon}  = \frac{\Theta C_5 ne^d}{2}  \geq \frac{(d+1)\log n}{\log 2}
\end{align*}
One can check that the above inequality implies $(ne)^{d+1}\leq (C_5 n e^{d}\epsilon)^{-\frac{\Theta}{4\epsilon}} = \left( \frac{1}{2} \right)^{-\frac{\Theta}{4\epsilon}}=2^{\frac{\Theta}{4\epsilon}}$. To see this, note that since $e \leq 1$ by assumption, we have $(ne)^{d+1} \leq n^{d+1} = 2^{(d+1)\frac{\log n}{\log 2}} \leq 2^{\frac{\Theta}{4\epsilon}}$ and so 
\begin{align*}
    \mathbb{P}(\text{bLAPD} < \epsilon | n_T = w ) \leq  C_42^{1-\frac{\Theta}{4\epsilon}} \leq 2C_4 2^{-(d+1)\frac{\log n}{\log 2}} = 2C_4 n^{-(d+1)}.
\end{align*}
Note this probability dominates $\mathbb{P}(\Omega^C)$.
In summary, we have shown that provided that
\[ n e^d \geq \frac{2(d+1)}{C_5\Theta\log 2} \log n \,,  \]
one has
\begin{align*}
    \text{bLAPD} &> (2C_5 n e^d)^{-1}
\end{align*}
with probability at least $1-2C_4 n^{-{(d+1)}}-\mathbb{P}(\Omega^C) \geq 1-3C_4 n^{-{(d+1)}}$ where $C_4 = \left(\frac{10C_2}{q\sin\Theta}\right)^{d+1}$ and $C_5 = \frac{2C_{\Theta,q,d}C_2}{C_1}$. 
\end{proof}

\section{Proof of Theorem \ref{thm:between LAPD after denoising}}
\label{pf:between LAPD after denoising}

We now show how a very similar argument can be used to guarantee that if we incorporate a denoising procedure, we can achieve a bLAPD which is $O(\Theta)$. To make the argument precise, we define \text{super straddle} simplices to be simplices which have an angle of at least $\frac{3\Theta}{8}$ with both manifolds $\M_1$ and $\M_2$.

\third*

\begin{proof}
We argue that with high probability all super straddle simplices are removed during denoising, so that the between distance after denoising must be at least $\frac{\Theta}{4}$. Note that if a path starts at a super straddle simplex and takes $\epsilon$-links, the path will contain only straddle simplices as long as the number of links does not exceed $s=\frac{\Theta}{8\epsilon}$. Now suppose there exists a super straddle simplex which survives denoising; then there exists a super straddle simplex that $\epsilon$ links to $s=\frac{\Theta}{8\epsilon}$ neighbors (all of which must be straddle). That is, there exists a path of simplices such that $\Delta_1 \rightarrow_{\epsilon,\text{str}} \ldots \rightarrow_{\epsilon,\text{str}} \Delta_s$. We simply repeat the proof of Theorem \ref{thm:between LAPD lower bound} to show this is unlikely, with $s=\frac{\Theta}{8\epsilon}$ replacing $s=\frac{\Theta}{2\epsilon}$. The only change is the constant. 
\end{proof}

% \begin{proof}
%     Define $r$ the minimum separation distance between $T_{\tau}(\M_1)$ and $T_\tau{(\M_2)}$ as 
%     \[ \delta \coloneqq \inf_{x\in T_{\tau}(\M_1) \backslash \mathcal{T}_r, y \in T_{\tau}(\M_2) \backslash \mathcal{T}_r} \| x-y\|\]where $\mathcal{T}_r = T_r(\M_1 \cap \M_2) \cap T_{\tau}(\M)$. 
% \end{proof}

%\vskip 0.2in
%\newpage

\section{Single-link estimate: proof of Proposition \ref{prop:single_link_prob}}\label{app:single-link}
We provide an estimate for linking straddle simplices, summarized in Proposition \ref{prop:single_link_prob}. We first prove the result for the noiseless case, $\tau = 0$. In this case, the manifolds are strictly in a $d$-dimensional space, so that our estimates are with respect to $d$-dimensional volume.

%We recall we assume the following set of \textit{valid simplices}: Let $S$ be a set of simplices constructed from samples of $T_{\tau}(\M)$. $S$ satisfies size/quality constraint $(e,q)$ if every simplex $\Delta=[x_1,\ldots,x_{d+1}] \in S$ satisfies
%\begin{align}\label{eq:simplex quality constraint}
%  e &\leq \min_{i < j} |x_i - x_j| \hskip 10pt \textrm{($e$-sized simplices)} \\
%  \textrm{Distortion}(\Delta) &\geq q \hskip 10pt \textrm{($q$-uniform edge lengths)}
%\end{align}

\begin{lemma}[Noiseless single link estimate]\label{lemma:single-link}
  Let $\mathcal{M}_1$, $\mathcal{M}_2$, $S$, $d$, $q$, $e$, $r_0$ be fixed that satisfy \Cref{assump:intersection_dim,assump:size-quality,assump:linear}. Then there are positive constants $\epsilon_0$ and $C$ such that the following is true: Consider any $d$-simplex $\Delta = [\bs{x}_1, \ldots, \bs{x}_{d+1}] \in S^{\text{str}}$. With $\bs{v}$ any point in $\M_2$, let $\Delta_j(\bs{v})$ denote a simplex formed by the $(d+1)$ points $\bs{v}$ and $\{\bs{x}_\ell\}_{\ell\neq j}$. Consider the set of points in $\M_2$ such that a new \textit{valid} straddle simplex $\Delta_j(\bs{v})$ can be formed adjacent to $\Delta$ for any $j$:
  \begin{align}\label{eq:A-def}
    A \coloneqq \left\{ \bs{v} \in \M_2 \;\; \big|\;\; \Delta_j(\bs{v}) \in S \textrm{ and } W^2_S(\Delta, \Delta_j(\bs{v})) \leq \epsilon \textrm{ for any } j \in [d+1] \right\}.
  \end{align}
  Then whenever $\epsilon \leq \epsilon_0$,
  \begin{align*}
    \mathrm{vol}_{d}(A) \leq  C e^d \epsilon
  \end{align*}
\end{lemma}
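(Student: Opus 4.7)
The plan is a direct geometric argument: for each choice of which vertex of $\Delta$ to replace, the admissible apex points $\bs{v}$ are confined to a thin slab inside $\M_2$, and we bound its $d$-dimensional volume by (longitudinal extent)$^{d-1}$ $\times$ (transverse thickness).

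First I reduce the problem. A union bound over $j \in [d+1]$ and over the two regimes $\theta \in [0,\epsilon]$ and $\theta \in [\pi-\epsilon, \pi]$ of the two-way weight $W_S^2$ reduces the claim to bounding $\text{vol}_d(A_j)$, where $A_j$ is the set of $\bs{v} \in \M_2$ such that $\Delta_j(\bs{v})$ is valid and its dihedral angle with $\Delta$ satisfies $\pi - \theta \leq \epsilon$ (the back-flip regime $\theta \leq \epsilon$ is handled symmetrically). Using the notation of Section \ref{ssec:method-lapd}, let $F_j = \Delta \setminus \{\bs{x}_j\}$, $c_j$ its centroid, and let $v_j,w$ denote the components of $\bs{x}_j - c_j$ and $\bs{v} - c_j$ normal to $\mathrm{span}(F_j - c_j)$. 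The dihedral-angle identity $\cos\theta = \langle v_j, w\rangle/(\|v_j\|\|w\|)$ shows that $\pi-\theta\leq \epsilon$ is equivalent to requiring the angle between $w$ and $-v_j$ to be at most $\epsilon$.

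Next I convert this angular condition into a distance bound from the $d$-plane $H_\Delta$ containing $\Delta$. Writing $w = \alpha v_j + r$ with $r \perp v_j$, one has $\|r\| \leq \|w\| \sin \epsilon$, and the size constraint \eqref{eq:simplex quality constraint} applied to $\Delta_j(\bs{v})$ gives $\|w\| \leq e/q$. Since $v_j$ together with $\mathrm{span}(F_j - c_j)$ spans the directional part of $H_\Delta$, the vector $r$ is orthogonal to $H_\Delta$, hence $\mathrm{dist}(\bs{v}, H_\Delta) = \|r\| \leq (e/q)\sin\epsilon$. Now I invoke the straddle property: since $\Delta \in S^{\text{str}}$, the principal angle between the $d$-planes $H_\Delta$ and $\M_2$ is at least $\Theta/4$, so restricted to the orthogonal complement of $H_\Delta \cap \M_2$ inside $\M_2$ the projection onto $H_\Delta^\perp$ has smallest singular value at least $\sin(\Theta/4)$. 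Consequently, the set $\{\bs{v} \in \M_2 : \mathrm{dist}(\bs{v}, H_\Delta) \leq (e/q)\sin\epsilon\}$ is contained in a tubular neighborhood of $H_\Delta \cap \M_2$ (within $\M_2$) of thickness at most $(e/q)\sin\epsilon/\sin(\Theta/4)$.

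Finally, the edge-length constraint on $\Delta_j(\bs{v})$ implies $\|\bs{v} - c_j\| \leq (1/d)\sum_{\ell\neq j}\|\bs{v} - \bs{x}_\ell\| \leq e/q$, so $A_j$ is confined to the Euclidean ball of radius $e/q$ around $c_j$. Intersecting with the thin slab from the previous step yields
\begin{align*}
\text{vol}_d(A_j) \;\leq\; C_d \left(\frac{e}{q}\right)^{d-1} \cdot \frac{(e/q)\sin\epsilon}{\sin(\Theta/4)} \;\leq\; C_{d,q,\Theta}\, e^d \epsilon,
\end{align*}
valid for $\epsilon \leq \epsilon_0$ with $\epsilon_0$ chosen small enough that $\sin\epsilon \leq 2\epsilon$ and the linearization remains accurate. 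Summing over $j$ and the two angle regimes completes the proof. The main obstacle is the principal-angle step: one must verify that only a single direction inside $\M_2$ enters the transverse thickness, which requires choosing coordinates on $\M_2$ adapted to the principal-angle decomposition of the pair $(H_\Delta, \M_2)$. This is cleanest in the generic case $\dim(H_\Delta \cap \M_2) = d-1$; in higher ambient codimension the intersection can be smaller, but additional nonzero principal angles only shrink the admissible slab further, so the stated bound is retained at the cost of a slightly more careful singular-value argument for $P_{H_\Delta^\perp}|_{\M_2}$.
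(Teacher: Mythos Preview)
Your argument is correct and takes a genuinely different route from the paper's. The paper proceeds by a coordinate change placing $\M_2$ in the first $d$ coordinates and $\mathrm{span}(\Delta)\cap\M_2$ in coordinates $2,\ldots,d$, then studies the cosine $\widetilde{\alpha}_j(\bs{y})$ of the angle between the normals as a smooth function of the transverse coordinate $y_1$; it shows $|\widetilde{\alpha}_j|$ attains its strict maximum $1$ at $y_1=0$, invokes compactness of the parameter set (this is where $r_0$ enters) to obtain a uniform strong-concavity constant $K>0$, and Taylor-expands to conclude $|y_1|\lesssim \epsilon/\sqrt{K}$. Your approach instead converts the dihedral constraint directly into the bound $\mathrm{dist}(\bs{v},H_\Delta)\leq (e/q)\sin\epsilon$ and then exploits the straddle lower bound $\angle(H_\Delta,\M_2)\geq\Theta/4$ to control the slab thickness in $\M_2$ by $\sin(\Theta/4)^{-1}$. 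What your route buys is an explicit constant $C_{d,q,\Theta}$ (no compactness, no Taylor remainder, and $r_0$ is used only to guarantee $v_j\neq 0$), and it makes transparent exactly where the straddle condition is consumed. What the paper's route buys is a template that is arguably easier to perturb toward the noisy or curved case, since the compactness argument does not rely on $H_\Delta$ and $\M_2$ being hyperplanes. Regarding your flagged obstacle: under \Cref{assump:intersection_dim,assump:linear} one may take $D=d+1$ (the paper makes this reduction explicitly), so $H_\Delta\cap\M_2$ is automatically $(d-1)$-dimensional whenever $\Delta\in S^{\mathrm{str}}$, and the principal-angle step is clean as written.
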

\begin{proof}
  First note that by Assumption \ref{assump:intersection_dim}, $\M$ is contained in a $(d+1)$-dimensional Euclidean space, and so without loss of generality we assume $D=d+1$ throughout the proof. Within this proof, we use boldface letters, e.g. $\bs{x}$ to denote vectors or points in multi-dimensional space, and normally typeset letters, e.g., $x$ to denote scalars. Our overall strategy will be to use subadditivity of the volume of $A$. With
  \begin{align*}
    A_j = \left\{ \bs{v} \in \M_2 \;\; \big|\;\; \Delta_j(\bs{v}) \in S \textrm{ and } W^2_S(\Delta, \Delta_j(\bs{v})) \leq \epsilon\right\}\, ,
  \end{align*}
  then 
  \begin{align*}
    A = \bigcup_{j \in [d+1]} A_j \hskip 15pt \Longrightarrow \hskip 15pt \mathrm{vol}_d(A) \leq \sum_{j \in [d+1]} \mathrm{vol}_d(A_j),
  \end{align*}
  and we will proceed to compute $\mathrm{vol}_d(A_j)$. Fixing $j \in [d+1]$, and taking some $\bs{v} \in A_j$, we will study the cosine of the angle between $\Delta$ and $\Delta_j(\bs{v})$:
  \begin{align*}
    \alpha_{j}(\bs{v}) \coloneqq \bs{n}(\Delta) \cdot \bs{n}(\Delta_j(\bs{v})) = \cos \angle(\Delta, \Delta_j(\bs{v}))\, .
    % = \cos W^2(\Delta, \Delta_j(\bs{y})).
  \end{align*}
  where $\bs{n}(\cdot)$ denotes a(ny) unit normal vector to the input $d$-simplex.
  Then by definition we have,
  \begin{align*}
    \bs{v} \in A_j \hskip 5pt \Longrightarrow \hskip 5pt W_S^2(\Delta, \Delta_j(\bs{v})) \leq \epsilon \hskip 5pt \stackrel{\eqref{eq:two way weight}}{\Longrightarrow} \hskip 5pt \left|\alpha_j(\bs{v})\right| \geq 1 - \frac{\epsilon^2}{2},
  \end{align*}
  where we have used $\left|\cos x\right| \geq 1 - x^2/2$. Hence we have,
  \begin{align*}
    A_j &\subset B_j, & B_j &\coloneqq \left\{ \bs{v} \in \M_2 \;\; \big|\;\; \Delta_j(\bs{v}) \in S \textrm{ and } \left|\alpha_j(\bs{v})\right| \geq 1 - \epsilon^2/2 \right\}.
  \end{align*}
  We proceed to bound the volume of $B_j$, which we accomplish by deriving necessary conditions on the coordinates of $\bs{v}$ for $B_j$ membership. Without loss we assume $\bs{x}_{d+1} \in \M_2$. (If not, re-index the vertices of $\Delta$. Because $\Delta \in S^{\mathrm{str}}$, at least one of its vertices lies in $\M_2$.) We can assume a certain geometric configuration for the problem: Volumes, angles, simplex qualities, and edge lengths are all invariant under affine unitary transformations, i.e., rigid rotations, reflections, and translations. In addition, we subsequently perform an isotropic scaling to extract dependence on $e$. In $D$-dimensional space, we choose such a transformation corresponding to, in order, a rigid translation, followed by $d = D-1$ rotations, followed by isotropic scaling, so this mapping corresponds to the map $\bs{x} \mapsto \bs{z}_{\ast} \mapsto \bs{z}_{\ast\ast} \mapsto \bs{z}_{\ast\ast\ast} \mapsto \bs{z}$:
  \begin{subequations}\label{eq:geometry}
  \begin{align}\label{eq:geometry-rigid}
    \bs{z}_{\ast}(\bs{x}_{d+1}) &= \bs{0}, \\\label{eq:geometry-rotation}
    \bs{z}_{\ast\ast}(\M_2) &\subset \mathrm{span}\{\bs{e}_1, \ldots, \bs{e}_d\} \\\label{eq:geometry-rotations}    \bs{z}_{\ast\ast\ast}\left(\mathrm{span} \{\mathrm{span}\{\Delta\} \cap \M_2 \}\right) &= \mathrm{span}\{\bs{e}_2, \ldots, \bs{e}_{d}\}\\\label{eq:geometry-scaling}
    \bs{z} &= \frac{\bs{z}_{\ast\ast\ast}}{e}
  \end{align}
  \end{subequations}
  Condition \eqref{eq:geometry-rigid} is a rigid translation. Condition \eqref{eq:geometry-rotation} corresponds to rotation of a normal vector to $\M_2$ passing through $\bs{x}_{d+1} = \bs{0}$. Condition \eqref{eq:geometry-rotations} corresponds to a $(d-1)$-dimensional rotation around a normal vector to $\M_2$ passing through $\bs{x}_{d+1} = \bs{0}$. In summary, the map $\bs{x} \mapsto \bs{z}$ has the form,
  \begin{align*}
    \bs{z} &= \frac{1}{e} \bs{U} \left(\bs{x} - \bs{x}_{d+1}\right), & \bs{U} &\textrm{ is unitary}
  \end{align*}
  We also define $\bs{z}$-images of the given vertices of $\Delta = [\bs{x}_1, \ldots, \bs{x}_{d+1}]$:
  \begin{align*}
    \bs{z}_j &\coloneqq \bs{z}(\bs{x}_j), & j &\in [d+1], & \bs{z}_{d+1} &= \bs{0}
  \end{align*}
  While distances change by a factor of $e$ under the $\bs{x} \mapsto \bs{z}$ mapping, angles do not because the terminal scaling factor is isotropic. The $\bs{z}$-image of the set $B_j$ is a rigid translation, rotation, and isotropic scaling of $B_j$, and hence satisfies,
  \begin{align*}
    C_j &\coloneqq \bs{z}(B_j) , & \mathrm{vol}_d(B_j) = e^d \mathrm{vol}_d(C_j),
  \end{align*}
%  \begin{itemize}
%    \item $\bs{x}_d = \bs{0}$ (rigid translation)
%    \item $\M_2 \subset \mathrm{span}\{\bs{e}_1, \ldots, \bs{e}_d\}$ (rotation of a normal vector to $\M_2$ passing through $\bs{x}_d = \bs{0}$)
%    \item $\mathrm{span} \{\mathrm{span}\{\Delta\} \cap \M_2 \} = \mathrm{span}\{\bs{e}_2, \ldots, \bs{e}_{d}\}$ ($(d-1)$-dimensional rotation around a normal vector to $\M_2$ passing through $\bs{x}_d = \bs{0}$)
%  \end{itemize}
  We define corresponding quantities and notation in $\bs{z}$-space: 
  \begin{itemize}
    \item With $\bs{y} = \bs{z}(\bs{v})$, then $\bs{v} \in B_j \Longleftrightarrow \bs{y} \in C_j$.
    \item $\widetilde{\Delta} = [\bs{z}_1, \ldots, \bs{z}_{d+1}]$ is the $\bs{z}$-image of $\Delta$
    \item $\widetilde{\Delta}_j(\bs{y}) = [\bs{y}, \{\bs{z}_\ell\}_{\ell \neq j}]$ is the simplex adjacent to $\widetilde{\Delta}$ sharing all vertices except $\bs{z}_j$.
  \end{itemize}
  Two particularly useful consequences of this  $\bs{z} \mapsto \bs{x}$ map are,
  \begin{align*}
    \alpha_j(\bs{v}) = \bs{n}(\Delta) \cdot \bs{n}(\Delta_j(\bs{v})) &= \bs{n}(\widetilde{\Delta}) \cdot \bs{n}(\widetilde{\Delta}_j(\bs{y})) \eqqcolon \widetilde{\alpha}_j(\bs{y}), \\
    \Delta_j(\bs{v}) \in S &\Longleftrightarrow \widetilde{\Delta}_j(\bs{y}) \in \widetilde{S}
  \end{align*}
  where $\widetilde{S}$ is the set of valid simplices (cf. Assumption \ref{assump:size-quality}) whose minimal edge length is unity. I.e., $\Gamma = [\bs{u}_1, \ldots, \bs{u}_{d+1}] \in \widetilde{S}$ means,
  \begin{align*}
    r_0 V_0 \leq \mathrm{vol}_d(\Gamma) &\leq \frac{V_0}{q^d}, &
    1 \leq \left| \bs{u}_i - \bs{u}_j \right| &\leq \frac{1}{q} \;\; \forall\; i, j \in [d+1], \; i < j.
  \end{align*}
  In summary, we have the following equivalent definition for $C_j$:
  \begin{align*}
    C_j = \left\{ \bs{y} \in \mathrm{span}\{\bs{e}_1, \ldots, \bs{e}_d\} \;\; \big|\;\; \widetilde{\Delta}_j(\bs{y}) \in \widetilde{S} \textrm{ and } \left|\widetilde{\alpha}_j(\bs{y})\right| \geq 1 - \epsilon^2/2 \right\}.
  \end{align*}
  The conditions defining $C_j$ allow us to derive constraints on the coordinates of a fixed, arbitrary element $\bs{y} \in C_j$. By the property that $\bs{z}(\M_2)$ lies in $\mathrm{span}\{\bs{e}_1, \ldots, \bs{e}_d\}$, then,
  \begin{align*}
    \bs{y} &= \left(y_1, \ldots, y_d, 0 \right)^T \in \R^{D}
  \end{align*}
  The fact that $\widetilde{\Delta}_j(\bs{y})$ is a valid simplex provides a bound for $|\bs{y}|$:
  \begin{align*}
    |\bs{y}| - |\bs{z}_j| \leq \left| \bs{y} - \bs{z}_j\right| \stackrel{\eqref{eq:simplex quality constraint}}{\leq} \frac{1}{q}
  \end{align*}
  This, along with the fact that $\bs{z}_j = \bs{z}_j - \bs{z}_{d+1}$ is an edge for the valid simplex $\widetilde{\Delta}$ results in the bound,
  \begin{align}\label{eq:r-bound}
    \bs{y} \in C_j \hskip 10pt \Longrightarrow \hskip 10pt |\bs{y}| \leq \frac{1}{q} + |\bs{z}_j| \stackrel{\eqref{eq:simplex quality constraint}}{\leq} \frac{2}{q}.
  \end{align}
  This provides a bound on each coordinate of $\bs{y}$:
  \begin{align}\label{eq:yell-bound}
    |y_\ell| \leq |\bs{y}| &\leq \frac{2}{q}, & \ell \in [d].
  \end{align}
  The remainder of our arguments will craft an additional, more stringent constraint for $y_1$ that is necessary to achieve $|\widetilde{\alpha}_j(\bs{y})| \geq 1 - \epsilon^2/2$. To this end, an important observation is that by definition $|\widetilde{\alpha}_j(\bs{y})| = 1$ iff $\widetilde{\Delta}$ and $\widetilde{\Delta}_j(\bs{y})$ are coplanar, which occurs iff $\bs{y} \in \mathrm{span}\{\widetilde{\Delta}\}$. I.e.,
  \begin{align}\label{eq:ay-is-1}
    \bs{y} \in C_j \textrm{ and } |\widetilde{\alpha}_j(\bs{y})| = 1 \hskip 5pt \Longleftrightarrow \hskip 5pt \bs{y} \perp \bs{e}_{d+1} \textrm{ and } \bs{y} \in \mathrm{span}\{\widetilde{\Delta}\} \hskip 5pt \stackrel{\eqref{eq:geometry-rotations}}{\Longleftrightarrow} \hskip 5pt \bs{y} = (0, y_2, \ldots, y_d, 0)^T
  \end{align}
  An informal continuity argument then suggests that membership in $C_j$, i.e., ``large'' values of $\widetilde{\alpha}_j(\bs{y})$, requires ``small'' values of $y_1$. We make this precise below. To derive a formula for $\widetilde{\alpha}_j(\bs{y})$, we compute expressions for the normal vectors $\bs{n}(\widetilde{\Delta})$ and $\bs{n}(\widetilde{\Delta}_j(\bs{y}))$. We define $\psi$ as the angle between $\widetilde{\Delta}$ and $\bs{z}(\M_2)$, i.e., 
  \begin{align*}
    \psi = W^2(\widetilde{\Delta}, \bs{z}(\M_2)) = \angle(\widetilde{\Delta}, \bs{z}(\M_2)) = \angle(\Delta, \M_2) \in \left[\frac{\Theta}{4}, \frac{\pi}{2}\right],
  \end{align*}
  where $\psi \geq \Theta/4$ because $\Delta \in S^{\mathrm{str}}$. Then $\bs{n}(\widetilde{\Delta})$ can be determined by noting that it's a unit vector, that $\bs{e}_2, \ldots \bs{e}_{d}$ are all vectors in the span of $\widetilde{\Delta}$, and also that $\left|\bs{n}(\widetilde{\Delta}) \cdot \bs{n}(\bs{z}(\M_2))\right| = \left|\bs{n}(\widetilde{\Delta}) \cdot \bs{e}_{d+1}\right| = \cos \psi$:
  \begin{align*}
    \bs{n}(\widetilde{\Delta}) \cdot \bs{e}_\ell = 0, \hskip 5pt \ell \in \{2, \ldots, d\} \hskip 10pt \Longrightarrow
    \hskip 10pt \bs{n}(\widetilde{\Delta}) = \sigma_1 \bs{e}_1 \sin \psi + \sigma_2 \bs{e}_{d+1} \cos \psi,
  \end{align*}
  where $\sigma_1, \sigma_2 \in \{-1,1\}$ are some fixed signs (determined uniquely by any particular choice of $\bs{n}(\widetilde{\Delta})$). Let $\bs{w}_1, \ldots, \bs{w}_d$ be the set of $d$ vertices of $\widetilde{\Delta}_j(\bs{y})$ that are shared by $\widetilde{\Delta}$. I.e., they are any identification of the set of $d$ vectors $\left\{\bs{z}_\ell \right\}_{\ell \in [d+1]\backslash \{j\}}$. The remaining vertex is $\bs{y}$. Then a normal vector to $\widetilde{\Delta}_j(\bs{y})$ is given by,
  \begin{align*}
    \bs{n}(\widetilde{\Delta}_j(\bs{y})) &= \frac{\bs{m}}{|\bs{m}|}, &
    \bs{m} &= \det\left(\begin{array}{ccc}
      \bs{e}_1 & \cdots & \bs{e}_{d+1} \\
      \horzbar & (\bs{y} - \bs{w}_1)^T   & \horzbar \\
      \horzbar & (\bs{w}_2 - \bs{w}_1)^T & \horzbar \\
               & \vdots                  &          \\
      \horzbar & (\bs{w}_d - \bs{w}_1)^T & \horzbar 
    \end{array}\right)
  \end{align*}
  We make the observations that $\bs{m}$ is a multilinear function of the entries in the second row, and hence the entries of $\bs{m}$ are affine functions of $y_1$. Then since $\bs{n}(\widetilde{\Delta})$ is independent of $\bs{y}$, we have,
  \begin{align*}
    \widetilde{\alpha}_j(\bs{y}) = \bs{n}(\widetilde{\Delta}_j(\bs{y})) \cdot \bs{n}(\widetilde{\Delta}) &= \frac{\bs{m}}{|\bs{m}|} \cdot \bs{n}(\widetilde{\Delta}) = \frac{A + B y_1}{\sqrt{A_0^2 + \sum_{\ell \in [d]} (B_\ell y_1 + C_\ell)^2}}, 
  \end{align*}
  %\alcomment{I got a little confused on how the equation for $\bs{m}$ was obtained and how it relates to the constants  $A, B, A_0, B_\ell, C_\ell$}
  where $A, B, A_0, B_\ell, C_\ell$ all depend \textit{smoothly} (are infinitely differentiable) with respect to $\psi, y_2, \ldots, y_d$ because the entries of $\bs{m}$ are smooth functions of these parameters. % The important fact we will exercise is that $\alpha_j(\bs{y})$ is a smooth function of $\omega_1, \ldots, \omega_d, \psi$. (\an{Need to include argument about why $|\bs{m}|$ is bounded away from 0.})
  Note that by the Cauchy-Schwarz inequality,
  \begin{align}\label{eq:cs-alpha}
    |\widetilde{\alpha}_j(\bs{y})| \leq 1,
  \end{align}
  and recall from \eqref{eq:ay-is-1} that $|\widetilde{\alpha}_j(\bs{y})| = 1$ iff $y_1 = 0$. When $y_1 = 0$, let $\widetilde{\alpha}_j(0, y_2, \ldots, y_d, 0) = \sigma \in \{+1,-1\}$. Note that it cannot switch signs as a function of $\psi, y_2, \ldots, y_d$ due to continuity in those variables, and hence $\sigma$ is a constant in those variables. 
  To explicitly separate dependence of $\widetilde{\alpha}_j$ on $y_1$ versus other quantities, we'll write,
  \begin{align*}
    \widetilde{\alpha}_j(\bs{y}) &= \alpha_j(y_1, \bs{h}), & \bs{h} &= (y_2, , \ldots, y_d, \bs{w}_1, \ldots, \bs{w}_{d}) \in H(y_1) \subset \R^{d-1 + d(d+1)},
  \end{align*}
%  \alcomment{It might be better to change $\bs{h}$ to a different letter since we are already using $q$ for the quality.}
  The set $H(y_1)$ in $\R^{d-1 + d(d+1)}$ corresponds to the set of values that conform to our assumptions; as suggested by the notation, this set depends on $y_1$. We are concerned only with $y_1 = 0$. We claim the set $H(0)$ is compact, for which it suffices to show it's closed and bounded. The bound \eqref{eq:yell-bound} on the entries $y_2, \ldots, y_d$ of $\bs{y}$, implies that the first $d-1$ components of $\bs{h}$ are bounded. The values of each element of $\bs{w}_j$, $j \in [d]$ is bounded because $\bs{w}_j$ is a bounded vector:
  \begin{align*}
    \widetilde{S} \ni \Delta = [\bs{0}, \bs{w}_1, \ldots, \bs{w}_d] \hskip 10pt \Longrightarrow \hskip 10pt \|\bs{w}_j\| \leq 1/q,
  \end{align*}
  Hence, $H(0)$ is a bounded set. An arbitrary $\bs{h}$ has membership in $H(0)$ iff with $y_1 = 0$,
  \begin{align*}
    f_1(\bs{h}) &= \mathrm{vol}_d(\widetilde{\Delta}) \in \left[r_0 V_0, \frac{V_0}{q^d} \right], && \textrm{(Volume distortion of $\widetilde{\Delta}$)} \\
    f_2(\bs{h}) &= \mathrm{vol}_d(\widetilde{\Delta}_j(\bs{y})) \in \left[r_0 V_0, \frac{V_0}{q^d} \right], && \textrm{(Volume distortion of $\widetilde{\Delta}_j(\bs{y})$)} \\
    f_3(\bs{h}) &= \min_{\substack{\bs{a}, \bs{b} \in \mathrm{ex}(\widetilde{\Delta})\\ \bs{a} \neq \bs{b}}} \|\bs{a} - \bs{b}\| \in \left[1, \frac{1}{q}\right], && \textrm{(Edgelength distortion of $\widetilde{\Delta}$)} \\
    f_4(\bs{h}) &= \min_{\substack{\bs{a}, \bs{b} \in \mathrm{ex}(\widetilde{\Delta}_j(\bs{y}))\\ \bs{a}\neq \bs{b}}} \|\bs{a} - \bs{b}\| \in \left[1, \frac{1}{q}\right], && \textrm{(Edgelength distortion of $\widetilde{\Delta}_j(\bs{y})$)} 
  \end{align*}
  Because (i) the volume of any simplex is a continuous function of its extreme points, and (ii) the norm $\|\cdot\|$ is a continuous function of its argument, and (iii) the function $\min(\cdot,\cdot,\ldots,\cdot): \R^{d(d+1)/2} \rightarrow \R$ is a continuous function of its inputs, then the map $\bs{h} \mapsto \bs{f}(\bs{h})$ is a continuous function, and,
  \begin{align*}
    H(0) &= \bs{f}^{-1}(C), & C &= \left[r_0 V_0, \frac{V_0}{q^d} \right]^2 \times \left[1, \frac{1}{q}\right]^2 \subset \R^4
  \end{align*}
  where again we take $y_1 = 0$. Since $C$ is a closed set and $\bs{f}$ is continuous, then $H$ is closed. Therefore, we've shown that $H(0)$ is bounded and continuous, hence compact.
%  \begin{align*}
%    H \coloneqq \left[-\frac{2}{q},\frac{2}{q}\right]^{d-1} \times \subset \left[-\frac{2}{q},\frac{2}{q}\right]^{d(d+1)},
%  \end{align*}
%  i.e., any $\bs{h} \in H$ is bounded.
%  which is a compact set in $\R^{d}$.  %We reiterate that $\alpha_j$ is a smooth function of $\omega_1$ and $\bs{h}$.
  For a fixed but arbitrary $\bs{h} \in H(0)$, we use the shorthand notation $\widetilde{\alpha}'_j(y_1,\bs{h}) = \frac{\partial}{\partial y_1} \widetilde{\alpha}_j(y_1,\bs{h})$. Then:
  \begin{itemize}
    \item $\sigma \widetilde{\alpha}_j\left(0, \bs{h}\right) = \sigma^2 = 1$, due to \eqref{eq:ay-is-1}.
    \item $\widetilde{\alpha}'_j\left(0, \bs{h}\right) = 0$. This is true because \eqref{eq:cs-alpha} implies that $\widetilde{\alpha}_j(\cdot,\bs{h})$ reaches its global maximum at $y_1 = 0$, and since by \eqref{eq:ay-is-1} $|\widetilde{\alpha}_j(y_1,\bs{h})| < 1$ for $y_1 \neq 0$, then $y_1= 0$ is a strict global extremum of the smooth function $\widetilde{\alpha}_j(y_1,\bs{h})$, so necessarily $y_1 = 0$ is a stationary point.
    \item $\sigma \widetilde{\alpha}''_j\left(0, \bs{h}\right) < 0$. If this were false, then the Mean Value Theorem would imply that $\sigma \widetilde{\alpha}_j(\cdot,\bs{h})$ would be locally non-decreasing around $0$, which violates the global extremum property of $y_1 = 0$.
  \end{itemize}
  The observations above imply that $\widetilde{\alpha}_j$ is $y_1$-stationary and strictly concave at $y_1 = 0$ for fixed $\bs{h}$. Our ultimate goal is to show $\bs{h}$-uniform and $\Delta$-uniform \textit{strong} concavity. I.e., we seek to show that $\sigma \widetilde{\alpha}''_j \leq m < 0$, where $m$ is independent both of $\bs{h} \in H(0)$ and also of the particular choice of simplex $\Delta \in S$.

  We define, 
  \begin{align*}
    K = \min_{\bs{h} \in H(0)} - \sigma \widetilde{\alpha}''_j\left(0, \bs{h}\right) > 0,
  \end{align*}
  where the inequality holds because $\widetilde{\alpha}''(\cdot,\bs{h})$ is a smooth (in particular continuous) function that is strictly negative on the compact set $H(0)$. Note that because we minimize over all elements $\bs{h} \in H(0)$, i.e., over all valid configurations of points in $\M_1$ and $\M_2$, then $K$ is \textit{independent} of the particular simplex $\Delta$ we start with, and depends only on manifolds $\M_1$ and $\M_2$, their dimension and intersection angle $d, \Theta$, and the geometric parameters $e, q, r_0$ defining our simplices. Given this value of $K$, we now consider a fixed $\bs{h}$ and define a quantity $\eta(\bs{h})$ as,
  \begin{align*}
    \eta(\bs{h}) = \max \left\{ \eta > 0 \;\;\big|\;\; \sigma \widetilde{\alpha}''_j(y_1,\bs{h}) \leq -\frac{K}{2} \textrm{ for all } |y_1| \leq \eta \right\} > 0.   
  \end{align*}
  We note the continuity of $\widetilde{\alpha}''(y_1,\bs{h})$ in $y_1$ and $\bs{h}$ imply strict positivity $\eta(\bs{h}) > 0$ and continuity of $\eta(\bs{h})$ in $\bs{h}$, respectively. We will next need the smallest value of $\eta$:
  \begin{align*}
    \eta_0 \coloneqq \min_{\bs{h} \in H(0)} \eta(\bs{h}) > 0,
  \end{align*}
  where the strict inequality is again appeals to the fact that $\eta(\cdot)$ is strictly negative over the compact set $H(0)$. In addition, by minimizing over all $\bs{h} \in H(0)$, then $\eta_0$ is also \textit{independent} of the particular simplex $\Delta$, and depends only on $\M_1, \M_2, d, \Theta, e, q, r_0$. By this definition, $\left(- \eta_0, \eta_0\right)$ is an interval where $\sigma \widetilde{\alpha}''(\cdot,\bs{h})$ is no larger than $-K/2$ for all $\bs{h} \in H$. Hence, applying Taylor's Theorem (with the Mean Value remainder) to the function $y_1 \mapsto \widetilde{\alpha}_j(y_1,\bs{h})$ yields the $\bs{h}$-uniform bound,
  \begin{align*}
    \sigma \widetilde{\alpha}_j(y_1, \bs{h}) &\leq \sigma \widetilde{\alpha}_j\left(0, \bs{h}\right) + \sigma \widetilde{\alpha}_j'\left(0, \bs{h}\right) y_1 + \frac{-K/2}{2} y_1^2 \\
    &= 1 - \frac{K}{4} y_1^2,
  \end{align*}
  uniformly in $\bs{h}$ whenever $|y_1| \leq \eta_0$. Finally, note that if $\bs{y} \in C_j$, then 
  \begin{align*}
    \frac{\epsilon^2}{2} \geq 1 - |\widetilde{\alpha}_j(y_1;\bs{h})| = 1 - \sigma \widetilde{\alpha}_j(y_1;\bs{h}) \geq \frac{K}{4} y_1^2,
  \end{align*}
  whenever $y_1 \leq \eta_0$. Hence, 
  \begin{align}\label{eq:y1-bound}
    \bs{y} \in A_j \textrm{ and } \epsilon \leq \eta_0 \sqrt{\frac{K}{2}} \hskip 10pt \Longrightarrow \hskip 10pt |y_1| \leq \epsilon \sqrt{\frac{2}{K}}.
  \end{align}
  Then combining \eqref{eq:yell-bound} with \eqref{eq:y1-bound}, we conclude that if $\epsilon < \eta_0 \sqrt{\frac{K}{2}}$, then
  \begin{align*}
    \mathrm{vol}_d(B_j) = e^d \mathrm{vol}_d(C_j) \leq e^d 2^d \epsilon \sqrt{\frac{2}{K}} \left(\frac{2}{q} \right)^{d-1} \leq \epsilon e^d \frac{2^{2d-1}}{q^{d-1}\sqrt{K}},
  \end{align*}
  and finally exercising subadditivity of volume,
  \begin{align*}
    \mathrm{vol}_d(A) \leq \sum_{j=0}^d \mathrm{vol}_d(A_j) \leq \sum_{j=0}^d \mathrm{vol}_d(B_j) \leq \epsilon e^d \left[ \frac{(d+1) 2^{2d-1}}{q^{d-1} \sqrt{K}}\right] \eqqcolon C \epsilon e^d.
  \end{align*}
  This bound requires $\epsilon \leq \epsilon_0 \coloneqq \eta_0 \sqrt{\frac{K}{2}}$, but both $\eta_0$ and $K$ are independent of $\Delta$, and depend only on $\M_1, \M_2, d, \Theta, e, q, r_0$.
\end{proof}

The argument presented in the proof of Lemma \ref{lemma:single-link} contains the essential idea for the desired estimate in Proposition \ref{prop:single_link_prob}. We expect the $\tau > 0$ to have essentially the same character but to be considerably more technical because noise in the nuisance $D-d$ dimensions can lift points $\bs{y}$ off the manifold $\M_2$. Therefore, in lieu of formally generalizing this to the noisy $\tau > 0$ case, we provide an informal argument for the noisy setup: when $\tau > 0$, we would be interested in bounding the volume of a tubular version of the set $A$ in \eqref{eq:A-def}:
\begin{align}\label{eq:Atau-def}
  A_\tau \coloneqq \left\{ \bs{y} \in T_\tau(\M_2) \;\; \big|\;\; \Delta_j(\bs{y}) \in S \textrm{ and } W^2_S(\Delta, \Delta_j(\bs{y})) \leq \epsilon \textrm{ for any } j=0, \ldots, d\right\}.
\end{align}
Since this region is in $D$-dimensional space, we would bound $\mathrm{vol}_D(A_\tau)$. As in the proof of Lemma \ref{lemma:single-link}, we would use membership in the set $A_\tau$ to derive bounds on the coordinates of a candidate point $\bs{y} \in \R^D$. The treatement of the first $d$ coordinates would be similar, corresponding to points on $\M_2$. The treatment of the remaining $D-d$ coordinates would be subject to perturbations bounded by $\tau$. The bounds for these remaining coordinates would be dictated by ensuring $|\alpha_j(\bs{y},q)| \geq 1 - \epsilon^2/2$. Because $\alpha_j(\bs{y},q)$ is continuous in the coordinates of $\bs{y}$, a rough Taylor series argument suggests that,
\begin{align*}
  |y_j| \leq \tau \textrm{ for $j \in [d+1,D]$ } \hskip 5pt \sim \hskip 5pt |\alpha_j(\bs{y},q)| \gtrsim 1 - \epsilon^2,
\end{align*}
for $\tau \lesssim \epsilon$. Then the remaining $D-d$ dimensions correspond to the restriction $|y_j| \leq \tau$, and so there is a constant $C$ such that,
\begin{align*}
  \mathrm{vol}_D(A_\tau) \sim C \epsilon e^d \tau^{D-d},
\end{align*}
which is the bound stated in Proposition \ref{prop:single_link_prob}.

%\bibliography{refs}

\end{document}